\theoremstyle{plain}
\newtheorem{theorem}{\textbf{Theorem}}[section]
\theoremstyle{plain}
\newtheorem{definition}{\textbf{Definition}}[section]
\theoremstyle{plain}
\newtheorem{lemma}{\textbf{Lemma}}[section]
\newcommand{\respone}[1]{\todo[linecolor=orange,backgroundcolor=orange!25,bordercolor=orange]{#1}}
\newcommand{\resptwo}[1]{\todo[linecolor=red,backgroundcolor=red!25,bordercolor=red]{#1}} 
\newcommand{\respthree}[1]{\todo[linecolor=green,backgroundcolor=green!25,bordercolor=green]{#1}}
\newcommand{\resped}[1]{\todo[linecolor=blue,backgroundcolor=blue!25,bordercolor=blue]{#1}}
	\renewcommand\respone[1]{}
	\renewcommand\resptwo[1]{}
	\renewcommand\respthree[1]{}
	\renewcommand\resped[1]{}
\renewcommand{\st}[1]{}
\newcommand{\rev}[1]{{\color{blue}#1}} 
\newcommand{\com}[1]{\textbf{\color{red}(COMMENT: #1)}} 
\newcommand{\mcom}[1]{\textbf{\color{purple}(Response: #1)}} 
\newcommand{\edt}[1]{\textbf{\color{magenta}#1}} 
\newcommand{\clar}[1]{\textbf{\color{green}(NEED CLARIFICATION: #1)}}
\newcommand{\rev}[1]{#1}
\newcommand{\com}[1]{}
\newcommand{\mcom}[1]{}
\newcommand{\edt}[1]{}
\newcommand{\clar}[1]{}
\begin{document}

\title{\LARGE \bf
	Recycled ADMM: Improving the Privacy and Accuracy of Distributed Algorithms}

\author{Xueru Zhang, Mohammad Mahdi Khalili, Mingyan Liu
	\thanks{{This work is supported by the NSF under grants CNS-1422211, CNS-1646019, and CNS-1739517. An earlier version of this paper appeared in the 2018 Allerton Conference on Communication, Control and Computing\cite{zhang2018recycled}.}}
	\thanks{X. Zhang, M. Khalili and M. Liu are with the Dept. of Electrical Engineering and Computer Science, University of Michigan, Ann Arbor, MI 48105, \{xueru, khalili, mingyan\}@umich.edu}}

\maketitle
\begin{abstract}
Alternating direction method of multiplier (ADMM) is a powerful method to solve decentralized convex optimization problems. In distributed settings, each node performs computation with its local data and the local results are exchanged among neighboring nodes in an iterative fashion. During this iterative process the leakage of data privacy arises and can accumulate significantly over many iterations, making it difficult to balance the privacy-accuracy tradeoff. We propose Recycled ADMM (R-ADMM), where a linear approximation is applied to every even iteration, its solution directly calculated using only results from the previous, odd iteration. It turns out that under such a scheme, half of the updates incur no privacy loss and require much less computation compared to the conventional ADMM. Moreover, R-ADMM can be further modified (MR-ADMM) such that each node independently determines its own penalty parameter over iterations. We obtain a sufficient condition for the convergence of both algorithms and provide the privacy analysis based on objective perturbation. It can be shown that the privacy-accuracy tradeoff can be improved significantly compared with conventional ADMM.

\end{abstract}

\begin{IEEEkeywords}
differential privacy, distributed learning, ADMM
\end{IEEEkeywords}

%
\IEEEpeerreviewmaketitle

\section{Introduction}\label{sec:intro}
\IEEEPARstart{D}{istributed} optimization and learning are crucial for many settings where the data is possessed by multiple parties or when the quantity of data prohibits processing at a central location. 
Many problems can be formulated as a convex optimization of the following form: $\min_\textbf{x}\sum_{i=1}^{N}f_i(\textbf{x})$.
In a distributed setting, each entity/node $i$ has its own local objective $f_i$, $N$ entities/nodes collaboratively work to solve this objective through an interactive process of local computation and message passing. At the end all local results should ideally converge to the global optimum.

The information exchanged over the iterative process gives rise to privacy concerns if the local training data contains sensitive information such as medical or financial records, web search history, and so on \cite{extra2,Khalili,extra1,magazine}. It is therefore highly desirable to ensure such iterative processes are privacy-preserving. We adopt the $\varepsilon$-differential privacy to measure such privacy guarantee; it is generally achieved by perturbing the algorithm such that the probability distribution of its output is relatively insensitive to any change to a single record in the input \cite{Dwork2006}.

Existing approaches to decentralizing the above problem primarily consist of subgradient-based algorithms \cite{nedic2009,lobel2011,Gade} and ADMM-based algorithms \cite{xu2017adaptive,xu2016,wei2012,ling2014,shi2014,zhang2014,ling2016}. It has been shown that ADMM-based algorithms can converge at the rate of $O(\frac{1}{k})$ while subgradient-based algorithms typically converge at the rate of $O(\frac{1}{\sqrt{k}})$, where $k$ is the number of iterations \cite{wei2012}. In this study, we will solely focus on ADMM-based algorithms. While a number of differentially private (sub)gradient-based distributed algorithms have been proposed \cite{hale2015,huang2015,han2017,bellet2017}, the same is much harder for ADMM-based algorithms due to its computational complexity stemming from the fact that each node is required to solve an optimization problem in each iteration. Differentially private ADMM  has been studied in   \cite{zhang2017,xueru,DP_ADMM}. In particular, \rev{Zhang and Zhu \cite{zhang2017} }proposes the dual/primal variable perturbation method to inspect the privacy loss of one node in every single iteration; this, however, is not sufficient for guaranteeing privacy as an adversary can potentially use the revealed results from all iterations to perform inference. \rev{Zhang et al. \cite{xueru} }addresses this issue by inspecting the total privacy loss over the entire process and the entire network; A penalty perturbation method is proposed which may improve the privacy-accuracy tradeoff significantly. \rev{Huang et al. \cite{DP_ADMM} }\respthree{R3.5}  applies the first-order approximation to the augmented Lagrangian in all iterations; however, this method requires a central server to average all updated primal variables over the network in each iteration. 

Since privacy leakage accumulates over iterations, the total privacy loss over the entire process can be substantial, making it difficult to balance the privacy-accuracy tradeoff. In our prior work \cite{xueru} we introduced a penalty perturbation method to achieve a better tradeoff.  While the method shows significant improvement with the right choice of penalty parameters, this improvement is heavily dependent on such choices and is not guaranteed.  It is therefore important to seek guaranteed improvement in the privacy-accuracy tradeoff for ADMM-based algorithms, which is the subject of the present paper.\resptwo{R2.1}

In this study, we present Recycled ADMM (R-ADMM),
a modified version of ADMM where the privacy leakage only happens during half of the updates (Algorithm \ref{A0}). Specifically, we adopt a linearized approximated optimization in every even iteration, whose solution is calculated directly using results from the previous, odd iteration; this solution is also used for updating the primal variable. These approximated updates incur no privacy loss and require much less computation. {Compared with conventional ADMM, R-ADMM requires much less perturbation to provide the same level of privacy protection, thereby improving the privacy-accuracy trade-off.}\respone{R1.1}

We then further generalize R-ADMM and present a modified R-ADMM, referred to as MR-ADMM, which employs ideas proposed in \cite{xueru} and can accommodate non-constant penalty parameters which are also entity's own private information (Algorithm \ref{A1}). {Since the penalty parameter controls the updating step size, the algorithm can be more robust by decreasing the step size. It allows the algorithm to tolerate more noise, i.e., be more private, without jeopardizing too much accuracy.  As a result the privacy-accuracy trade-off is further improved.}\respone{R1.1}
 
 Both of these algorithms are essentially modifications of the original distributed ADMM algorithm; privacy in these algorithms are provided by introducing noise.  Accordingly, the private versions of these algorithms are developed using the objective perturbation method \cite{chaudhuri2011} (Algorithm \ref{A2}). We establish a sufficient condition for the convergence of both algorithms and characterize their corresponding total privacy loss for private algorithms. Both analysis and experiments on real-world datasets show that as compared with conventional ADMM algorithm, R-ADMM can improve the privacy-accuracy tradeoff significantly with much less computation. Moreover, by controlling the penalty parameters in MR-ADMM, this privacy-accuracy tradeoff is further improved.     

The remainder of the paper is organized as follows. We present problem formulation and the definition of differential privacy and ADMM in Section \ref{sec:pre}. Three algorithms are introduced in Section \ref{sec:radmm} including R-ADMM, MR-ADMM and the private MR-ADMM. The convergence analysis of non-private MR-ADMM, privacy analysis and generalization performance analysis of (non)-private MR-ADMM are presented in Section \ref{sec:mradmm}, \ref{sec:pradmm} and \ref{sec:sample}, respectively. Discussion is given in Section \ref{sec:discussion}. Numerical results are illustrated in Section \ref{sec:numerical} and Section \ref{sec:conclusion} concludes the paper. 

\section{Preliminaries }\label{sec:pre}
\subsection{Problem Formulation}
Consider a connected network\footnote{A connected network is one in which every node is reachable (via a path) from every other node.} given by an undirected graph $G(\mathscr{N},\mathscr{E})$, which consists of a set of nodes $\mathscr{N} = \{1,2,\cdots,N\}$ and a set of edges $\mathscr{E} = \{1,2,\cdots,E\}$. Two nodes can exchange information if and only if they are connected by an edge. Let $\mathscr{V}_i$ denote node $i$'s set of neighbors, excluding itself. Let $D_i$ be node $i$'s dataset. 

\rev{Consider an optimization problem over this network of $N$ nodes, where the overall objective function can be decomposed into $N$ sub-objective functions and each depends on a node's local dataset, i.e., 
\begin{equation}\label{eq:prelimi_1}
\min_{f_c}\text{Obj}(f_{c},D_{all}) = \sum_{i=1}^{N}O(f_c,D_i)
\end{equation}
The goal is to find a (centralized) optimal solution $f_c\in \mathbb{R}^d$ over the union of all local datasets $D_{all} = \cup_{i \in \mathscr{N}} D_i$ in a distributed manner using ADMM, while providing privacy guarantee for each data sample.}\resptwo{R2.2}\respthree{R3.3}

\subsection{Differential Privacy \cite{Dwork2006}}
A randomized algorithm $ \mathscr{A}(\cdot)$ taking a dataset as input satisfies $\varepsilon$-differential privacy if for any two datasets $D$, $\hat{D}$ differing in at most one data point, and for any set of possible outputs $S \subseteq \text{range}(\mathscr{A})$, $\text{Pr}(\mathscr{A}(D) \in S)\leq e^{\varepsilon}\text{Pr}(\mathscr{A}(\hat{D}) \in S)$ holds.
We call two datasets differing in at most one data point as neighboring datasets. \rev{$\varepsilon\in[0,\infty)$ can be used to quantify the privacy loss/guarantee}. The above definition suggests that for a sufficiently small $\varepsilon$, an adversary will observe almost the same output regardless of the presence (or value change) of any one individual in the dataset; this is what provides privacy protection for that individual, \rev{the smaller $\varepsilon$, the smaller privacy loss, the stronger privacy guarantee.} 

\rev{Differential privacy is a worse-case measure; i.e., the bound
is over all possible random outputs and all possible inputs. It is
a strong guarantee, as it can protect against attackers with any
side information. Moreover, it is immune to post-processing \cite{dwork2014algorithmic};
i.e., given only the differentially private output without additional
information about the true data, it is impossible for
attackers to make it less differentially private.  }\respthree{R3.1}\respthree{R3.4}

\rev{For an optimization problem over a dataset, there are many approaches to randomizing the output to preserve differential privacy and some of the most commonly used are as follows. (1) Output perturbation: solve the optimization problem first and then add zero-mean noise (e.g., Laplace, Gaussian) to the optimal solution. (2) Objective perturbation: add a noisy term to the objective function first and then solve the perturbed optimization problem. Because of this randomness, the accuracy of the output also decreases accordingly. The more perturbation, the output will be less accurate but it also provides the stronger privacy for individuals. Therefore, there is a privacy-accuracy tradeoff, and an important issue is how to improve this tradeoff so that the output can be more accurate under the same privacy guarantee. }\respthree{R3.1}
\subsection{Conventional ADMM}
To decentralize \eqref{eq:prelimi_1}, let $f_i$ be the local classifier of each node $i$. To achieve consensus, i.e., $f_1 = f_2 = \cdots = f_N$, a set of auxiliary variables $\{w_{ij} | i \in \mathscr{N}, j \in \mathscr{V}_i\}$ are introduced for every pair of connected nodes.  As a result, \eqref{eq:prelimi_1} is reformulated equivalently as: 
 \begin{equation}\label{eq:prelimi_2}
\begin{aligned}
& \min_{\{f_i\},\{w_{ij}\}} 
& &\widetilde{\text{Obj}}(\{f_i\}_{i=1}^N,D_{all})  = \sum_{i=1}^{N}O(f_i,D_i) \\
&\text{   s.t.} 
& & f_i = w_{ij}, w_{ij} = f_j, \ \ \ i \in \mathscr{N}, j \in \mathscr{V}_i
\end{aligned}
\end{equation}

\rev{Let $\{f_i\}$ and $\{w_{ij}\}$ be the shorthand for $\{f_i\}_{i\in \mathscr{N}}$ and $\{w_{ij}\}_{i\in \mathscr{N}, j\in \mathscr{V}_i}$, respectively. }\respthree{R3.6}  
Let $\{w_{ij},\lambda_{ij}^k\}$ be the shorthand for $\{w_{ij},\lambda_{ij}^k\}_{i \in \mathscr{N},j \in \mathscr{V}_i, k \in \{a,b\}}$, where $\lambda_{ij}^a$, $\lambda_{ij}^b$ are dual variables corresponding to equality constraints $f_i = w_{ij}$ and $w_{ij}=f_j$ respectively. The objective in \eqref{eq:prelimi_2} can be solved using ADMM with the augmented Lagrangian: 
\begin{eqnarray}\label{eq:prelimi_3}
L_\eta (\{f_i\},\{w_{ij},\lambda_{ij}^k\}) = \sum_{i=1}^NO(f_i,D_i)\nonumber\\ +  \sum_{i=1}^N\sum_{j \in \mathscr{V}_i}(\lambda_{ij}^a)^T(f_i-w_{ij})+\sum_{i=1}^N\sum_{j \in \mathscr{V}_i}(\lambda_{ij}^b)^T(w_{ij}-f_j)\\
+ \sum_{i=1}^N\sum_{j \in \mathscr{V}_i}\dfrac{\eta}{2} (||f_i-w_{ij}||_2^2+||w_{ij}-f_j||_2^2)~.\nonumber
\end{eqnarray}
\rev{where $\eta$ is called the penalty parameter. }\respthree{3.7}In the $(t+1)$-th iteration, the ADMM updates consist of the following:
\begin{eqnarray}
f_i(t+1) &=& \underset{f_i}{\text{argmin}}\ L_\eta (\{f_i\},\{w_{ij}(t),\lambda_{ij}^k(t)\});\label{eq:prelimi_4}\\
w_{ij}(t+1) &=& \underset{w_{ij}}{\text{argmin}}\ L_\eta (\{f_i(t+1)\},\{w_{ij},\lambda_{ij}^k(t)\})\label{eq:prelimi_5};\\
\lambda_{ij}^a(t+1) &=& \lambda_{ij}^a(t) + \eta(f_i(t+1)-w_{ij}(t+1));\label{eq:prelimi_6}\\
\lambda_{ij}^b(t+1)& =& \lambda_{ij}^b(t) + \eta(w_{ij}(t+1)-f_j(t+1)).\label{eq:prelimi_7}
\end{eqnarray}
Using Lemma 3 in \cite{forero2010}, {if dual variables $\lambda_{ij}^a(t)$ and $\lambda_{ij}^b(t)$ are initialized to zero for all node pairs $(i,j)$, then $\lambda_{ij}^a(t) = \lambda_{ij}^b(t)$ and $\lambda_{ij}^k(t) = -\lambda_{ji}^k(t)$ will hold for all iterations with $k \in \{a,b\}, i \in \mathscr{N}, j \in \mathscr{V}_i$.} Let $\lambda_{i}(t) = \sum_{j \in \mathscr{V}_i}\lambda_{ij}^a(t) = \sum_{j \in \mathscr{V}_i}\lambda_{ij}^b(t)$, then the ADMM iterations \eqref{eq:prelimi_4}-\eqref{eq:prelimi_7} can be simplified as (Refer to Appendix A in \cite{xueru} for proof):
\begin{eqnarray}
f_i(t+1) &=& \underset{f_i}{\text{argmin}} \{ O(f_i,D_i) + 2\lambda_i(t)^Tf_i \nonumber \\
&&+  \eta \sum_{j \in \mathscr{V}_i}||\dfrac{1}{2}(f_i(t)+f_j(t))-f_i||_2^2~ \}~; \label{eq:prelimi_8} \\ 
\lambda_{i}(t+1) &=& \lambda_{i}(t) +  \dfrac{\eta}{2}\sum_{j \in \mathscr{V}_i}(f_i(t+1)-f_j(t+1))~. \label{eq:prelimi_9}
\end{eqnarray}
 \subsection{Private ADMM \cite{zhang2017} \& Private M-ADMM \cite{xueru}}
 In private ADMM \cite{zhang2017}, noise is added either to the updated primal variable before broadcasting to its neighbors (primal variable perturbation), or to the dual variable before updating its primal variable using \eqref{eq:prelimi_8} (dual variable perturbation). The privacy property is only evaluated for a single node and a single iteration, but neither method can effectively balance the privacy-accuracy tradeoff if the total privacy loss is considered. In our prior work \cite{xueru}, the total privacy loss of the whole network over the entire iterative process is considered. A modified ADMM (M-ADMM) was proposed to improve the privacy-accuracy tradeoff. Specifically, it explores \rev{the use of the penalty parameter $\eta$ }\respthree{R3.7} 
 in stabilizing the algorithm. M-ADMM allows each node to independently determine its penalty parameter and \rev{randomizes the objective function in primal update \eqref{eq:prelimi_8} by adding a linear noise term correlated to the penalty parameter }\respthree{R3.8}   
 while at the same time increasing the penalty over time.  By doing so it is shown that the privacy and accuracy can be improved simultaneously.

\section{Algorithms}\label{sec:radmm} 
\subsection{Recycled ADMM (R-ADMM)} 

\subsubsection{Main idea}
Fundamentally, the accumulation of privacy loss over iterations stems from the fact that the \rev{individual data $D_{all}$} is used in every primal update. If the updates can be made without \rev{directly using this original data}, but only from computational results that already exist, then the privacy loss originating from these updates will be zero, while at the same time the computational cost may be reduced significantly. \rev{This idea of ``recycling information" is also supported by the immunity to post-processing that differential privacy possesses \cite{dwork2014algorithmic}, i.e., any computation over an output that is already differentially private cannot incur additional privacy loss. Toward this end, R-ADMM 
modifies the ADMM algorithm such that we repeatedly use earlier computational results to make updates. }\respthree{R3.4}

\begin{algorithm}\label{A0}
	\textbf{Input: }{$\{D_i\}_{i=1}^N$}
	
	\textbf{Initialize: }$\forall i$, generate $f_i(0)$ randomly, $\lambda_i(0) = \textbf{0}_{d \times 1}$ 
	
	\For{$k=1$ \KwTo$K$}{
		\For{$i=1$ \KwTo$\mathscr{N}$}{
			Update primal variable $f_i(2k-1)$ via \eqref{eq:r-admm3};
			
			Calculate the gradient $\nabla O(f_i(2k-1),D_i)$;
			
			Broadcast $f_i(2k-1)$ to all neighbors $j \in \mathscr{V}_i$.
		}	
		
		\For{$i=1$ \KwTo$\mathscr{N}$}{
			Calculate  $\eta\sum_{j \in \mathscr{V}_i}(f_i(2k-1)-f_j(2k-1))$;
			
			Update dual variable $\lambda_i(2k-1)$ via \eqref{eq:r-admm4}.
		}	
		\For{$i=1$ \KwTo$\mathscr{N}$}{
			Use the stored $\nabla O(f_i(2k-1),D_i)$ and $\eta\sum_{j \in \mathscr{V}_i}(f_i(2k-1)-f_j(2k-1))$ to update primal variable $f_i(2k)$ via \eqref{eq:r-admm1};
			
			Keep the dual variable $\lambda_i(2k)=\lambda_i(2k-1)$;
			
			Broadcast $f_i(2k)$ to all neighbors $j \in \mathscr{V}_i$.
		}	
	}
	\textbf{Output: }{primal $\{f_i(2K)\}_{i=1}^N$ and dual $\{\lambda_i(2K)\}_{i=1}^N$}	\caption{Recycled-ADMM (R-ADMM)}
\end{algorithm}
\subsubsection{Making information recyclable}
ADMM can outperform gradient-based methods in terms of requiring fewer number of iterations for convergence; this however comes at the price of high computational cost in every iteration. In particular, the primal variable is updated by performing an optimization in each iteration. In \cite{mokhtari2015,ling2014,ling2015dlm}, either a linear or  quadratic approximation of the objective function is used to obtain an inexact solution in each iteration in lieu of solving the original optimization problem. While this clearly lowers the computational cost, the approximate computation is performed using the local, \rev{individual data} in every iteration, which means that privacy loss inevitably accumulates over the iterations. 

We begin by modifying ADMM in such a way that in every even iteration, without using  \rev{data $D_{all}$}, the primal variable is updated solely based on the existing computational results from the previous, odd iteration. Compared with conventional ADMM, these updates incur no privacy loss and less computation.  Since the computational results are repeatedly used, this method is referred to as Recycled ADMM (R-ADMM). 

Specifically, in the $2k$-th (even) iteration, $O(f_i,D_i)$ (Eqn. \eqref{eq:prelimi_8}, primal update optimization) is approximated by $O(f_i,D_i)\approx O(f_i(2k-1),D_i) + \nabla O(f_i(2k-1),D_i)^T(f_i-f_i(2k-1)) + \frac{\gamma}{2}||f_i-f_i(2k-1)||_2^2$ $(\gamma \geq 0)$ and only the primal variables are updated. Using the first-order condition, the updates in the $2k$-th iteration become:
\begin{eqnarray}
f_i(2k)=f_i(2k-1) - \frac{1}{2\eta V_i+\gamma}\{\nabla O(f_i(2k-1),D_i) \nonumber \\ +2\lambda_i(2k-1)+\eta\sum_{j\in \mathscr{V}_i}(f_i(2k-1)-f_j(2k-1))\}
\label{eq:r-admm1}~;\\
\lambda_{i}(2k) = \lambda_{i}(2k-1)~. \label{eq:r-admm2}
\end{eqnarray}
In the $(2k-1)$-th (odd) iteration, the updates are kept the same as \eqref{eq:prelimi_8}\eqref{eq:prelimi_9}:
\begin{eqnarray}
f_i(2k-1) = \underset{f_i}{\text{argmin}} \{ O(f_i,D_i) + 2\lambda_i(2k-2)^Tf_i \nonumber \\
+  \eta \sum_{j \in \mathscr{V}_i}||\dfrac{1}{2}(f_i(2k-2)+f_j(2k-2))-f_i||_2^2~ \}~;  \label{eq:r-admm3}\\ 
\lambda_{i}(2k-1) = \lambda_{i}(2k-2) \nonumber\\+  \dfrac{\eta}{2}\sum_{j \in \mathscr{V}_i}(f_i(2k-1)-f_j(2k-1))~. \label{eq:r-admm4}
\end{eqnarray}
Note that in the $(2k)$-th (even) iteration, we need the gradient $\nabla O(f_i(2k-1),D_i)$ and primal difference $\dfrac{\eta}{2}\sum_{j \in \mathscr{V}_i}(f_i(2k-1)-f_j(2k-1))$ for the updates; these are available directly from the previous, $(2k-1)$-th (odd) iteration,
i.e., this information can be recycled.  In this sense, R-ADMM may be viewed as alternating between conventional ADMM (odd iterations) and a variant of gradient descent (even iterations), where $\frac{1}{2\eta {V}_i + \gamma}$ is the step-size with a slightly modified gradient term. The complete procedure is shown in Algorithm \ref{A0}.

\begin{figure*}[t]
	\normalsize
	\begin{eqnarray}
	f_i(2k-1) &=& \underset{f_i}{\text{argmin}} \{ O(f_i,D_i) + 2\lambda_i(2k-2)^Tf_i 
	+  \eta_i(2k-1) \sum_{j \in \mathscr{V}_i}||\dfrac{1}{2}(f_i(2k-2)+f_j(2k-2))-f_i||_2^2~ \}~;  \label{eq:mr-admm3}\\ 
	\lambda_{i}(2k-1) &=& \lambda_{i}(2k-2)+  \dfrac{\eta_i(2k-1)}{2}\sum_{j \in \mathscr{V}_i}(f_i(2k-1)-f_j(2k-1))~. \label{eq:mr-admm4}\\
	f_i(2k)&=&f_i(2k-1) - \frac{1}{2\eta_i(2k-1) V_i+\gamma}\{\nabla O(f_i(2k-1),D_i) +2\lambda_i(2k-1)\nonumber\\&&+\eta_i(2k-1)\sum_{j\in \mathscr{V}_i}(f_i(2k-1)-f_j(2k-1))\}
	\label{eq:mr-admm1}~;\\
	\lambda_{i}(2k) &=& \lambda_{i}(2k-1)~. \label{eq:mr-admm2}
	\end{eqnarray}
	\hrulefill
\end{figure*}
\begin{algorithm}\label{A1}
	\textbf{Input: }{$\{D_i\}_{i=1}^N$}
	
	\textbf{Initialize: }$\forall i$, generate $f_i(0)$ randomly, $\lambda_i(0) = \textbf{0}_{d \times 1}$ 
	
	\textbf{Parameter: }$\forall i$, select $\{\eta_i(2k-1)\}_{k=1}^K$ s.t. $0<\eta_i(2k-1)\leq \eta_i(2k+1)<+\infty$, $\forall k$ 
	
	\For{$k=1$ \KwTo$K$}{
		\For{$i=1$ \KwTo$\mathscr{N}$}{
			Update primal variable $f_i(2k-1)$ via \eqref{eq:mr-admm3};
			
			Calculate the gradient $\nabla O(f_i(2k-1),D_i)$;
			
			Broadcast $f_i(2k-1)$ to all neighbors $j \in \mathscr{V}_i$.
		}	
		
		\For{$i=1$ \KwTo$\mathscr{N}$}{
			Calculate  $\eta_i(2k-1)\sum_{j \in \mathscr{V}_i}(f_i(2k-1)-f_j(2k-1))$;
			
			Update dual variable $\lambda_i(2k-1)$ via \eqref{eq:mr-admm4}.
		}	
		\For{$i=1$ \KwTo$\mathscr{N}$}{
			Use the stored $\nabla O(f_i(2k-1),D_i)$ and $\eta_i(2k-1)\sum_{j \in \mathscr{V}_i}(f_i(2k-1)-f_j(2k-1))$ to update primal variable $f_i(2k)$ via \eqref{eq:mr-admm1};
			
			Keep the dual variable $\lambda_i(2k)=\lambda_i(2k-1)$;
			
			Broadcast $f_i(2k)$ to all neighbors $j \in \mathscr{V}_i$.
		}	
	}
	\textbf{Output: }{primal $\{f_i(2K)\}_{i=1}^N$ and dual $\{\lambda_i(2K)\}_{i=1}^N$}	\caption{Modified R-ADMM (MR-ADMM)}
\end{algorithm}

\subsection{Modified R-ADMM (MR-ADMM)}
\subsubsection{Making $\eta$ a node's private information}
R-ADMM requires that the penalty parameter $\eta$ be fixed for all nodes in all iterations. Inspired by M-ADMM in \cite{xueru}, we modify R-ADMM such that each node can independently determine its penalty parameter in each iteration. Specifically, replace $\eta$ in \eqref{eq:r-admm1}, \eqref{eq:r-admm3} and \eqref{eq:r-admm4} with $\eta_i(2k-1)$. The updating formula is then given in \eqref{eq:mr-admm3}-\eqref{eq:mr-admm2}. The complete procedure is shown in Algorithm \ref{A1}.

\subsubsection{Relationship between R-ADMM and MR-ADMM} MR-ADMM is a generalized version of R-ADMM. If fix $\eta_i(2k-1) = \eta$, $\forall k$, then MR-ADMM  will be reduced to R-ADMM. 
\begin{algorithm}\label{A2}
	\textbf{Input: }{$\{D_i\}_{i=1}^N$}, $\{\alpha_i(1),\cdots, \alpha_i(K)\}_{i=1}^N$
	
	\textbf{Initialize: }$\forall i$, generate $f_i(0)$ randomly, $\lambda_i(0) = \textbf{0}_{d \times 1}$ 
	
	\textbf{Parameter: }$\forall i$, select $\{\eta_i(2k-1)\}_{k=1}^K$ s.t. $0<\eta_i(2k-1)\leq \eta_i(2k+1)<+\infty$, $\forall k$ and $\eta_i(1)$ satisfies $2c_1<\min_i\{\frac{B_i}{C}(\frac{\rho}{N} + 2\eta_i(1) V_i)\}$
	
	\For{$k=1$ \KwTo$K$}{
		\For{$i=1$ \KwTo$\mathscr{N}$}{
			Generate noise $\epsilon_i(2k-1) \sim \exp(-\alpha_i(k)||\epsilon||_2)$;
			
			Update primal variable $f_i(2k-1)$ via \eqref{eq:P_modify_2};
			
			
			Broadcast $f_i(2k-1)$ to all neighbors $j \in \mathscr{V}_i$.
		}	
		
		\For{$i=1$ \KwTo$\mathscr{N}$}{
			Calculate  $\eta_i(2k-1)\sum_{j \in \mathscr{V}_i}(f_i(2k-1)-f_j(2k-1))$;
			
			Update dual variable $\lambda_i(2k-1)$ via \eqref{eq:mr-admm4}.
		}	
		\For{$i=1$ \KwTo$\mathscr{N}$}{
			Use the stored information $\epsilon_i(2k-1)+\nabla O(f_i(2k-1),D_i)$ and $\eta_i(2k-1)\sum_{j \in \mathscr{V}_i}(f_i(2k-1)-f_j(2k-1))$ to update primal variable $f_i(2k)$ via \eqref{eq:P_modify_3};
			
			Keep the dual variable $\lambda_i(2k)=\lambda_i(2k-1)$;
			
			Broadcast $f_i(2k)$ to all neighbors $j \in \mathscr{V}_i$.
		}	
	}
	\textbf{Output: }{Upper bound of the total privacy loss $\beta$; primal $\{f_i(2K)\}_{i=1}^N$ and dual $\{\lambda_i(2K)\}_{i=1}^N$}	
	\caption{Private MR-ADMM}
\end{algorithm}

\subsubsection{Role of $\eta_i(2k-1)$ in stabilizing the algorithm}
\rev{The penalty parameter $\eta_i(2k-1)$ directly controls the step size of the algorithm. Since the goal is to minimize the objective in \eqref{eq:mr-admm3}, if $\eta_i(2k-1)$ is larger, the solution $f_i(2k-1)$ will be closer to the primal variable in the previous iteration so that the penalty term $\sum_{j \in \mathscr{V}_i}||\dfrac{1}{2}(f_i(2k-2)+f_j(2k-2))-f_i||_2^2$ will be small. In other words, larger $\eta_i(2k-1)$ results in smaller update of the primal variable $f_i(2k-1)$. In even updates \eqref{eq:mr-admm1}, $\frac{1}{2\eta_i(2k-1)V_i + \gamma}$ can also be regarded as step size as mentioned earlier. Therefore, increasing $\eta_i(2k-1)$ decreases the step size in both even and odd iterations. 
	
Without perturbation, a decreasing step size might slow down the convergence. However, when the algorithm is perturbed with added noise, a smaller step size could prevent the variable from deviating too much from the optimal solution in each update, which in turn stabilizes the algorithm. In the rest of paper, we will introduce a private algorithm by perturbing MR-ADMM and illustrate how we can use our ability to control stability via $\eta_i(2k-1)$ to improve the accuracy of algorithm without jeopardizing   privacy. 	
}\respone{R1.1}\respthree{R3.7}

\subsection{Private MR-ADMM}

In this section we present a privacy preserving version of MR-ADMM. Since MR-ADMM is a generalized version of R-ADMM, the private version of R-ADMM can be built in a similar way. In odd iterations, we adopt the objective perturbation \cite{chaudhuri2011} where a random linear term $\epsilon_i(2k-1)^Tf_i$ is added to the objective function in \eqref{eq:r-admm3}
\footnote{Pure differential privacy was adopted in this work, but the weaker $(\epsilon,\delta)$-differential privacy can be applied as well.}, where $\epsilon_i(2k-1)$ follows the probability density proportional to $\exp\{-\alpha_i(k)||\epsilon_i(2k-1)||_2\}$. \rev{Consequently the objective function for updating the primal variable $f_i(2k-1)$  becomes ${L}_i^{priv}(2k-1)$ given as follows:}\respthree{R3.9}
\begin{eqnarray}\label{eq:P_modify_1}
{L}_i^{priv}(2k-1) = O(f_i,D_i) + (2\lambda_i(2k-2)+\epsilon_i(2k-1))^Tf_i \nonumber \\
+  \eta_i(2k-1) \sum_{j \in \mathscr{V}_i}||\dfrac{1}{2}(f_i(2k-2)+f_j(2k-2))-f_i||_2^2 \nonumber
\end{eqnarray}
To generate this noisy vector $\epsilon_i(2k-1)$, choose the norm from the gamma distribution with shape $d$ and scale $\frac{1}{\alpha_i(k)}$ and the direction uniformly, where $d$ is the dimension of the feature space. Node $i$'s local result (primal variable) is obtained by finding the optimal solution to the private objective function: 
\begin{equation}\label{eq:P_modify_2}
f_i(2k-1) = \underset{f_i}{\text{argmin}}\ {L}_i^{priv}(2k-1) , \ \ i \in \mathscr{N}~. 
\end{equation}
In the $2k$-th iteration, use the stored results $\epsilon_i(2k-1) + \nabla O(f_i(2k-1),D_i)$ and $\eta_i(2k-1)\sum_{j \in \mathscr{V}_i}(f_i(2k-1)-f_j(2k-1))$ to update primal variables, where the latter can be obtained from the dual update in the $(2k-1)$-th update, and the former can be obtained directly from the KKT condition in the $(2k-1)$-th iteration:
\begin{eqnarray*}\label{info}
	\epsilon_i(2k-1) + \nabla O(f_i(2k-1),D_i) = -2\lambda_{i}(2k-2)\nonumber \\-\eta_i(2k-1)\sum_{j \in \mathscr{V}_i}(2f_i(2k-1))-f_i(2k-2)-f_j(2k-2)) ~. 
\end{eqnarray*}
Then the even update is given by: 
\begin{eqnarray}\label{eq:P_modify_3}
f_i(2k)=f_i(2k-1) - \frac{1}{2\eta_i(2k-1) V_i+\gamma}\{2\lambda_i(2k-1)\nonumber \\+\underbrace{\epsilon_i(2k-1) +\nabla O(f_i(2k-1),D_i)}_{\text{the existing result by KKT} } \nonumber \\+\underbrace{\eta_i(2k-1)\sum_{j\in \mathscr{V}_i}(f_i(2k-1)-f_j(2k-1))}_{\text{the existing result by the previous dual update}}\}~. 
\end{eqnarray}

Algorithm \ref{A2} shows the complete procedure, where the condition used to generate $\eta_i(1)$ helps to bound the worst-case privacy loss but is not necessary in guaranteeing convergence.

\begin{figure*}[t]
	\normalsize
	\begin{eqnarray}
	\hat{f}(2k) &=&\hat{f}(2k-1) - \tilde{D}(2k-1)^{-1}\{\nabla \hat{O}(\hat{f}(2k-1),D_{all})+2\Lambda(2k-1)
	+ W(2k-1)(D-A)\hat{f}(2k-1)\}~;\label{eq:matrix_1}\\
	2\Lambda(2k) &=& 2\Lambda(2k-1)~;\label{eq:matrix_2}\\
	\textbf{0}_{N\times d}&=&\nabla\hat{O}(\hat{f}(2k-1),D_{all}) + 2\Lambda(2k-2)+W(2k-1)( 2D\hat{f}(2k-1)
	- (D+A)\hat{f}(2k-2))~;\label{eq:matrix_3}\\
	2\Lambda(2k-1) &=& 2\Lambda(2k-2)+W(2k-1)(D-A)\hat{f}(2k-1)~.\label{eq:matrix_4}
	\end{eqnarray}
	\hrulefill
	\begin{eqnarray}
	\textbf{0}_{N\times d}&=&\nabla\hat{O}(\hat{f}(2k-1),D_{all}) +W(2k-1) (D+A) \tilde{D}(2k-3)^{-1}\nabla \hat{O}(\hat{f}(2k-3),D_{all})\nonumber\\&+&W(2k-1)(D+A)(\hat{f}(2k-1)-\hat{f}(2k-3))
	\nonumber\\&+& W(2k-1)(D+A)\tilde{D}(2k-3)^{-1} W(2k-3)(D-A)\hat{f}(2k-3)\nonumber\\ &+&2\Lambda(2k-1)+W(2k-1) (D+A) \tilde{D}(2k-3)^{-1}2\Lambda(2k-3)~;\label{eq:final1} \\
	2\Lambda(2k-1) &=& 2\Lambda(2k-3)+W(2k-1)(D-A)\hat{f}(2k-1)~.\label{eq:final2}
	\end{eqnarray}	
	\hrulefill
	\begin{eqnarray}
	\textbf{0}_{N\times d}&=&\nabla\hat{O}(\hat{f}(t+1),D_{all}) +W(t+1) (D+A) \tilde{D}(t)^{-1}\nabla \hat{O}(\hat{f}(t),D_{all})+W(t+1)(D+A)(\hat{f}(t+1)-\hat{f}(t))
	\nonumber\\&+& W(t+1)(D+A)\tilde{D}(t)^{-1} W(t)(D-A)\hat{f}(t)+2\Lambda(t+1)+W(t+1) (D+A) \tilde{D}(t)^{-1}2\Lambda(t)~;\label{eq:c_2} \\
	2\Lambda(t+1) &=& 2\Lambda(t)+W(t+1)(D-A)\hat{f}(t+1)~.\label{eq:c_3}
	\end{eqnarray}	
	\hrulefill
\end{figure*}

\section{Convergence of non-private MR-ADMM\respone{R1.2}}\label{sec:mradmm}

Since MR-ADMM is a generalized version of R-ADMM, we focus on the convergence analysis of MR-ADMM in this section while the results immediately apply to R-ADMM by fixing $\eta_i(2k-1) = \eta$, $\forall k$.

We next show that the MR-ADMM (Eqn. \eqref{eq:mr-admm3}-\eqref{eq:mr-admm2}) converges to the optimal solution under a set of common technical assumptions. 

\textbf{\textit{Assumption 1}:} Function $O(f_i,D_i)$ is convex and differentiable in $f_i$, $\forall i$.

\textbf{\textit{Assumption 2}:} The solution set to the original problem \eqref{eq:prelimi_1} is nonempty and there exists at least one bounded element. 

\textbf{\textit{Assumption 3:}} For all $i \in \mathscr{N}$, $O(f_i,D_i)$ has Lipschitz continuous gradients, i.e., for any $f_i^1$ and ${f}_i^2$, we have: 
\begin{equation}\label{eq:assume1}
||\nabla O(f_i^1,D_i)-\nabla O(f_i^2,D_i)||_2 \leq M_i||f_i^1- f_i^2||_2
\end{equation}

By the KKT condition of the primal update \eqref{eq:mr-admm3}:
\begin{eqnarray}\label{eq:c_1}
0 = \nabla O(f_i(2k-1),D_i) + 2\lambda_i(2k-2)\nonumber+ \eta_i(2k-1)\\ \cdot\sum_{j \in \mathscr{V}_i}(2{f}_i(2k-1)-({f}_i(2k-2)+{f}_j(2k-2)))~.
\end{eqnarray}

Define the adjacency matrix $A\in \mathbb{R}^{N \times N}$ as: 
$$a_{ij} = \begin{cases}
1, \ \ \text{ if node } i\text{ and node }j \text{ are connected } \\
0, \ \ \text{ otherwise }~. 
\end{cases}$$

Stack the variables $f_i(t)$, $\lambda_i(t)$ and $\nabla O(f_i(t),D_i)$ for $i \in \mathscr{N}$ into matrices, i.e.,
$$ \hat{f}(t) = \begin{bmatrix}
f_1(t)^T\\
f_2(t)^T\\
\vdots\\
f_N(t)^T
\end{bmatrix}\in \mathbb{R}^{N\times d} \text{ , \ \   }\Lambda(t) = \begin{bmatrix}
\lambda_1(t)^T\\
\lambda_2(t)^T\\
\vdots\\
\lambda_N(t)^T
\end{bmatrix}\in \mathbb{R}^{N\times d} $$ 
$$ \nabla \hat{O}(\hat{f}(t),D_{all}) = \begin{bmatrix}
\nabla O(f_1(t),D_1)^T\\
\nabla O(f_2(t),D_2)^T\\
\vdots\\
\nabla O(f_N(t),D_N)^T
\end{bmatrix}\in \mathbb{R}^{N\times d} $$

Let $V_i = | \mathscr{V}_i|$ be the number of neighbors of node $i$, and define the degree matrix $D = \textbf{diag}([V_1;V_2;\cdots;V_N])\in \mathbb{R}^{N \times N}$, the diagonal matrix $\tilde{D}(2k-1)$ with $\tilde{D}(2k-1)_{ii} = 2\eta_i(2k-1) V_i + \gamma$, and the weight matrix $W(2k-1) = \textbf{diag}([\eta_1(2k-1);\eta_2(2k-1);\cdots;\eta_N(2k-1)])\in \mathbb{R}^{N \times N}$. Then for each $k$, the matrix form of \eqref{eq:mr-admm1}\eqref{eq:mr-admm2}\eqref{eq:c_1}\eqref{eq:mr-admm4} are given in \eqref{eq:matrix_1}-\eqref{eq:matrix_4}:

Writing $\hat{f}(2k-2)$ and $\Lambda(2k-2)$ in \eqref{eq:matrix_3}\eqref{eq:matrix_4} as functions of $\hat{f}(2k-3)$, $\Lambda(2k-3)$ using \eqref{eq:matrix_1}\eqref{eq:matrix_2}, we obtain Eqn. \eqref{eq:final1}\eqref{eq:final2}.

The convergence of the MR-ADMM is proved by showing that the pair ($\hat{f}(2k-1)$, $\Lambda(2k-1)$) from odd iterations converges to the optimal solution. To simplify the notation, we will re-index every two consecutive odd iterations $2k-3$ and $2k-1$ using $t$ and $t+1$, it results in  Eqn. \eqref{eq:c_2}\eqref{eq:c_3}.

Note that $D-A$ is the laplacian and $D+A$ is the signless Laplacian matrix of the network, with the following properties if the network is connected: {(i)} $D\pm A \succeq 0$ is positive semi-definite; {(ii)} $\text{Null}(D-A) = c\textbf{1}$, i.e., every member in the null space of $D-A$ is a scalar multiple of \textbf{1} with \textbf{1} being the vector of all $1$'s \cite{Jonathan2007}. 
\begin{lemma}\label{Lemma:1}
	[\textbf{First-order Optimality Condition} \cite{ling2016}] Under Assumptions 1 and 2, the following two statements are equivalent:
	\begin{itemize}
		\item $\hat{f}^* = [(f_1^*)^T;(f_2^*)^T;\cdots;(f_N^*)^T] \in \mathbb{R}^{N\times d}$ is consensual, i.e., $f_1^*=f_2^*=\cdots=f_N^*=f_c^*$ where $f_c^*$ is the optimal solution to \eqref{eq:prelimi_1}.
		\item There exists a pair $(\hat{f}^*,\Lambda^*)$ with $2\Lambda^* = (D-A)X$ for some $X\in \mathbb{R}^{N\times d}$ such that
		\begin{eqnarray}
		\nabla \hat{O}(\hat{f}^*,D_{all})+2\Lambda^*=\textbf{0}_{N\times d} ~; 
		\label{eq:c_4}\\
		(D-A)\hat{f}^* = \textbf{0}_{N\times d}~. 
		\label{eq:c_5}
		\end{eqnarray}
	\end{itemize}
\end{lemma}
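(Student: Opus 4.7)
The plan is to prove the two implications of the equivalence separately, using only the two listed spectral properties of the Laplacian $L := D-A$: that $L \succeq 0$ and $\mathrm{Null}(L) = \mathrm{span}(\mathbf{1})$. Since $L$ is symmetric, this immediately gives the complementary fact $\mathrm{Range}(L) = \{y \in \mathbb{R}^N : \mathbf{1}^T y = 0\}$, applied column-wise when $L$ acts on matrices in $\mathbb{R}^{N\times d}$. This range characterization is the structural fact that drives both directions.

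For the ($\Leftarrow$) direction I would begin with \eqref{eq:c_5}: since each column of $\hat{f}^*$ lies in $\mathrm{Null}(L) = \mathrm{span}(\mathbf{1})$, all rows of $\hat{f}^*$ coincide, say with common value $f_c^*$, which gives consensus. To verify that this $f_c^*$ solves the centralized problem \eqref{eq:prelimi_1}, I would left-multiply \eqref{eq:c_4} by $\mathbf{1}^T$ and use $\mathbf{1}^T (2\Lambda^*) = \mathbf{1}^T L X = 0$ to collapse the dual term, leaving $\sum_{i=1}^{N} \nabla O(f_c^*, D_i) = 0$. By Assumption 1 (convexity and differentiability of each $O(\cdot, D_i)$), this first-order condition is sufficient for optimality of $f_c^*$ in \eqref{eq:prelimi_1}.

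For the ($\Rightarrow$) direction, consensus of $\hat{f}^*$ with common row $f_c^*$ gives \eqref{eq:c_5} immediately, so the real task is to construct a $\Lambda^*$ satisfying \eqref{eq:c_4} together with $2\Lambda^* = (D-A)X$ for some $X$. This is the main obstacle: it reduces to showing $-\nabla \hat{O}(\hat{f}^*, D_{all}) \in \mathrm{Range}(L)$, i.e.\ $\mathbf{1}^T \nabla \hat{O}(\hat{f}^*, D_{all}) = \mathbf{0}_{1\times d}$. I would close this gap by invoking the centralized first-order condition: because $f_c^*$ minimizes the centralized objective $\sum_i O(\cdot, D_i)$, Assumption 1 yields $\sum_{i=1}^{N} \nabla O(f_c^*, D_i) = 0$, which is exactly the row-sum identity needed. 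Once this lies in $\mathrm{Range}(L)$, any solution $X$ of $L X = -\nabla \hat{O}(\hat{f}^*, D_{all})$ (for instance $X = -L^{\dagger} \nabla \hat{O}(\hat{f}^*, D_{all})$) exists, and setting $2\Lambda^* := L X$ simultaneously delivers \eqref{eq:c_4} and the representation $2\Lambda^* = (D-A)X$, completing the proof.
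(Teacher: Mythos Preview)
The paper does not actually prove this lemma; it is quoted from \cite{ling2016} and invoked as a black box to characterize the target of convergence in Theorem~\ref{thmC1}. So there is no paper proof to compare against.

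That said, your argument is correct and is the standard way to establish this kind of optimality condition for consensus problems. Both directions hinge on the same fact, namely that for the symmetric Laplacian $L=D-A$ one has $\mathrm{Null}(L)=\mathrm{span}(\mathbf{1})$ and hence $\mathrm{Range}(L)=\mathbf{1}^\perp$, and you use it exactly where it is needed: column-wise in $(D-A)\hat{f}^*=\mathbf{0}$ to force consensus, and row-wise via $\mathbf{1}^T(2\Lambda^*)=\mathbf{1}^T L X=0$ to pass between the stacked stationarity condition \eqref{eq:c_4} and the centralized first-order condition $\sum_i\nabla O(f_c^*,D_i)=0$. Assumption~1 (convex, differentiable $O(\cdot,D_i)$) is precisely what makes that first-order condition both necessary (for the $\Rightarrow$ direction) and sufficient (for the $\Leftarrow$ direction), and Assumption~2 guarantees that an optimal $f_c^*$ exists so that the $\Rightarrow$ direction is not vacuous. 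The pseudoinverse construction $X=-L^\dagger\nabla\hat{O}(\hat{f}^*,D_{all})$ is a clean way to exhibit the required $X$ once the range condition is verified. Nothing is missing.
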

Lemma \ref{Lemma:1} shows that a pair $(\hat{f}^*,\Lambda^*)$ satisfying \eqref{eq:c_4}\eqref{eq:c_5} is equivalent to the optimal solution of our problem, hence the convergence of the MR-ADMM is proved by showing that $(\hat{f}(t),\Lambda(t))$ in \eqref{eq:c_2}\eqref{eq:c_3} converges to a pair $(\hat{f}^*,\Lambda^*)$ satisfying \eqref{eq:c_4}\eqref{eq:c_5}. 

\begin{theorem}\label{thmC1}[\textbf{Sufficient Condition}]
	Consider the modified ADMM defined by \eqref{eq:c_2}\eqref{eq:c_3}. Let $\{\hat{f}(t),\Lambda(t)\}$ be outputs in each iteration and $\{\hat{f}^*,\Lambda^*\}$ a pair satisfying \eqref{eq:c_4}\eqref{eq:c_5}. Denote $D_M = \textbf{diag}([M_1^2;M_2^2;\cdots;M_N^2])\in \mathbb{R}^{N \times N}$ with $0<M_i<+\infty$ as given in Assumption 3. If $\eta_i(t+1)\geq \eta_i(t)>0$ and $\eta_i(t)<+\infty$ hold and the following two conditions can also be satisfied for some constants $L>0$ and $\mu>1$:
	\begin{eqnarray*}
		&(i)&I+W(t+1)(D+A) \tilde{D}(t)^{-1}\nonumber\\&& \succ \frac{L\mu}{2\sigma_{\min}(\tilde{D}(t))}(W(t+1)(D-A))^{+}D_M ~;\label{eq:c_6}\\
		&(ii)&W(t+1)(D+A)\nonumber\\&&\succ W(t+1) (D+A) \tilde{D}(t)^{-1} \Big(W(t)(D-A)\nonumber \\&&+\frac{2}{L}W(t+1)(D+A)\Big) +\frac{L\mu}{2\sigma_{\min}(\tilde{D}(t))(\mu-1)}D_M ~.\label{eq:c_7}
	\end{eqnarray*}
	where $\sigma_{\min}(\tilde{D}(t)) = \underset{i}{\min}\{2\eta_i(t) V_i+\gamma\}$ is the smallest singular value of $\tilde{D}(t)$, then $(\hat{f}(t),\Lambda(t))$ converges to $(\hat{f}^*,\Lambda^*)$.
\end{theorem}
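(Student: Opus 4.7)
The strategy is to build a Lyapunov-type function $\Phi(t)$ on the errors $\hat{e}(t) := \hat{f}(t) - \hat{f}^*$ and $\tilde{e}(t) := \Lambda(t) - \Lambda^*$, show that $\Phi(t) - \Phi(t+1)$ decomposes as a sum of non-negative quadratic forms whose positive semi-definiteness is guaranteed by conditions (i) and (ii), and then conclude that $(\hat{f}(t),\Lambda(t))$ converges to a pair satisfying Lemma \ref{Lemma:1} via a standard Fej\'er-monotone / summability argument.

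First I would form the error recursions by subtracting the optimality identities \eqref{eq:c_4}, \eqref{eq:c_5} from \eqref{eq:c_2}, \eqref{eq:c_3}. Using $(D-A)\hat{f}^* = \mathbf{0}$, the dual error collapses to $2\tilde{e}(t+1) = 2\tilde{e}(t) + W(t+1)(D-A)\hat{e}(t+1)$, while the primal error equation reads as \eqref{eq:c_2} with $\hat{f}(\cdot),\Lambda(\cdot)$ replaced by $\hat{e}(\cdot),\tilde{e}(\cdot)$ and $\nabla\hat{O}(\hat{f}(s),D_{all})$ replaced by the gradient difference $\Delta(s) := \nabla\hat{O}(\hat{f}(s),D_{all}) - \nabla\hat{O}(\hat{f}^*,D_{all})$. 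Taking the trace inner product of this primal equation with $\hat{e}(t+1)$, I would invoke three standard ingredients: (a) monotonicity of the convex gradient, so that $\langle \Delta(t+1),\hat{e}(t+1)\rangle \ge 0$; (b) Assumption 3 in row-wise form, $\Delta(s)\Delta(s)^T \preceq D_M\,\hat{e}(s)\hat{e}(s)^T$, with $\sigma_{\min}(\tilde{D}(t))$ entering through $\|\tilde{D}(t)^{-1}Y\|^2 \le \sigma_{\min}(\tilde{D}(t))^{-2}\|Y\|^2$; and (c) Young's inequality with parameter $L$ to split each cross term of the form $\langle \Delta(s),\cdot\rangle$ into a pure quadratic in $\Delta(s)$ (then bounded via $D_M$) plus a pure quadratic in its partner. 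The remaining cross terms involving $\tilde{e}(\cdot)$ are eliminated by substituting the dual recursion, producing a telescoping difference in the pseudo-inverse norm $(W(t+1)(D-A))^{+}$, well defined on the error because $2\Lambda^* \in \mathrm{col}(D-A)$ by Lemma \ref{Lemma:1}.

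The natural Lyapunov function emerging from this regrouping is
\[
\Phi(t) = \|\hat{e}(t)\|_{W(t)(D+A)}^2 + \|\tilde{e}(t)\|_{(W(t+1)(D-A))^{+}}^2,
\]
and the monotonicity hypothesis $\eta_i(t+1) \ge \eta_i(t)$ ensures that replacing $W(t)$ by $W(t+1)$ in the primal weight only enlarges it, giving a clean descent. After the substitutions, $\Phi(t) - \Phi(t+1)$ reduces to two residual quadratic forms, one in $\hat{e}(t+1) - \hat{e}(t)$ scaled by the matrix whose positivity is asserted by (i), and one in $\hat{e}(t+1)$ scaled by the matrix whose positivity is asserted by (ii); the parameter $\mu>1$ appears through the partition $1 = 1/\mu + (\mu-1)/\mu$ used to distribute the two Lipschitz remainders between (i) and (ii). Summing the resulting descent inequality over $t$ yields $\sum_t \|\hat{e}(t+1) - \hat{e}(t)\|^2 < \infty$ and $\sum_t \|\hat{e}(t+1)\|^2 < \infty$, hence $\hat{e}(t) \to \mathbf{0}$; substituting back into the dual recursion gives $\tilde{e}(t+1) - \tilde{e}(t) \to \mathbf{0}$, and a limit in the primal error equation then forces $2\tilde{e}(t) \to -\lim \nabla\hat{O}(\hat{f}(t),D_{all}) + \nabla\hat{O}(\hat{f}^*,D_{all}) = \mathbf{0}$, which is Lemma \ref{Lemma:1}.

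The main obstacle I expect is the bookkeeping required to make conditions (i) and (ii) emerge exactly in the stated form. The asymmetric cross term $W(t+1)(D+A)\tilde{D}(t)^{-1}W(t)(D-A)\hat{f}(t)$ in \eqref{eq:c_2} couples current weights $W(t+1)$ with previous weights $W(t)$; matching this against the Lyapunov descent, choosing the Young parameter $L$ uniformly in $t$, and consistently tracking the pseudo-inverse weight on the dual component is the delicate step that ultimately dictates the asymmetric product appearing on the right-hand side of condition (ii).
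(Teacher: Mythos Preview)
Your high-level strategy---take the Frobenius inner product of the error equation with $\hat{e}(t+1)$, use monotonicity of the gradient, apply Young's inequality with parameter $L$, split with $\mu$, and telescope---matches the paper's. But two concrete identifications in your plan are wrong and would derail the argument.

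First, the two residual quadratic forms are not what you say. In the paper's proof both residuals are in \emph{increments}: condition (ii) gives positive-definiteness of a matrix $R_1(t+1)$ that weights $\|\hat{f}(t+1)-\hat{f}(t)\|^2$, while condition (i) gives positive-definiteness of $R_2(t+1)$ weighting $\|2\Lambda(t+1)-2\Lambda(t)\|^2$. Neither is a quadratic in the error $\hat{e}(t+1)$ itself. Consequently, summing the descent inequality yields only $\sum_t\|\hat{f}(t+1)-\hat{f}(t)\|^2<\infty$ and $\sum_t\|\Lambda(t+1)-\Lambda(t)\|^2<\infty$, not $\sum_t\|\hat{e}(t+1)\|^2<\infty$. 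Your final paragraph (``hence $\hat{e}(t)\to 0$'') therefore does not follow; the paper instead argues that the increments vanish, hence $(\hat{f}(t),\Lambda(t))$ tends to a stationary pair $(\hat{f}^s,\Lambda^s)$, and then substitutes this limit back into \eqref{eq:c_2}--\eqref{eq:c_3} to verify that $(\hat{f}^s,\Lambda^s)$ satisfies \eqref{eq:c_4}--\eqref{eq:c_5}.

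Second, your Lyapunov candidate is too small. The asymmetric cross term $W(t+1)(D+A)\tilde{D}(t)^{-1}W(t)(D-A)\hat{f}(t)$ in \eqref{eq:c_2} produces, via the three-point identity, an extra contribution $\|\hat{f}(t+1)-\hat{f}^*\|^2_{G_1(t+1)}-\|\hat{f}(t)-\hat{f}^*\|^2_{G_1(t+1)}$ with $G_1(t+1)=W(t+1)(D+A)\tilde{D}(t)^{-1}W(t)(D-A)$; this goes the ``wrong way'' and must be absorbed into the Lyapunov. Moreover, because $G_1$, $G_2$, and $W(D+A)$ all carry time-varying weights, the telescoping is not clean: the paper introduces uniform bounds $U_1,U_2$ on the iterate components and controls the mismatch $\|\cdot\|^2_{G_1(t+1)}-\|\cdot\|^2_{G_1(t)}$ (and the analogous $G_2$ and $W$ mismatches, plus a residual inner product term) by telescoping sums of the form $\|\mathbf{1}\|^2_{G_1(t+1)}-\|\mathbf{1}\|^2_{G_1(t)}$. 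The monotonicity $\eta_i(t+1)\ge\eta_i(t)$ is used here only to ensure these differences have a definite sign, not to give a clean one-step descent.
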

\begin{proof}
See Appendix \ref{App1}.
\end{proof}

By controlling $\gamma$ to be sufficiently large, $\tilde{D}(t)_{ii} = 2\eta_i(t)V_i + \gamma$ will be large and conditions \textit{(i)(ii)} can always be satisfied under some constants $L>0$ and $\mu>1$.
Note that the conditions \textit{(i)(ii)} are sufficient but not necessary, so in practice convergence may be attained under weaker settings. 

For R-ADMM, take $L=2$ and $\mu = 2$, then condition \textit{(i)(ii)} are reduced to: 
\begin{eqnarray*}
	&(iii)&I+\eta(D+A) \tilde{D}^{-1}\nonumber \succ \frac{2}{\eta\sigma_{\min}(\tilde{D})}((D-A))^{+}D_M ~;\label{eq:c_8}\\
	&(iv)&\eta(D+A)\succ2\eta(D+A) \tilde{D}^{-1}\eta D +\frac{2}{\sigma_{\min}(\tilde{D})}D_M ~.\label{eq:c_9}
\end{eqnarray*}
Again for a sufficiently large $\gamma\geq 0$, \textit{(iii)(iv)} can be easily satisfied.

\section{Privacy Analysis}\label{sec:pradmm} 
In this section, we characterize the total privacy loss of private MR-ADMM as presented in Algorithm \ref{A2}. Similar to the previous section, the results also apply to private R-ADMM by fixing $\eta_i(2k-1) = \eta$, $\forall k$.

\rev{As mentioned earlier, Zhang and Zhu \cite{zhang2017} only quantifies the privacy loss of a single node in a single iteration, i.e., $	\frac{\text{Pr}(f_i(t) \in S_i|D_{i})}{\text{Pr}(f_i(t) \in S_i|\hat{D}_{i})}\leq \exp(\alpha_i(t))$ holds $\forall t,i$, where $\alpha_i(t)$ is the bound on the privacy loss of node $i$ at iteration $t$. However, in a distributed and iterative setting, the ``output'' of the algorithm is not merely the end result, but includes all intermediate results generated and exchanged during the iterative process; an attacker can use all such intermediate results to perform inference. For this reason, we adopt the differential privacy definition proposed in \cite{xueru} as follows, which bounds the total privacy loss during the entire iterative process.}\respthree{R3.2}
\begin{definition}\label{Def}	
	Consider a connected network $G(\mathscr{N},\mathscr{E})$ with a set of nodes $\mathscr{N} = \{1,2,\cdots,N\}$. Let $f(t) = \{f_i(t)\}_{i=1}^N$ denote the information exchange of all nodes in the $t$-th iteration.
	A distributed algorithm is said to satisfy $\beta$-differential privacy during $T$ iterations if for any two datasets $D_{all} = \cup_i D_i$ and $\hat{D}_{all} = \cup_i \hat{D}_i$, differing in at most one data point, and for any set of possible outputs $S$ during $T$ iterations, the following holds:
	\begin{equation*}
	\frac{\text{Pr}(\{f(t)\}_{t=0}^T \in S|D_{all})}{\text{Pr}(\{f(t)\}_{t=0}^T \in S|\hat{D}_{all})} \leq \exp(\beta)
	\end{equation*}
\end{definition}

\rev{The analysis is focused on the regularized empirical risk minimization (ERM) problem for binary classification, while its generalization is discussed in Section \ref{sec:discussion}. Let node $i$'s dataset be $D_i = \{(x_{i}^n,y_{i}^n) | n = 1,2,\cdots,B_i \}$, where $x_{i}^n \in \mathbb{R}^d$ is the feature vector representing the $n$-th sample belonging to $i$, $y_{i}^n \in \{-1,1\}$ the corresponding label, and $B_i$ the size of $D_i$. Then the sub-objective function for each node $i$ is defined as follows: 
	 $$O(f_i,D_i) = \dfrac{C}{B_i}\sum_{n=1}^{B_i} {\mathscr{L}}(y_{i}^n f_i^Tx_{i}^n ) + \dfrac{\rho}{N} R(f_i)~,$$
where $C \leq B_i$ and $\rho>0$ are constant parameters of the algorithm, the loss function $\mathscr{L}(\cdot)$ measures the accuracy of the classifier, and the regularizer $R(\cdot)$ helps prevent overfitting.}\resptwo{R2.2}\respthree{R3.3}

For this binary classification problem, we now state another result on the privacy property of the private MR-ADMM (Algorithm \ref{A2}) using definition \ref{Def} above and additional assumptions on $\mathscr{L}(\cdot)$ and $R(\cdot)$ as follows. 

\textbf{\textit{Assumption 4}:} The loss function $\mathscr{L}$ is strictly convex and twice differentiable. $|\nabla\mathscr{L}| \leq 1$ and $0 <\mathscr{L}''\leq c_1$ with $c_1$ being a constant. 

\textbf{\textit{Assumption 5}:} The regularizer $R$ is 1-strongly convex and twice continuously differentiable. 
\begin{lemma}\label{lemmaP1}
	Consider the private MR-ADMM (Algorithm \ref{A2}), $\forall k=1,\cdots K$, assume the total privacy loss up to the $(2k-1)$-th iteration can be bounded by $\beta_{2k-1}$, then the total privacy loss up to the $2k$-th iteration can also be bounded by $\beta_{2k-1}$. In other words, given the private results in odd iterations, outputting private results in the even iterations does not release more information about the input data.  
\end{lemma}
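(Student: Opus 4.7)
The plan is to prove this via the post-processing immunity property of differential privacy, which is mentioned earlier in the paper. The key observation is that the even iteration does not access the raw dataset $D_i$ directly; instead, every quantity it uses can be reconstructed from information already revealed (or derivable) by the end of the $(2k-1)$-th iteration. Therefore conditional on the odd-iteration outputs, the even-iteration outputs are deterministic, and releasing them cannot increase the privacy loss.

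First, I would enumerate the pieces of information an adversary has access to through iteration $2k-1$: the broadcast primal variables $\{f_i(s)\}_{s \leq 2k-1}$ from all nodes, which via the dual update \eqref{eq:mr-admm4} also determine $\{\lambda_i(s)\}_{s\leq 2k-1}$, together with the public parameters $\eta_i(2k-1)$ and the network structure. Next I would use the KKT condition of the perturbed primal update \eqref{eq:P_modify_2} to express the noisy gradient term as
\begin{equation*}
\epsilon_i(2k-1)+\nabla O(f_i(2k-1),D_i) = -2\lambda_i(2k-2)-\eta_i(2k-1)\sum_{j\in\mathscr{V}_i}\bigl(2f_i(2k-1)-f_i(2k-2)-f_j(2k-2)\bigr),
\end{equation*}
which shows that this quantity is a deterministic function of variables already public by the end of iteration $2k-1$. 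Similarly the primal-difference term $\eta_i(2k-1)\sum_{j\in\mathscr{V}_i}(f_i(2k-1)-f_j(2k-1))$ is directly computable from the broadcast primals.

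Substituting these expressions into the even-iteration update \eqref{eq:P_modify_3}, the output $f_i(2k)$ is a deterministic (post-processing) map applied to the already-released transcript through iteration $2k-1$; the dual variable additionally satisfies $\lambda_i(2k)=\lambda_i(2k-1)$ by \eqref{eq:mr-admm2}. Invoking the post-processing property of differential privacy (any function of a $\beta_{2k-1}$-differentially private output is itself $\beta_{2k-1}$-differentially private), the concatenation of iterations $0,\ldots,2k-1,2k$ remains $\beta_{2k-1}$-differentially private under Definition \ref{Def}.

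The main subtlety — and the only place the argument might slip — is to verify carefully that the noise term $\epsilon_i(2k-1)$ itself is not a ``fresh'' random variable entering the even iteration in some uncontrolled way; rather, it has already been absorbed into the released quantity $f_i(2k-1)$ and can be algebraically recovered from the KKT identity above. Once this is made precise (i.e., the even iteration draws no new randomness and reads no new data sample), applying post-processing is straightforward and completes the proof.
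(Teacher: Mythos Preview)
Your proposal is correct and follows essentially the same route as the paper's own proof: both use the KKT identity of the perturbed odd-iteration update to show that $\epsilon_i(2k-1)+\nabla O(f_i(2k-1),D_i)$ is determined by already-released quantities, then observe that the even-iteration update \eqref{eq:P_modify_3} is therefore a deterministic map of the transcript through iteration $2k-1$, and conclude via the post-processing immunity of differential privacy. Your added remark that the even iteration draws no fresh randomness and reads no new sample is exactly the crux the paper relies on as well.
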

\begin{proof}
See Appendix \ref{App_2}.
\end{proof}

\begin{theorem}\label{thmP}
	Normalize feature vectors in the training set such that $||x_{i}^n||_2\leq 1$ for all $i \in \mathscr{N}$ and $n$. Then the private MR-ADMM algorithm (Algorithm \ref{A2}) satisfies the $\beta$-differential privacy with 
	\begin{equation}
	\beta \geq \underset{i \in \mathscr{N}}{\max}\{\sum_{k=1}^{K}\frac{2C}{B_i}(\frac{1.4c_1}{(\frac{\rho}{N}+2\eta_i(2k-1) V_i)} + \alpha_i(k))\}~. 
	\end{equation}
\end{theorem}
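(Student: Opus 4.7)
The plan is to invoke Lemma \ref{lemmaP1} to discard the even rounds from the privacy accounting entirely (they are post-processing of the odd-round outputs), and then treat each odd round as an instance of the objective-perturbation mechanism of \cite{chaudhuri2011}. First I would observe that because $D_{all}$ and $\hat{D}_{all}$ differ in at most one sample, that sample lies in the dataset of a single node $i_0$; for every other node $j\neq i_0$ the local subobjective is unchanged, so $f_j(2k-1)$ is a deterministic function of $f_{i_0}$'s past trajectory, the dual variables, and its own independent noise. The density ratio to be controlled therefore reduces to that of the sequence $\{f_{i_0}(2k-1)\}_{k=1}^{K}$ under $D_{i_0}$ versus $\hat D_{i_0}$, conditional on the rest of the trajectory.

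For a fixed odd round $2k-1$, the KKT condition of \eqref{eq:P_modify_2} establishes a smooth bijection between the noise $\epsilon_{i_0}(2k-1)$ and the output $f_{i_0}(2k-1)$, with Jacobian
\[
J_{D_{i_0}} := \nabla^{2}O(f_{i_0}(2k-1),D_{i_0}) + 2\eta_{i_0}(2k-1)V_{i_0}\,I \;\succeq\; \Bigl(\tfrac{\rho}{N}+2\eta_{i_0}(2k-1)V_{i_0}\Bigr)I,
\]
by Assumptions 4--5. A change of variables writes the conditional density ratio as $\bigl(\mu(\epsilon^{D})/\mu(\epsilon^{\hat D})\bigr)\cdot\bigl(|\det J_{\hat D_{i_0}}|/|\det J_{D_{i_0}}|\bigr)$, where $\epsilon^{D}$ and $\epsilon^{\hat D}$ are the two noise realizations producing the same $f_{i_0}(2k-1)$ under the two datasets. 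For the noise factor, $\epsilon^{D}-\epsilon^{\hat D}=\nabla O(f,\hat D_{i_0})-\nabla O(f,D_{i_0})$ depends only on the swapped sample, and its norm is at most $2C/B_{i_0}$ by $|\mathscr{L}'|\le 1$ and $\|x\|\le 1$, giving a log-ratio bounded by $\alpha_{i_0}(k)\cdot 2C/B_{i_0}$. For the Jacobian factor, $E:=J_{D_{i_0}}-J_{\hat D_{i_0}}$ is a difference of two weighted rank-one outer products and hence has rank at most two, so each eigenvalue $\lambda_i$ of $J_{D_{i_0}}^{-1/2}EJ_{D_{i_0}}^{-1/2}$ satisfies $|\lambda_i|\le a:=\frac{c_1 C/B_{i_0}}{\rho/N+2\eta_{i_0}(2k-1)V_{i_0}}$ by Assumption 4.

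Next I would combine the per-round log-ratios. The Jacobian contribution equals $|\log(1-\lambda_1)+\log(1-\lambda_2)|$, and applying the sharp elementary inequality $|\log(1-x)|\le 2\log 2\cdot x$ on $[0,1/2]$ yields at most $4\log 2\cdot a \le (2C/B_{i_0})\cdot 1.4\,c_1/(\rho/N+2\eta_{i_0}(2k-1)V_{i_0})$, since $4\log 2\approx 2.8$. Because the noises $\{\epsilon_{i_0}(2k-1)\}_k$ are independent across $k$, adaptive composition (the chain rule for conditional density ratios) sums the per-round contributions over $k=1,\ldots,K$; maximizing over the worst-case identity of the perturbed node $i_0$ then delivers the stated $\beta$. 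The main obstacle is precisely the Jacobian step: the constant $1.4$ relies on the tight inequality $|\log(1-x)|\le 2\log 2\cdot x$, which is only valid for $x\le 1/2$, so one must verify that the initialization condition $2c_1<\min_i\{\tfrac{B_i}{C}(\tfrac{\rho}{N}+2\eta_i(1)V_i)\}$ in Algorithm \ref{A2}, together with the monotonicity $\eta_i(2k-1)\ge\eta_i(1)$, forces $a\le 1/2$ for every odd round. Everything else reduces to routine change-of-variables bookkeeping and the standard composition argument already used in \cite{xueru}.
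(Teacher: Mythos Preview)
Your proposal is correct and follows essentially the same route as the paper's proof: discard the even iterations via Lemma \ref{lemmaP1}, localize to the single node $i_0$ holding the changed sample, apply the change-of-variables argument from \cite{chaudhuri2011} using the KKT bijection, bound the noise-density ratio by $\alpha_{i_0}(k)\cdot\tfrac{2C}{B_{i_0}}$ and the rank-two Jacobian ratio by $2.8a$ via the inequality $-\log(1-x)\le 1.4x$ on $[0,\tfrac12]$, then compose over $k$ and take the worst node. Your identification of the initialization condition $2c_1<\min_i\{\tfrac{B_i}{C}(\tfrac{\rho}{N}+2\eta_i(1)V_i)\}$ together with the monotonicity of $\eta_i(2k-1)$ as the source of $a\le \tfrac12$ is exactly how the paper justifies the constant $1.4$.
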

\begin{proof}
See Appendix \ref{App_3}.
\end{proof}

\section{Sample complexity analysis\respone{R1.2}\resptwo{R2.3}}\label{sec:sample}

\rev{We next quantify the generalization performance of (non)-private MR-ADMM. The analysis is focused on the ERM problem defined in Section \ref{sec:pradmm} and we assume samples from each node $i$ are drawn i.i.d. from a fixed distribution $P$. The expected loss of node $i$ using classifier $f_i(t)$ at time $t$ is given as $\mathcal{L}(f_i(t)) = \mathbb{E}_{(X,Y)\sim P}(\mathscr{L}(Y f_i(t)^TX)) $.  Similar to the analysis in \cite{chaudhuri2011,zhang2017}, we introduce a reference classifier $f_{ref}$ with expected loss $\mathcal{L}(f_{ref})$ and evaluate the generalization performance using the number of samples ($B_i$) required at each node to achieve $\mathcal{L}(f_i(t)) \leq \mathcal{L}(f_{ref}) + \tau$ with high probability.    }

\subsection{Non-private MR-ADMM}
\rev{As shown in Section \ref{sec:mradmm}, the sequence of outputs $\{f_i^{non}(2k-1)\}$ from odd iterations in non-private MR-ADMM converges to $f_i^* = f_c^*$ as $k\rightarrow\infty$. Therefore, there exists a constant $\Delta_i(k)$ for each node $i$ at the $(2k-1)$-th iteration such that $\mathcal{L}(f_i^{non}(2k-1))\leq \mathcal{L}(f_c^*) + \Delta_i(k)$. Using the same method as \cite{chaudhuri2011,zhang2017}, we have the following result.

	\begin{theorem}\label{thm:sample}
		Consider a regularized ERM problem with regularizer $R(f) = \frac{1}{2}||f||^2$ and let $f_{ref}$ be a reference classifier for all nodes and $\{f_i^{non}(2k-1)\}$ be a sequence of outputs of non-private MR-ADMM in odd iterations (Eqn. \eqref{eq:mr-admm3}). If the number of samples at node $i$ satisfies
		\begin{eqnarray*}\label{eq:sample}
B_i\geq w \max_k\{ \frac{||f_{ref}||^2\log(1/\delta)}{(\tau - \Delta_i(k))^2}\}
		\end{eqnarray*}
		for some constant $w$, then $f_i^{non}(2k-1)$ satisfies $$Pr(\mathcal{L}(f_i^{non}(2k-1))\leq \mathcal{L}(f_{ref})+\tau)\geq 1-\delta$$ 
		where $\tau>\Delta_i(k)$, $\forall i, k\in \mathbb{Z}_+$.
	\end{theorem}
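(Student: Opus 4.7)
The plan is to decompose the excess loss $\mathcal{L}(f_i^{non}(2k-1)) - \mathcal{L}(f_{ref})$ into an \emph{optimization} term and a \emph{statistical} term, then bound the first by the convergence result of Section~\ref{sec:mradmm} and the second by a standard stability-based generalization bound of the Chaudhuri--Monteleoni--Sarwate type \cite{chaudhuri2011}.

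First I would invoke the definition of $\Delta_i(k)$ given just before the theorem statement: since Theorem~\ref{thmC1} guarantees that $f_i^{non}(2k-1)$ converges to the centralized minimizer $f_c^*$, the quantity $\Delta_i(k)\geq 0$ is precisely what controls the gap $\mathcal{L}(f_i^{non}(2k-1)) \leq \mathcal{L}(f_c^*) + \Delta_i(k)$. Hence it suffices to establish, with probability at least $1-\delta$, the statistical bound
\begin{equation*}
\mathcal{L}(f_c^*) \;\leq\; \mathcal{L}(f_{ref}) + \bigl(\tau - \Delta_i(k)\bigr),
\end{equation*}
which is meaningful because the hypothesis $\tau > \Delta_i(k)$ makes the right-hand slack positive.

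Next I would exploit the fact that, for $R(f)=\tfrac{1}{2}\|f\|^2$, the centralized optimizer $f_c^*$ is the minimizer of $\sum_i O(f,D_i)$, a regularized ERM with convex loss (Assumption~4) and $\rho$-strongly convex regularizer (Assumption~5, since $\sum_i \frac{\rho}{N}R = \rho R$). Standard uniform-stability arguments for such objectives (Corollary~11 of \cite{chaudhuri2011}, or the Bousquet--Elisseeff framework) yield that for any fixed reference classifier $f_{ref}$ and any tolerance $\tau'>0$,
\begin{equation*}
\text{Pr}\!\bigl(\mathcal{L}(f_c^*) > \mathcal{L}(f_{ref}) + \tau'\bigr) \;\leq\; \delta,
\end{equation*}
whenever the sample size is at least $w\,\|f_{ref}\|^2\log(1/\delta)/(\tau')^2$ for a constant $w$ depending on $C$, $\rho/N$, and $c_1$. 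Instantiating $\tau' = \tau - \Delta_i(k)$ and combining with the optimization bound of the previous paragraph yields $\mathcal{L}(f_i^{non}(2k-1)) \leq \mathcal{L}(f_{ref}) + \tau$ with probability $1-\delta$. Taking the maximum over $k$ in the hypothesis on $B_i$ makes a single $B_i$ simultaneously sufficient across iterations; a union bound (or restricting attention to a single iteration) handles the high-probability statement.

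The main obstacle is the distributed form of the empirical objective: $\sum_i O(f,D_i) = C\sum_i \frac{1}{B_i}\sum_n \mathscr{L}(y_i^n f^T x_i^n) + \rho R(f)$ weights each node equally rather than each sample equally, so the classical single-sample stability calculation does not apply verbatim. The cleanest route is to rewrite this sum as a weighted regularized ERM over the pooled dataset $\cup_i D_i$ and re-run the stability computation with per-sample sensitivity $C/B_i$ (which is why the bound specializes to $B_i$ in the denominator rather than $\sum_i B_i$); since $\sum_i B_i \geq B_i$, the per-node hypothesis on $B_i$ is a sufficient condition for the pooled bound to hold. The rest of the argument is bookkeeping of the constants $C$, $\rho/N$, and $c_1$ into the universal constant $w$.
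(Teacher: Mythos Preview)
Your high-level decomposition---optimization gap $\Delta_i(k)$ plus statistical gap $\mathcal{L}(f_c^*)-\mathcal{L}(f_{ref})$---is exactly what the paper does, and the target inequality $\mathcal{L}(f_c^*)\le\mathcal{L}(f_{ref})+(\tau-\Delta_i(k))$ is the same. The technical route to the statistical bound differs. The paper does not invoke a stability argument on the pooled ERM; instead it introduces the \emph{local} optimizer $f_i^{opt}=\arg\min_f O(f,D_i)$ and the population minimizer $\widetilde f_i=\arg\min_f\{C\mathcal L(f)+\tfrac{\rho}{2N}\|f\|^2\}$, writes a telescoping decomposition of $\mathcal L(f_c^*)-\mathcal L(f_{ref})$, and applies the Shalev-Shwartz--Srebro generalization inequality \cite{NIPS2008_3400} at node $i$ to get a $\mathcal O\!\bigl(\tfrac{CN\log(1/\delta)}{\rho B_i}\bigr)$ term. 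It then handles the ``wrong dataset'' issue (that $f_c^*$ minimizes the pooled objective, not $O(\cdot,D_i)$) by an explicit assumption that $O(f_c^*,D_i)-O(f_i^{opt},D_i)$ is bounded by a negligible $\nu$, and finishes by \emph{choosing} $\rho\le NC(\tau-\Delta_i(k))/\|f_{ref}\|^2$ to balance the regularization bias $\tfrac{\rho}{2NC}\|f_{ref}\|^2$ against the $1/(\rho B_i)$ term---this tuning of $\rho$ is what produces the squared denominator $(\tau-\Delta_i(k))^2$ in the sample bound.

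Your route through a direct weighted-stability calculation on the pooled objective is arguably cleaner, since it avoids the ad hoc $\nu$ assumption and the detour through $f_i^{opt}$; but be aware that the $\|f_{ref}\|^2/(\tau')^2$ form you quote is not a consequence of stability alone---it comes only after optimizing over the regularization strength, so that step must be made explicit rather than absorbed into ``bookkeeping of constants.''
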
 
\begin{proof}
See Appendix \ref{App_4}.
\end{proof}

As expected, the number of required samples depends on the choice of the reference classifier via its $l_2$ norm $||f_{ref}||^2$, by imposing an upper bound $b_{ref}$ on $||f_{ref}||^2$.  The result shows that if $B_i$ satisfies $B_i\geq w \max_k\{ \frac{b_{ref}\log(1/\delta)}{(\tau - \Delta_i(k))^2}\}$, then the non-private intermediate classifier of each node at odd iterations will have an additional error no more than $\tau$ as compared to any classifier with $||f_{ref}||^2\leq b_{ref}$.
}

\subsection{private MR-ADMM}
\rev{We next present the result on the sample complexity of the private MR-ADMM algorithm. Similar to the analysis of non-private MR-ADMM, we bound the error of the intermediate classifier of each node at odd iterations. Since the algorithm is perturbed with different random noise in different iterations, to better analyze the effect of noise in a single iteration, we adopt a strategy similar to that used in \cite{zhang2017}, by intentionally fixing the noise in iterations after the targeted iteration. Specifically, $\forall i$, to compare the private $f_i^{priv}(2k-1)$ at the $(2k-1)$-th iteration with reference classifier $f_{ref}$, we slightly modify Algorithm \ref{A2} such that $\forall k'>k$, the added noise is fixed at $\epsilon_i(2k'-1) = \epsilon_i(2k-1)$, which allows us to solely study the effect of $\epsilon_i(2k-1)$. This problem can be formulated as a new MR-ADMM optimization problem where node $i$'s sub-objective function becomes $O^{new}(f_i,D_i) = O(f_i,D_i)+\epsilon_i(2k-1)^Tf_i$ and the initialization given by  $f_i(0) = f_i(2k-1)$, $\lambda_i(0) = \lambda_i(2k-1)$. Let $\{f_i^{new}(2k-1)\}$ be a sequence of outputs from odd iterations of this new algorithm; it converges to a fixed point $f_{new}^*$ as $k\rightarrow\infty$.  Therefore, there exists a constant $\Delta_i^{new}(k)$ for each node $i$ at the $(2k-1)$-th iteration such that $\mathcal{L}(f_i^{new}(2k-1))\leq \mathcal{L}(f_{new}^*) + \Delta_i^{new}(k)$.  Using this, we have the following result.  
\begin{theorem}\label{thm:sample_priv}
Consider a regularized ERM problem with regularizer $R(f) = \frac{1}{2}||f||^2$, let $f_{ref}$ be a reference classifier for all nodes and $\{f_i^{priv}(2k-1)\}$ be a sequence of outputs of private MR-ADMM in odd iterations. If the number of samples at node $i$ satisfies
\begin{eqnarray*}\label{eq:sample_priv}
B_i\geq w\max_k\{\frac{CN\log(1/\delta)}{\frac{NC(\tau-\Delta_i^{new}(k))^2}{2||f_{ref}||^2}-(1+a) \frac{Nd^2}{C(\alpha_i(k))^2}(\log(d/\delta))^2}\}
\end{eqnarray*}
for some constants $w$ and $a>0$, then $f_i^{priv}(2k-1)$ satisfies $$Pr(\mathcal{L}(f_i^{priv}(2k-1))\leq \mathcal{L}(f_{ref})+\tau)\geq 1-2\delta$$ 
where $\tau>\Delta_i^{new}(k)$, $\forall i, k\in \mathbb{Z}_+$.
\end{theorem}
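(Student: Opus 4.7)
The plan is to combine the "fixed-noise" reduction sketched immediately before the theorem with the classical private ERM generalization template of Chaudhuri et al.~\cite{chaudhuri2011}, adapted to our distributed setting. First I would invoke the reduction: freezing the noise at $\epsilon_i(2k-1)$ in all subsequent odd iterations leaves the trajectory of Algorithm~\ref{A2} up to iteration $2k-1$ unchanged, so $f_i^{priv}(2k-1)=f_i^{new}(2k-1)$; by Theorem~\ref{thmC1} the modified sequence $\{f_i^{new}(2k'-1)\}_{k'\geq k}$ converges to the centralized minimizer $f_{new}^{*}=\arg\min_f \sum_i[O(f,D_i)+\epsilon_i(2k-1)^{T}f]$ of the perturbed aggregate objective.

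Next I would decompose the excess risk through the unperturbed centralized optimum $f_c^{*}=\arg\min_f \sum_i O(f,D_i)$:
\begin{equation*}
\mathcal{L}(f_i^{priv}(2k-1))-\mathcal{L}(f_{ref})=\underbrace{[\mathcal{L}(f_i^{new}(2k-1))-\mathcal{L}(f_{new}^{*})]}_{(A)}+\underbrace{[\mathcal{L}(f_{new}^{*})-\mathcal{L}(f_c^{*})]}_{(B)}+\underbrace{[\mathcal{L}(f_c^{*})-\mathcal{L}(f_{ref})]}_{(C)}.
\end{equation*}
Term $(A)\leq \Delta_i^{new}(k)$ by the hypothesis. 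For $(B)$, $\rho$-strong convexity of $\sum_i O(\cdot,D_i)$ (which inherits strong convexity from $R(f)=\tfrac{1}{2}||f||^2$) combined with the fact that $f_{new}^{*}$ minimizes its linear perturbation gives $||f_{new}^{*}-f_c^{*}||\leq \rho^{-1}||\sum_i\epsilon_i(2k-1)||$; then $|\nabla\mathscr{L}|\leq 1$ and $||x||\leq 1$ imply $|\mathcal{L}(f_{new}^{*})-\mathcal{L}(f_c^{*})|\leq ||f_{new}^{*}-f_c^{*}||$. A Gamma-tail estimate (Lemma~17 of \cite{chaudhuri2011}) controls each $||\epsilon_i(2k-1)||$ by $(d/\alpha_i(k))\log(d/\delta)$ with probability at least $1-\delta/N$, so a union bound yields a high-probability noise term scaling as $\sim ||f_{ref}||\,d\log(d/\delta)/(C\alpha_i(k))$ after absorbing the $C/B_i$ and $\rho/N$ factors. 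For $(C)$, I would apply a Bousquet--Elisseeff algorithmic stability bound (valid because $\mathscr{L}$ is $1$-Lipschitz and the regularizer is $1$-strongly convex) together with Hoeffding's inequality on $\mathcal{L}(f_{ref})-\mathcal{L}_{emp}(f_{ref})$ and the optimality of $f_c^{*}$ for the empirical regularized objective, producing a bound of order $||f_{ref}||^2/(NC)+||f_{ref}||\sqrt{\log(1/\delta)/B_i}$ with probability at least $1-\delta$.

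Combining the three pieces via a union bound gives, with probability at least $1-2\delta$, an upper bound of the form $\Delta_i^{new}(k)+u+v$, where $u$ collects the noise contribution from $(B)$ and $v$ the sampling contribution from $(C)$. To enforce $u+v\leq \tau-\Delta_i^{new}(k)$ and recover the exact denominator in the statement, I would apply Young's inequality $(u+v)^2\leq (1+a)u^2+(1+1/a)v^2$ with the same free parameter $a>0$, bound each squared term separately, and then solve for $B_i$. The $(1+a)$ factor attaches to the noise-squared $\sim ||f_{ref}||^2 d^2(\log(d/\delta))^2/(C^2\alpha_i(k)^2)$, whereas the $(1+1/a)$ factor on the sampling-squared term $\sim ||f_{ref}||^2\log(1/\delta)/B_i$ is absorbed into the single constant $w$; rearrangement then immediately yields the stated lower bound on $B_i$, with the $\max_k$ arising because the fixed-noise reduction is anchored at the target iteration index $k$ and the bound must hold for every such $k$.

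\textbf{Main obstacle.} The principal difficulty is keeping the scaling by $C$, $N$, $\rho$, and the various $B_i$'s straight through the entire chain so that $(B)$ and $(C)$ combine into the precise algebraic expression $\tfrac{NC(\tau-\Delta_i^{new}(k))^2}{2||f_{ref}||^2}-(1+a)\tfrac{Nd^2(\log(d/\delta))^2}{C\alpha_i(k)^2}$ appearing in the denominator of the theorem. A secondary subtlety is that $f_{new}^{*}$ depends on the (random) noise, so the stability-based control of $(C)$ must be anchored to the deterministic $f_c^{*}$ before the tail bound on $||\epsilon_i(2k-1)||$ is invoked; otherwise the nesting of random quantities would inflate the $\log(1/\delta)$ factors and break the symmetry between the two terms in the denominator.
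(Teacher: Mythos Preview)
Your approach is plausible but takes a genuinely different route from the paper, and you should be aware that the algebra will not reproduce the stated denominator exactly.

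The paper does \emph{not} decompose through the unperturbed centralized minimizer $f_c^{*}$. Instead it decomposes $\mathcal{L}(f_{new}^{*})-\mathcal{L}(f_{ref})$ through \emph{node-local} quantities: the population minimizer $\widetilde{f}_i=\arg\min_f[C\mathcal{L}(f)+\tfrac{\rho}{2N}\|f\|^2]$, the local perturbed empirical minimizer $f_i^{privOpt}=\arg\min_f[O(f,D_i)+\epsilon_i(2k-1)^Tf]$, and the local unperturbed minimizer $f_i^{opt}$. The $(1+a)$ factor is \emph{not} produced by Young's inequality; it comes from the Shalev-Shwartz--Srebro result of \cite{NIPS2008_3400}, which directly gives $\widetilde{O}(f_i^{privOpt})-\widetilde{O}(\widetilde{f}_i)\leq (1+a)\bigl(O(f_i^{privOpt},D_i)-O(f_i^{opt},D_i)\bigr)+\mathcal{O}(C^2N\log(1/\delta)/(\rho B_i))$. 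The noise contribution then enters already squared, via $O(f_i^{privOpt},D_i)-O(f_i^{opt},D_i)\leq \|\epsilon_i\|\cdot\|f_i^{privOpt}-f_i^{opt}\|\leq \tfrac{N}{\rho}\|\epsilon_i\|^2$ and the Gamma tail bound, yielding exactly $\tfrac{Nd^2}{\rho\alpha_i(k)^2}(\log(d/\delta))^2$. Finally the paper tunes $\rho=\tfrac{NC(\tau-\Delta_i^{new}(k))}{\|f_{ref}\|^2}$ and solves for $B_i$, which is what produces the precise numerator and denominator in the statement. The gap between $f_{new}^{*}$ and the local $f_i^{privOpt}$ is handled by an informal ``$\nu$ small'' assumption, analogous to the non-private case.

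Your decomposition through $f_c^{*}$ is in some ways cleaner (it avoids the ad hoc $\nu$), but two concrete differences will prevent you from landing on the stated expression. First, your term $(B)$ is controlled by $\|\sum_{j}\epsilon_j(2k-1)\|$, i.e.\ \emph{all} nodes' noises, whereas the paper's local route involves only $\|\epsilon_i(2k-1)\|$; after the union bound and the choice of $\rho$ this alters the $N,C,\|f_{ref}\|$ dependence of the noise-squared term. Second, your mechanism for the $(1+a)$ (Young on $u+v$) and your stability/Hoeffding control of $(C)$ give a sampling term scaling like $\|f_{ref}\|\sqrt{\log(1/\delta)/B_i}$, whereas the paper's \cite{NIPS2008_3400} lemma gives $CN\log(1/\delta)/(\rho B_i)$ directly on the regularized-risk scale; after tuning $\rho$ these do not coincide. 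In short, your proposal would yield a valid sample-complexity bound of the same qualitative shape, but to match the theorem verbatim you need the paper's local decomposition together with the \cite{NIPS2008_3400} lemma rather than Young's inequality.
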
	
\begin{proof}
See Appendix \ref{App_5}.
\end{proof}
Compared to Theorem \ref{thm:sample}, we see an additional term imposed by the privacy constraints, i.e., $(1+a) \frac{Nd^2}{C(\alpha_i(k))^2}(\log(d/\delta))^2$. If $\alpha_i(k)\rightarrow\infty$, the result reduces to $B_i\geq w\max_k\{\frac{2||f_{ref}||^2\log(1/\delta)}{(\tau-\Delta_i^{new}(k))^2}\}$, the same as given in Theorem \ref{thm:sample}. The additional term shows that the higher dimension of features, the more injected noise, which would require more samples to achieve the same accuracy.  
}

\section{Discussion}\label{sec:discussion}

\subsection{Improving privacy-accuracy tradeoff}

\rev{We now provide some intuitive explanation as to why the ideas presented in this paper work.  We explored two key ideas to improve the privacy-accuracy tradeoff of a differentially private algorithm.  The first is to accomplish the computational task by repeatedly using the already released differentially private outputs. Utilizing differential privacy's immunity to post-processing, this information recycling incurs no additional privacy loss. Since less information is revealed during computation, less perturbation is required to obtain the same privacy guarantee, which then improves the privacy-accuracy tradeoff.  The second idea is to improve the the stability/robustness of the algorithm by directly controlling the penalty parameter. This allows the algorithm to accommodate more noise to improve privacy without sacrificing too much accuracy, which improves the privacy-accuracy tradeoff. }\respone{R1.1} 


\subsection{Other perturbation methods and privacy analysis tools}

\rev{While we have primarily used objective perturbation to make an algorithm differentially private and to calculate the privacy loss, it should be noted that this is done as an example to illustrate how MR-ADMM can outperform both R-ADMM and ADMM in the privacy-accuracy tradeoff.  Other perturbation methods such as output perturbation to achieve differential privacy (each node perturbs its primal variable before broadcasting to its neighbors) can be used as well; our conclusion would still hold. This is because our key ideas (revealing less information and making the algorithm more robust/stable to noise via the penalty parameter) are orthogonal to the choice of the perturbation method.}\respone{R1.3} 


\rev{Similarly, in our privacy analysis we have adopted the notion of pure $\varepsilon$-differential privacy to measure privacy. As a result, the bound on the total privacy loss can be fairly large.  It is also possible to adopt a weaker notion, the $(\varepsilon,\delta)$-differential privacy, to find a tighter bound on privacy loss by allowing the algorithm to violate $\varepsilon$-differential privacy with a small probability $\delta$. In this case, the total privacy loss can be calculated using more advanced composition theorems such as moments accountant \cite{abadi2016deep} and zero-concentrated differential privacy \cite{bun2016concentrated}. However, our key ideas (revealing less information and making the algorithm more robust/stable to noise via the penalty parameter) are orthogonal to the choice of the privacy definition and analysis tools used; thus the algorithmic properties will not be affected by such choices and the conclusion remains valid.}\resptwo{R2.4}\respone{R1.6}

\subsection{Privacy analysis for a broader class of optimizations}

\rev{In Section \ref{sec:pradmm}, the privacy property of the private MR-ADMM is analyzed for the ERM binary classification problem. This is so that we can easily compare with ADMM and M-ADMM in \cite{zhang2017,xueru}. 
%
This privacy analysis can be extended to more general forms of $O(f_i,D_i)$, such as multi-class settings. There have been extensive studies on the differentially private ERM with convex loss function \cite{wang2017differentially}, which can also be adopted for our framework.}\resptwo{R2.2}\respthree{R3.3}

\begin{figure}[h]
	\centering   
	\subfigure[R-ADMM: $\eta = 0.5$]{\label{fig1:a}\includegraphics[trim={2.7cm 0 3cm 0cm},clip=true, width=0.49\textwidth]{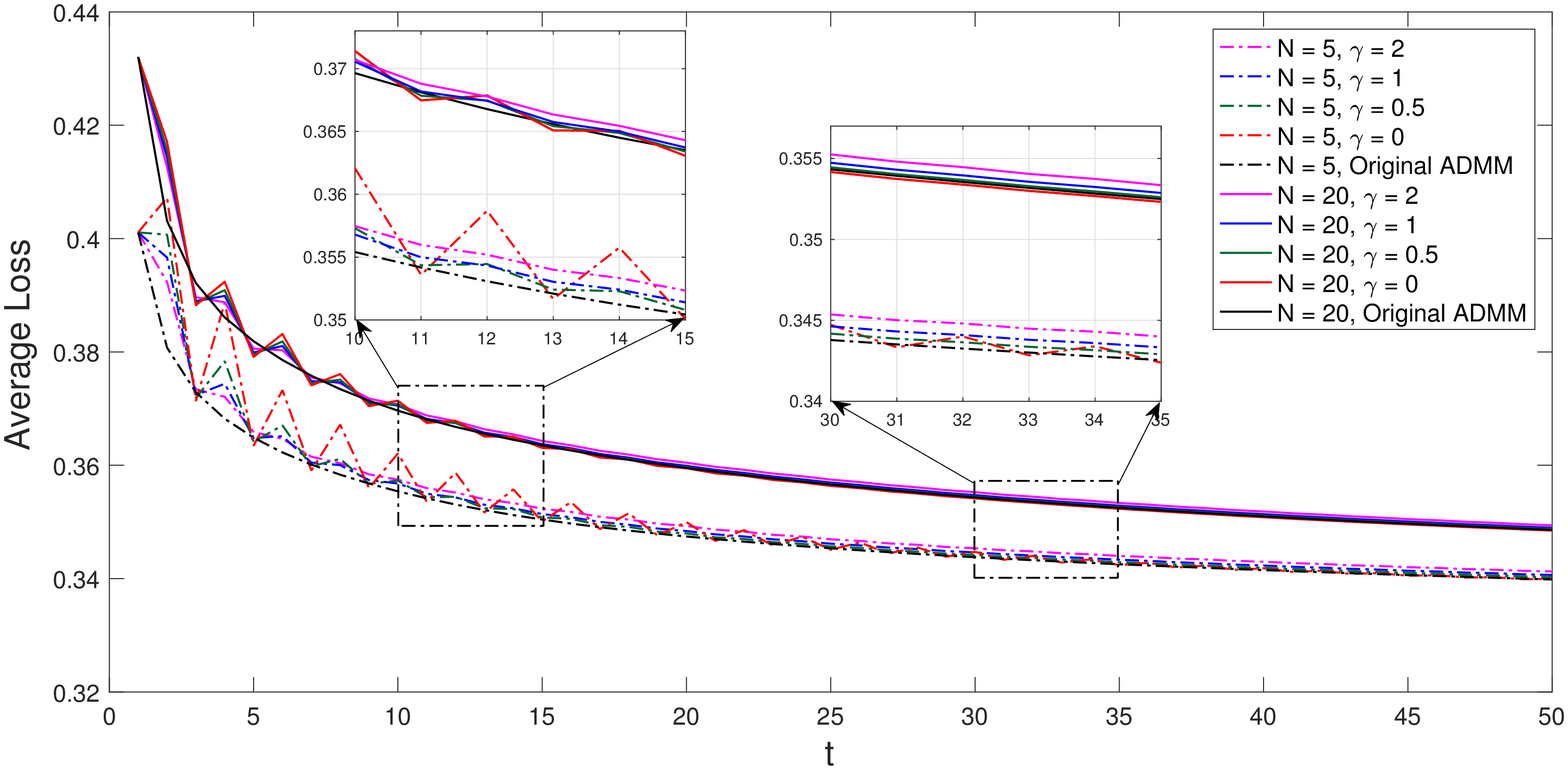}}
	\subfigure[$\eta_i(2k-1) = \hat{\eta}_iq_1(i)^k$]{\label{fig1:c}\includegraphics[trim={0.2cm 0 1.9cm 0cm},clip=true,width=0.24\textwidth]{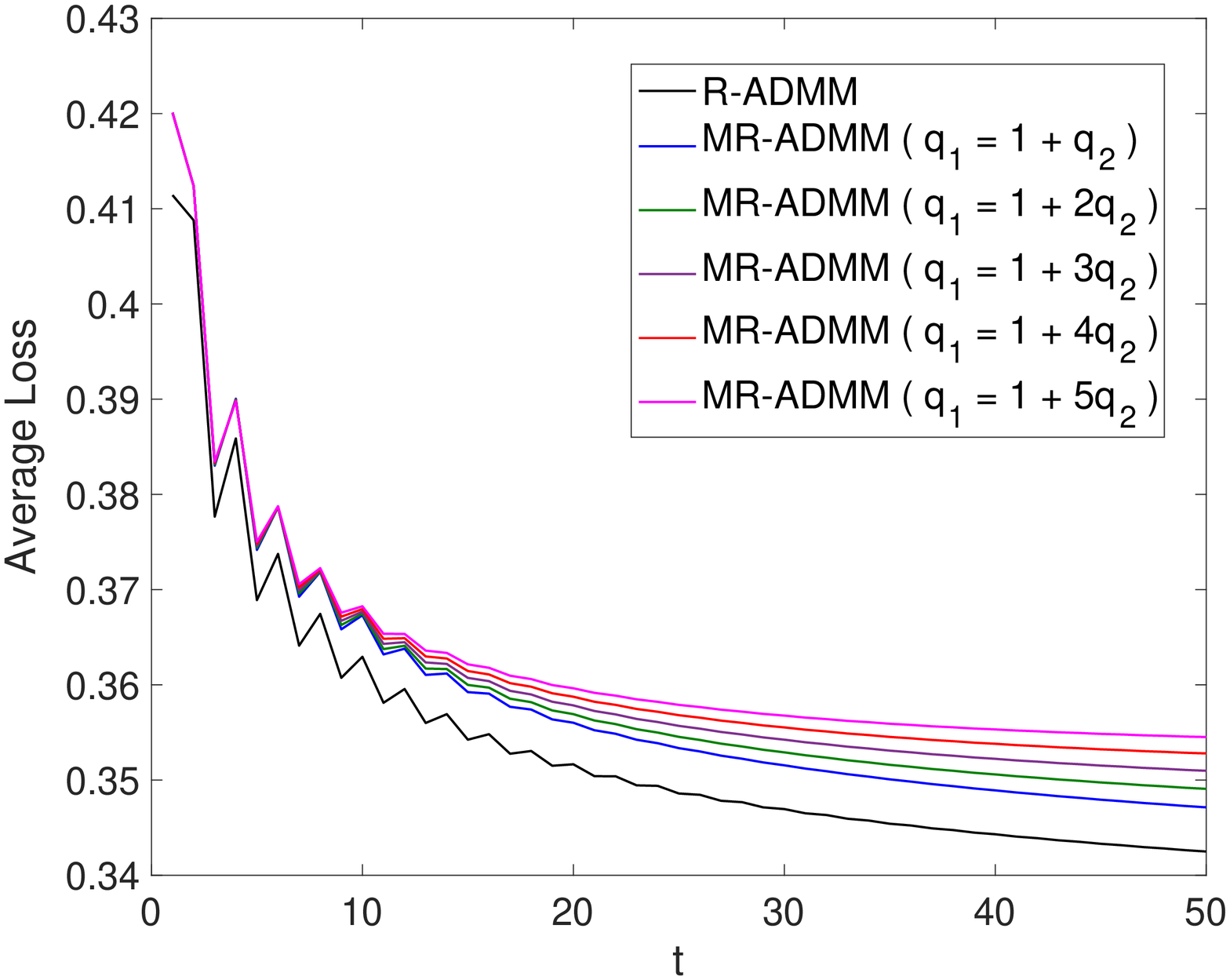}}
	\subfigure[$\eta_i(2k-1) = q_1^k$]{\label{fig1:b}\includegraphics[trim={0.2cm 0 1.9cm 0cm},clip=true,width=0.24\textwidth]{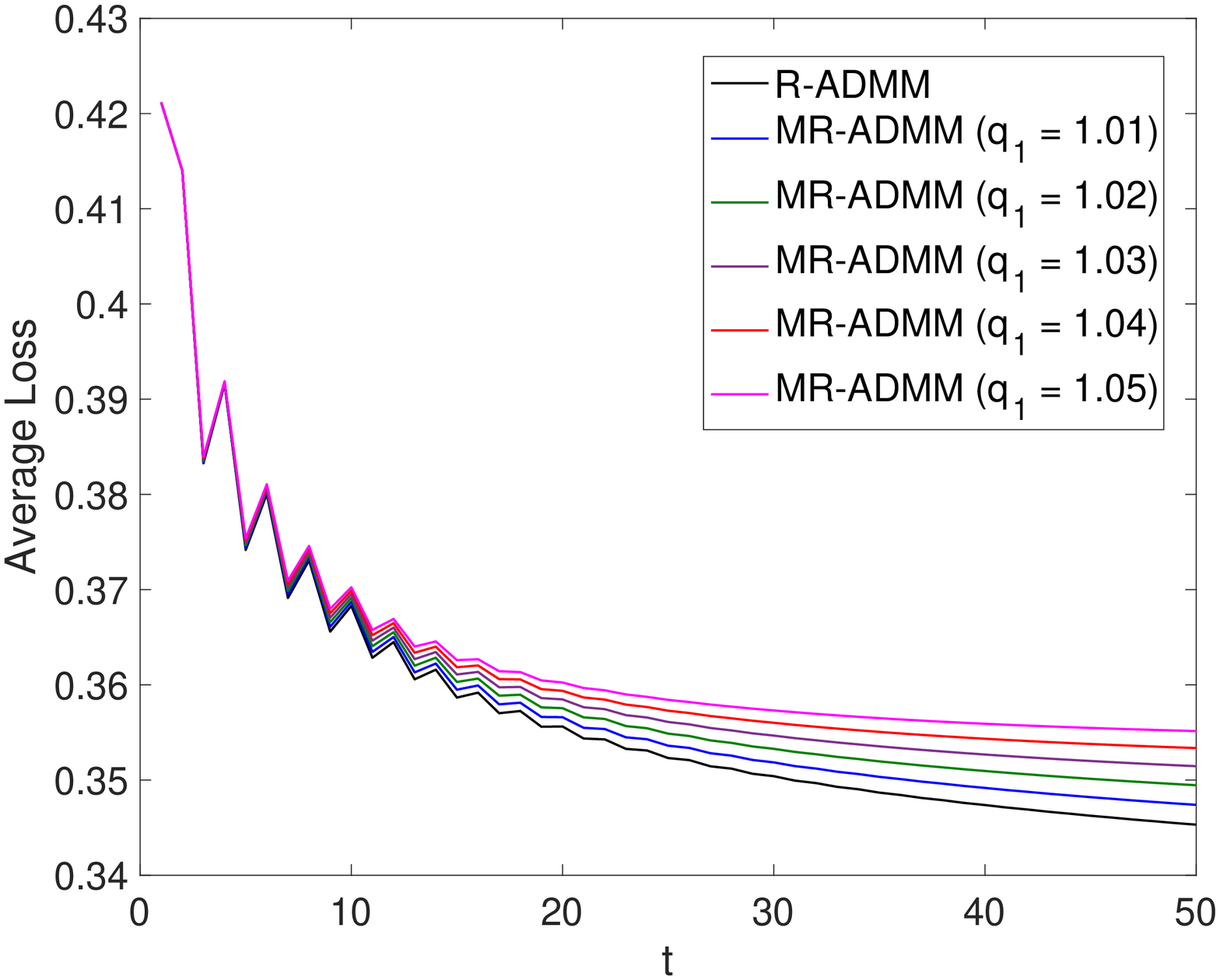}}
	\caption{Convergence properties of R-ADMM and MR-ADMM: Fig. \ref{fig1:a} illustrates the average loss over iterations of R-ADMM for the network of different sizes under fixed $\eta=0.5$ and different $\gamma$. Dashed (resp. solid) curves represent the performance over a randomly generated small (resp. large) network with $N=5$ (resp. $N=20$) nodes. 
		Fig. \ref{fig1:c}\ref{fig1:b} illustrate the average loss over iterations of MR-ADMM for a randomly generated network with $N=5$ nodes. Black curve represents the R-ADMM where $\eta_i(t) = \eta = 1$ is fixed for all nodes and all iterations. Each colored curve represents MR-ADMM with $\eta_i(2k-1)$ increasing over iterations at different speed. In Fig. \ref{fig1:c}, each node $i$ adopts $\eta_i(2k-1) = \eta_iq_1(i)^k$ as penalty parameter in $2k-1$-th iteration, where $[\eta_1,\cdots,\eta_5]= [1,1.03,1.02,0.8,1.01]$, $q_1 = [q_1(1),\cdots,q_1(5)] = \textbf{1}+kq_2$ (each $k\in \{1,\cdots, 5\}$ corresponds to one curve in plot) and $q_2 = [q_2(1),\cdots,q_2(5)] = [0.01,0.005,0.003,0.015,0.01]$. In Fig. \ref{fig1:b}, each node adopts the same penalty parameter $\eta_i(2k-1) = q_1^k$ in odd iterations.    }
	\label{fig1}
\end{figure}
\begin{figure}[h]
	\centering   
	\includegraphics[trim={0cm 0 0cm 0cm},clip=false, width=0.35\textwidth]{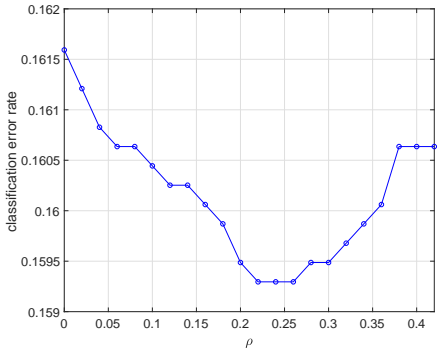}
	\caption{The effect of $\rho$, fixing $C = 1750$.}
	\label{fig:rho}
\end{figure}

\begin{figure}[h]
	\centering   
	\subfigure[Accuracy comparison for different $\gamma$ ($\alpha=1$)]{\label{fig:gamma1}\includegraphics[trim={1cm 0 1cm 0cm},clip=true, width=0.5\textwidth]{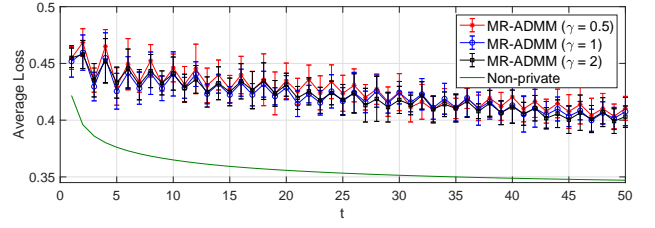}}
	\subfigure[Accuracy comparison for different $\gamma$ ($\alpha=2$)]{\label{fig:gamma2}\includegraphics[trim={1cm 0 1cm 0cm},clip=false, width=0.5\textwidth]{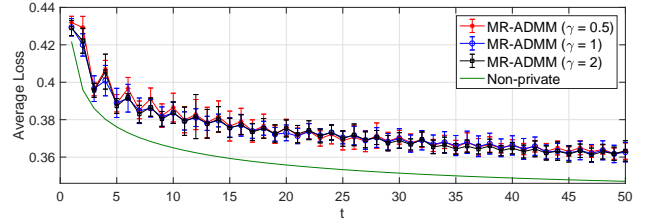}}
	\subfigure[Classification error rate comparison]{\label{fig:gamma3}\includegraphics[trim={0cm 2cm 0cm 0.6cm},clip=true, width=0.35\textwidth]{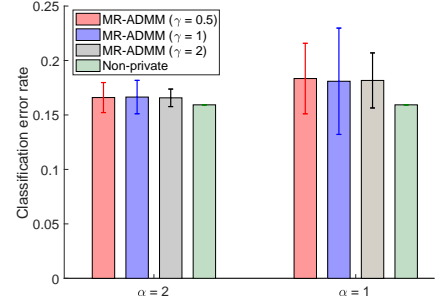}}
	\caption{\rev{The effect of $\gamma$ on the performance of MR-ADMM, fixing $\eta_i(2k-1) = 1.01^k$: in Fig. \ref{fig:gamma1}\ref{fig:gamma2}, green curves represent the non-private conventional ADMM while other curves represent the private MR-ADMM with different $\gamma$ and each of them illustrates the overall result summarized from 10 independent runs of experiments under the same parameter. The corresponding classification error rates are shown in Fig. \ref{fig:gamma3}. It shows that varying $\gamma$ within a certain range doesn't effect the performance significantly. }}
	\label{fig:gamma}
\end{figure}
\begin{figure}[h]
	\centering   
	\subfigure[Accuracy comparison for different $\eta(2k-1)$ ($\alpha=2$)]{\label{fig:eta1}\includegraphics[trim={2.8cm 0 2.7cm 0.2cm},clip=true, width=0.5\textwidth]{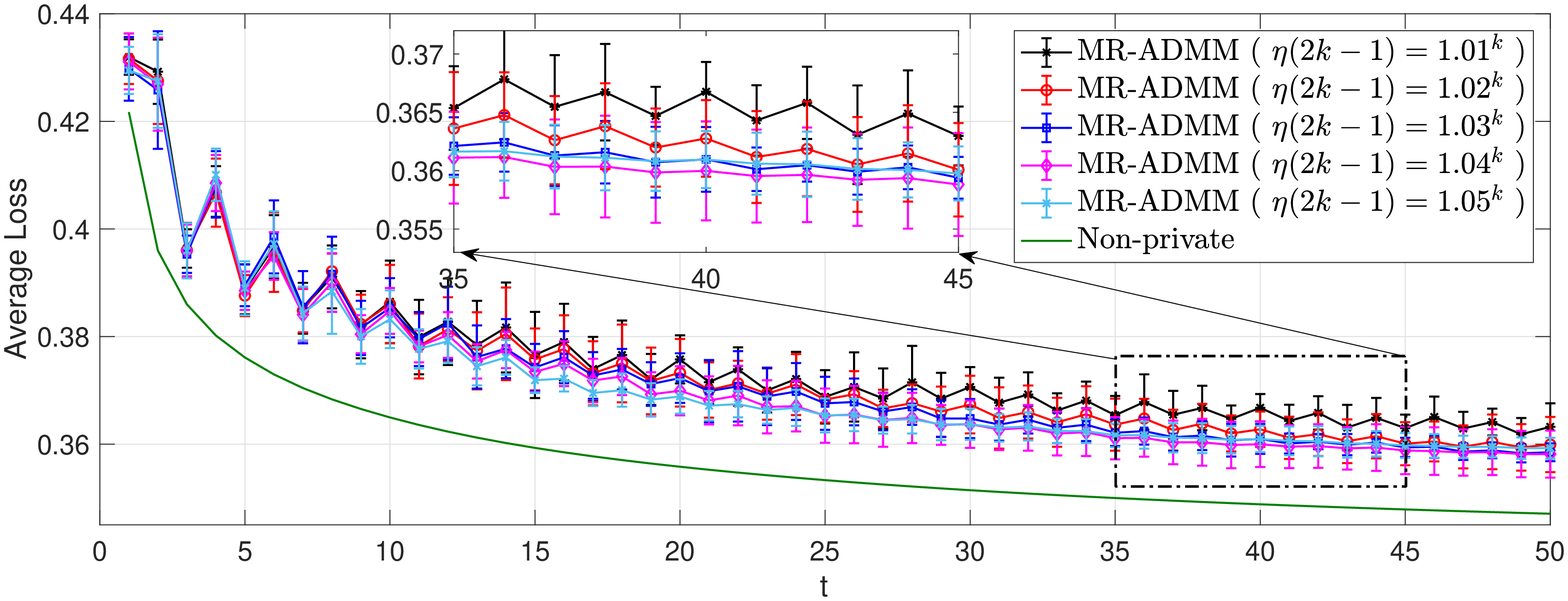}}
	\subfigure[Accuracy comparison for different $\eta(2k-1)$ ($\alpha=1$)]{\label{fig:eta2}\includegraphics[trim={2.8cm 0 2.7cm 0.2cm},clip=true, width=0.5\textwidth]{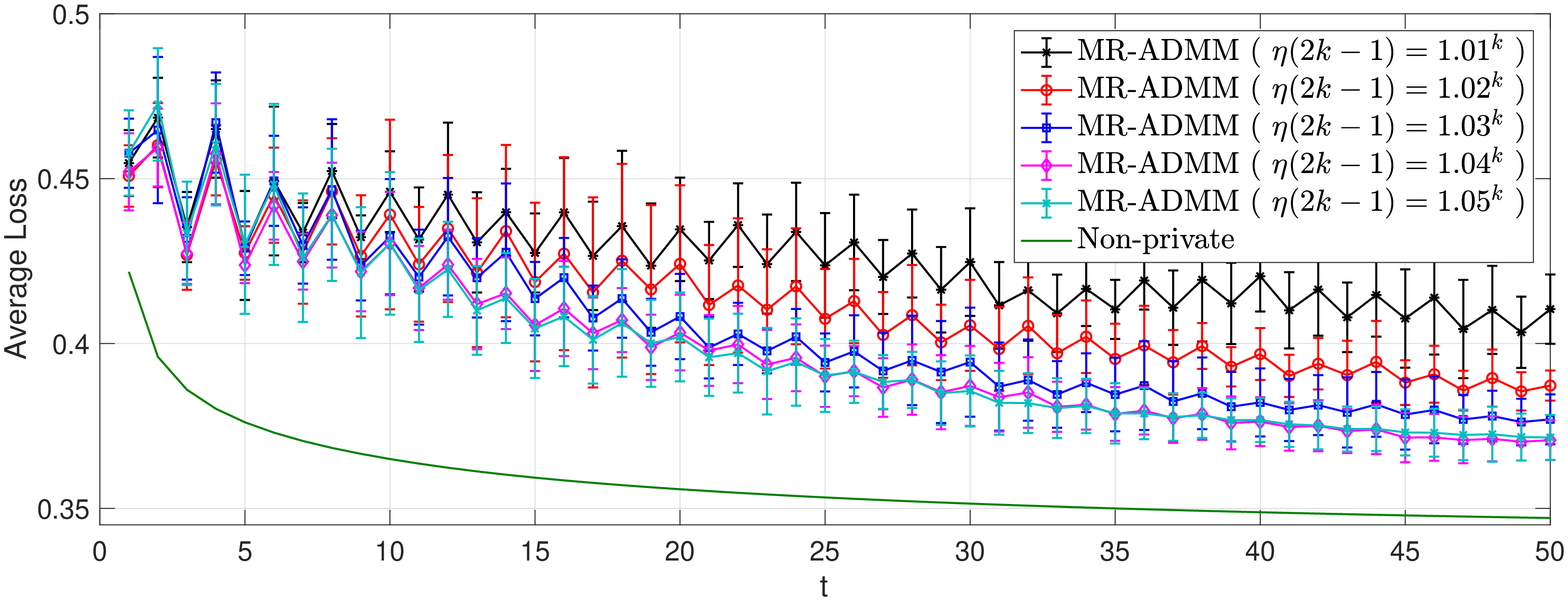}}
	\subfigure[Privacy comparison ($\alpha=2$)]{\label{fig:eta3}\includegraphics[trim={1cm 0 1.8cm 0cm},clip=true, width=0.24\textwidth]{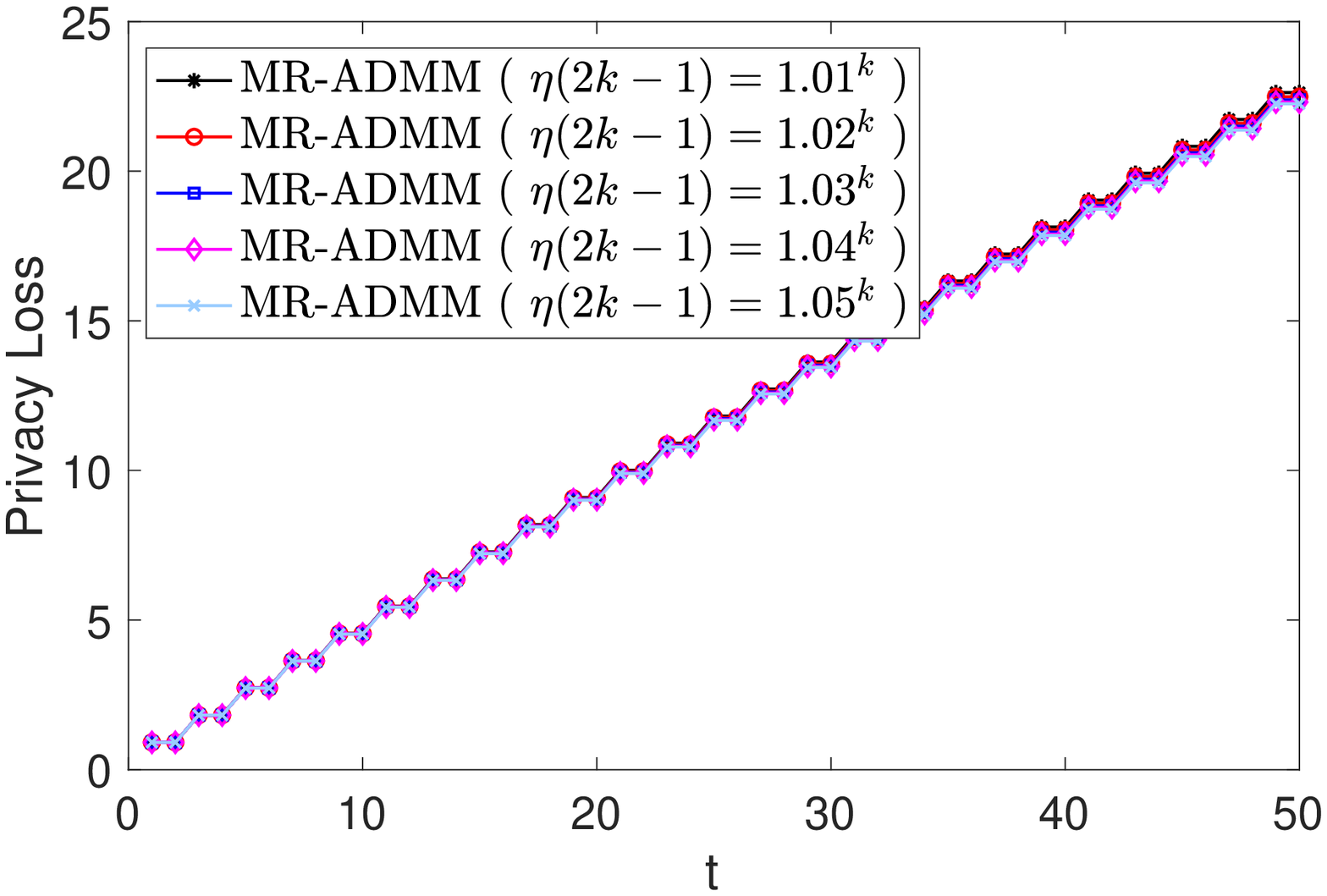}}
	\subfigure[Privacy comparison ($\alpha=1$)]{\label{fig:eta4}\includegraphics[trim={0.7cm 0 1.8cm 0cm},clip=true, width=0.24\textwidth]{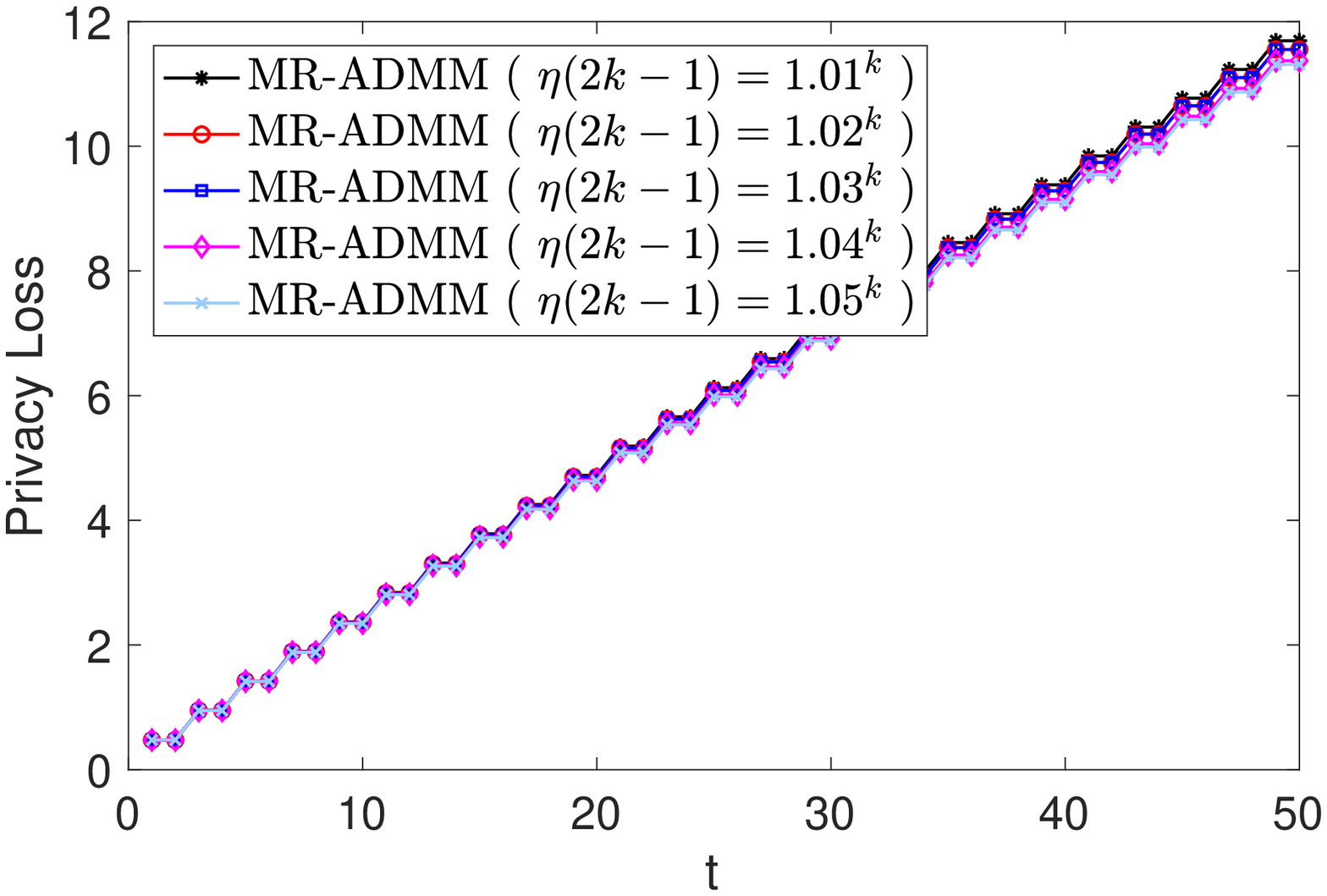}}
	\subfigure[Classification error rate comparison]{\label{fig:eta5}\includegraphics[trim={0cm 0.6cm 0cm 0.9cm},clip=true, width=0.35\textwidth]{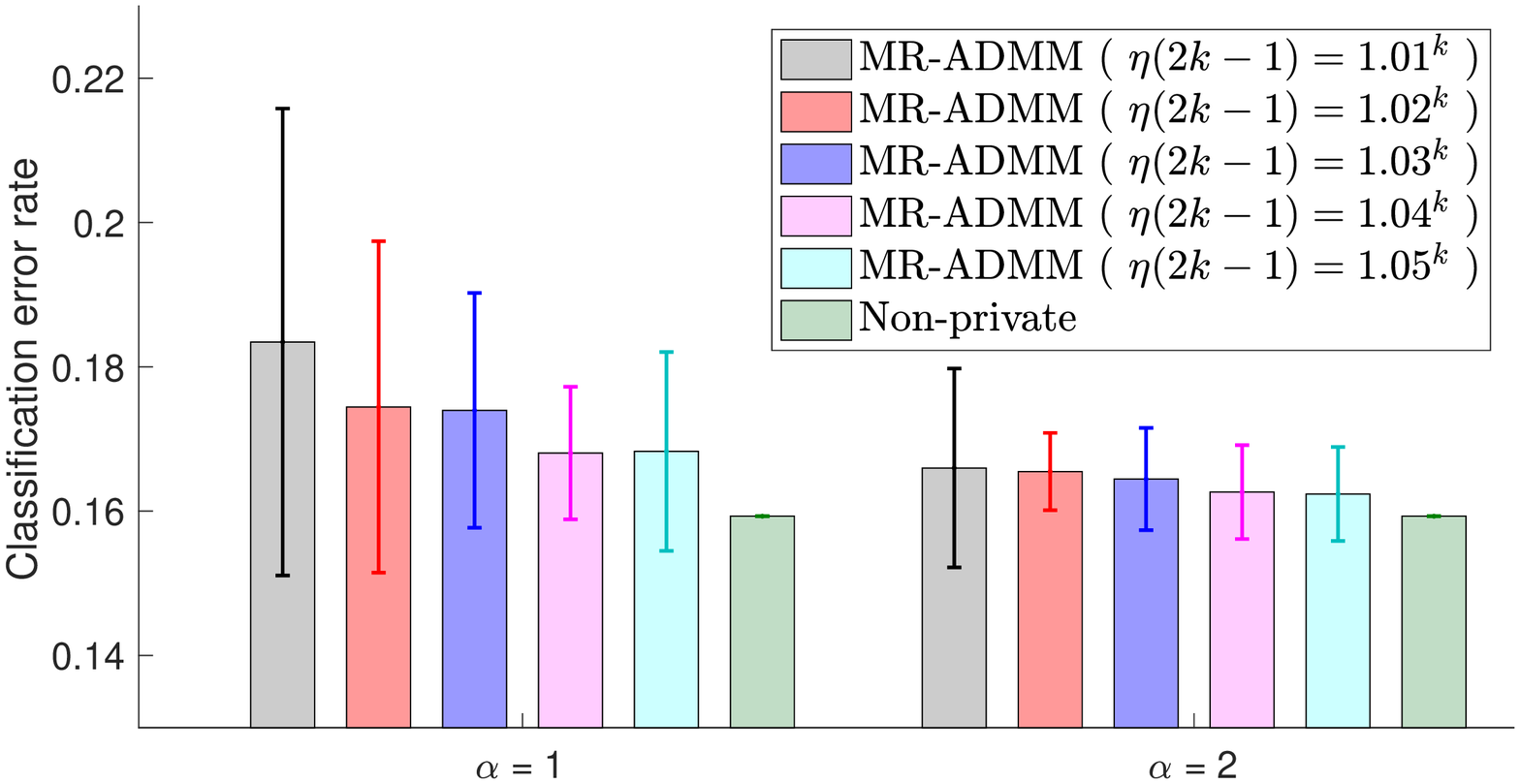}}
	\caption{\rev{The effect of $\eta_i(2k-1)$ on the performance of MR-ADMM, fixing $\gamma=0.5$: in Fig. \ref{fig:eta1}\ref{fig:eta2}, green curves represent the non-private conventional ADMM while other curves represent the private MR-ADMM with different $\eta_i(2k-1) = q_1^k$ ($q_1=1.01,1.02,1.03,1.04,1.05$) and each of them illustrates the overall result summarized from 10 independent runs of experiments under the same parameter. Fig. \ref{fig:eta3}\ref{fig:eta4} illustrate the upper bound of their privacy loss and the corresponding classification error rates are shown in Fig. \ref{fig:eta5}. 
	} }
	\label{fig:eta}
\end{figure}
\begin{figure}
	\centering   
	\subfigure[Accuracy comparison ($\alpha=2$)]{\label{fig:L2}\includegraphics[trim={3cm 0 3cm 0.6cm},clip=true, width=0.48\textwidth]{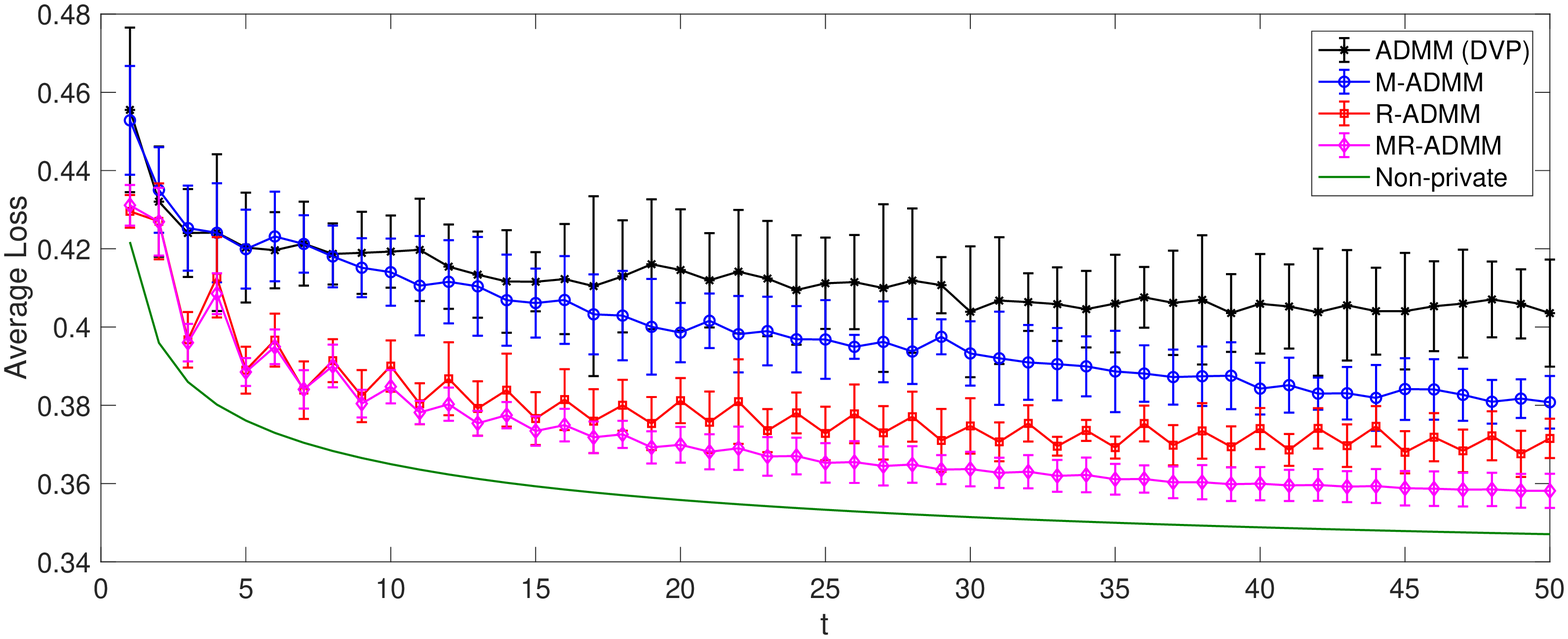}}
	\subfigure[Accuracy comparison ($\alpha=1$)]{\label{fig:L1}\includegraphics[trim={3cm 0 3cm 0.6cm},clip=true, width=0.48\textwidth]{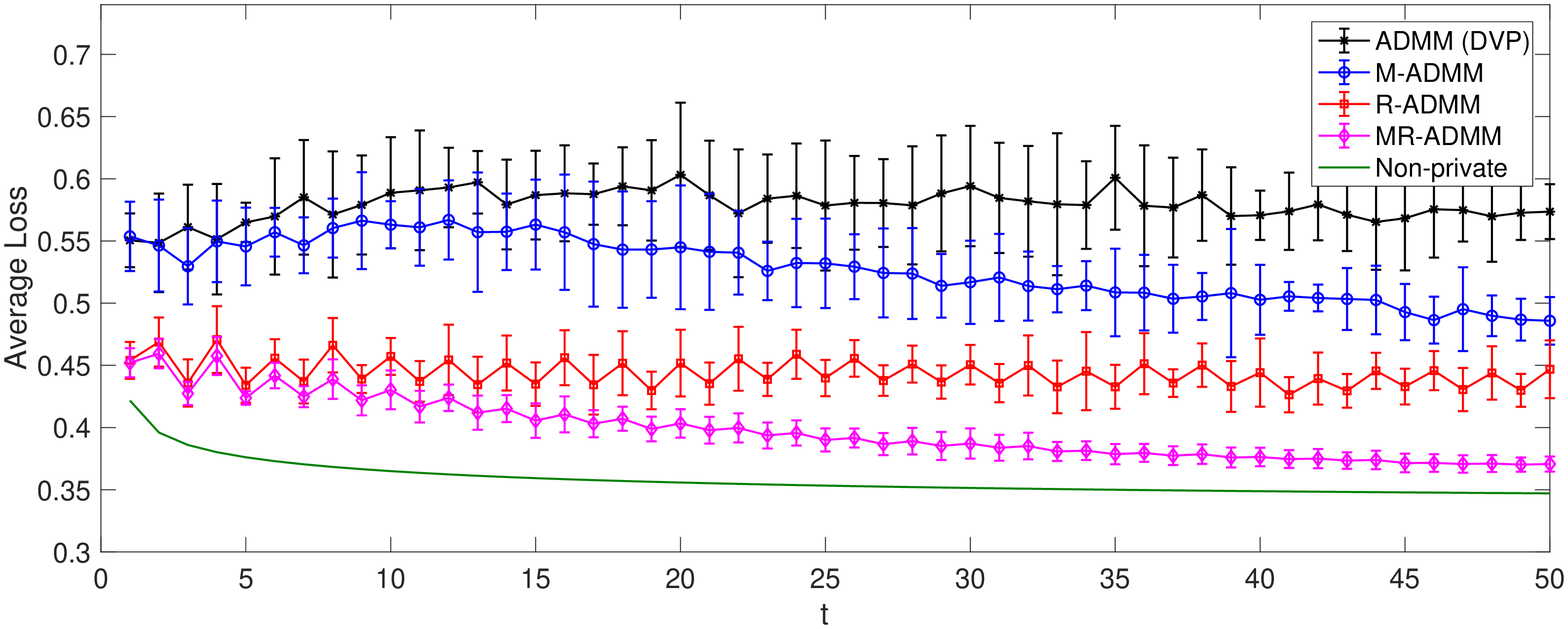}}
	\subfigure[Accuracy comparison ($\alpha=0.5$)]{\label{fig:L05}\includegraphics[trim={3cm 0 3cm 0.6cm},clip=true, width=0.48\textwidth]{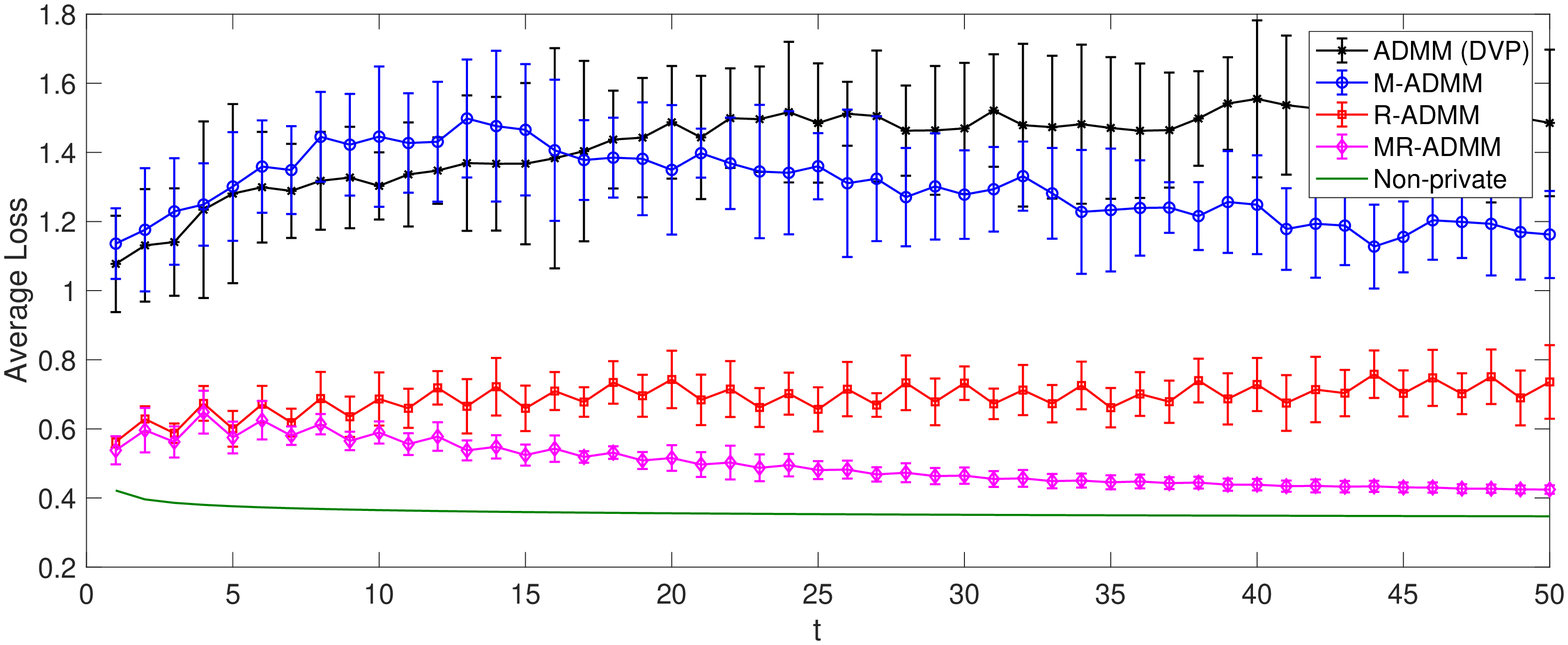}}
	
	\subfigure[ $\alpha=2$]{\label{fig:PL2}\includegraphics[trim={0cm 0 1.1cm 0.7cm},clip=true, width=0.15\textwidth]{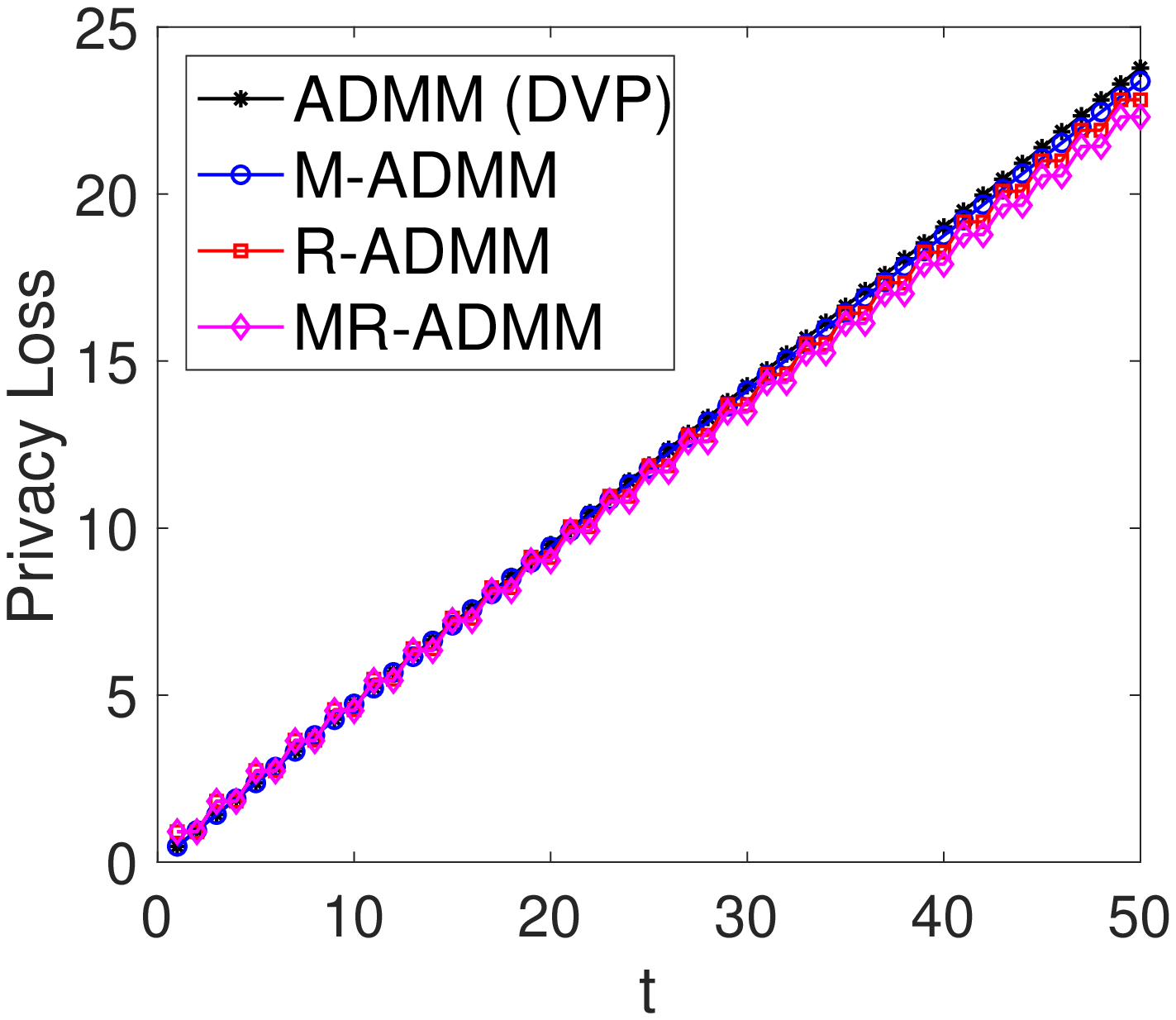}}
	\subfigure[$\alpha=1$]{\label{fig:PL1}\includegraphics[trim={0cm 0 1.1cm 0.7cm},clip=true, width=0.15\textwidth]{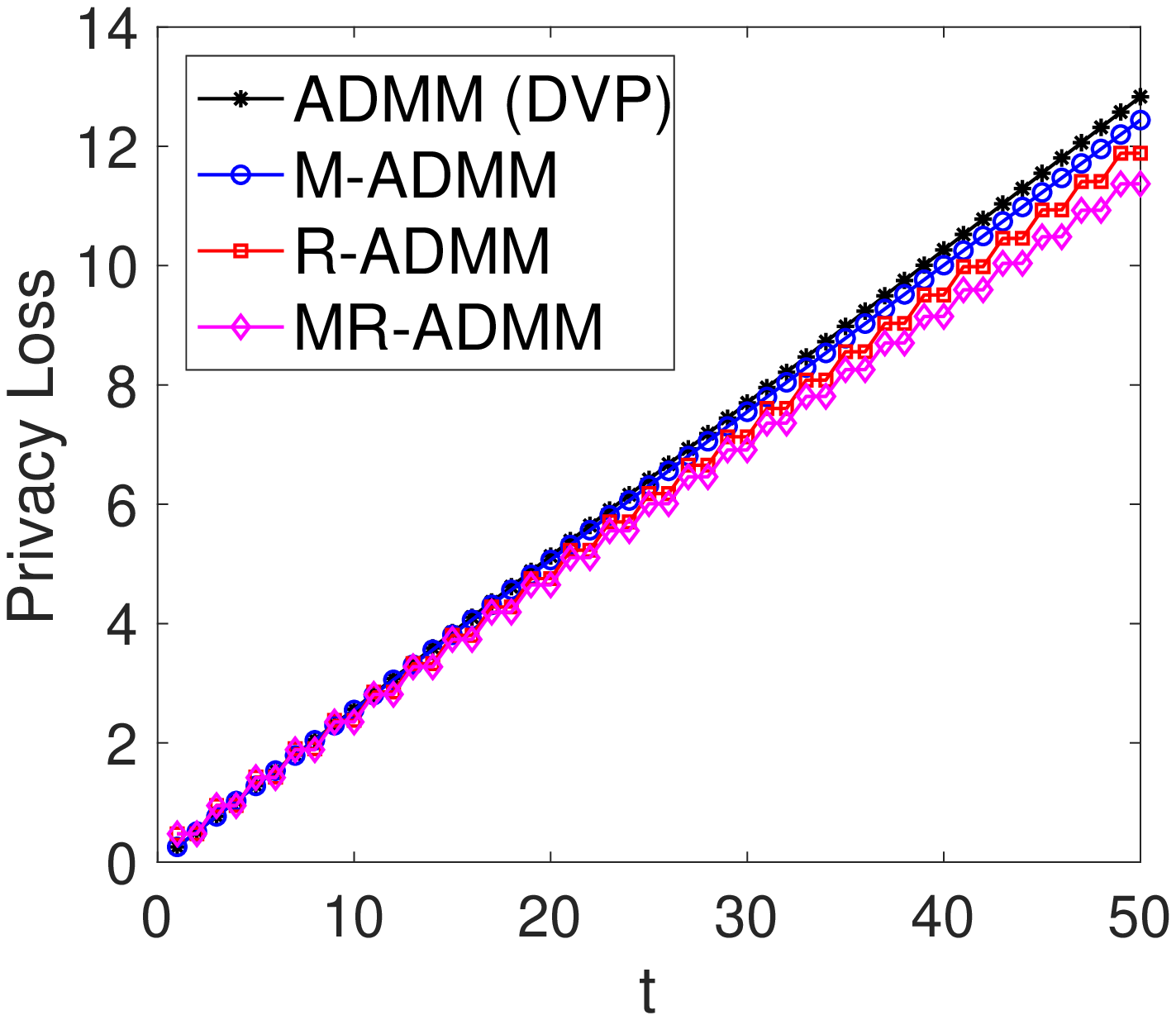}}
	\subfigure[$\alpha=0.5$]{\label{fig:PL05}\includegraphics[trim={0cm 0cm 1.1cm 0.7cm},clip=true, width=0.15\textwidth]{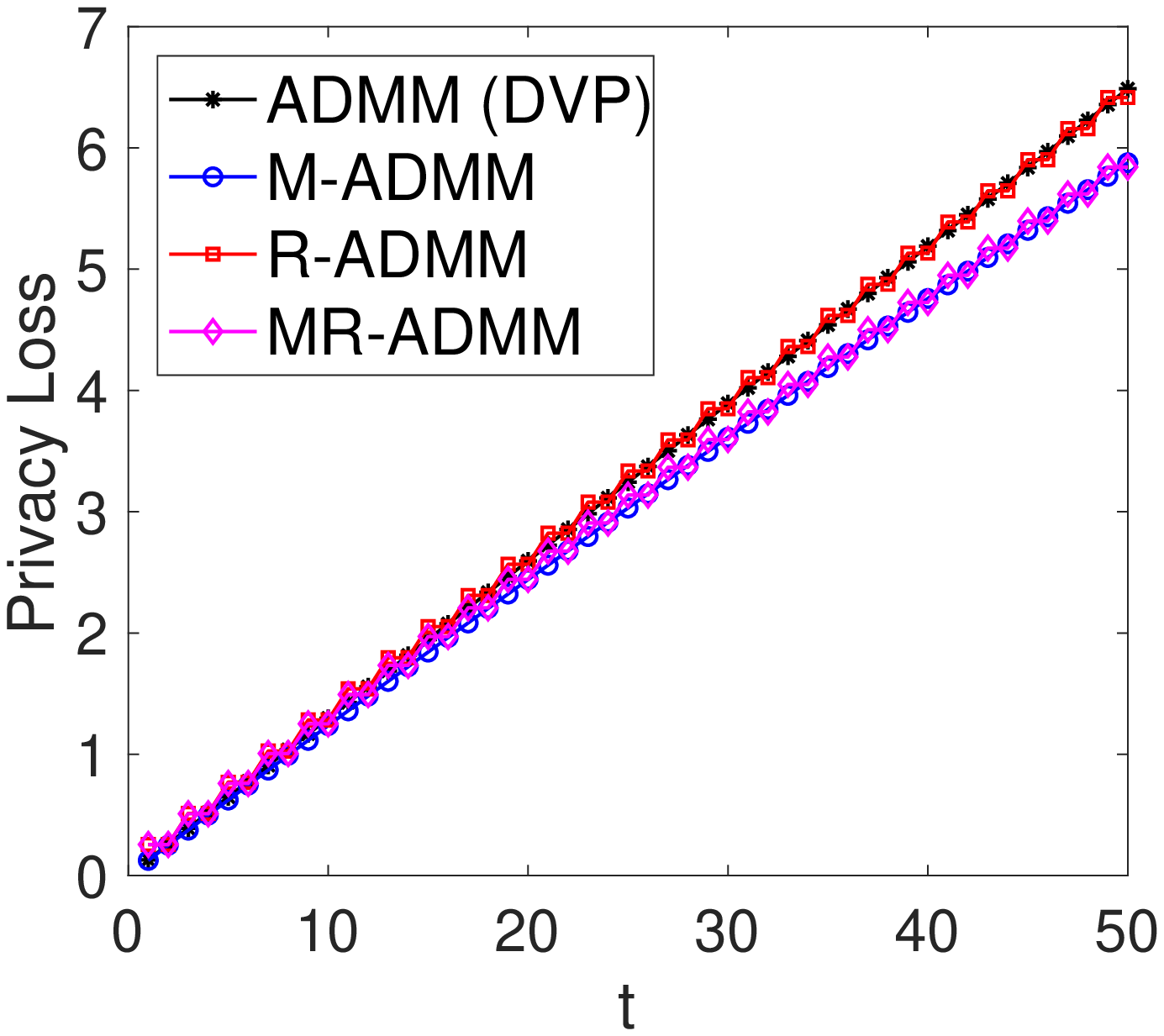}}
	
	\subfigure[Classification error rate comparison]{\label{fig:allRate}\includegraphics[trim={0cm 0.6cm 0cm 1.4cm},clip=true, width=0.36\textwidth]{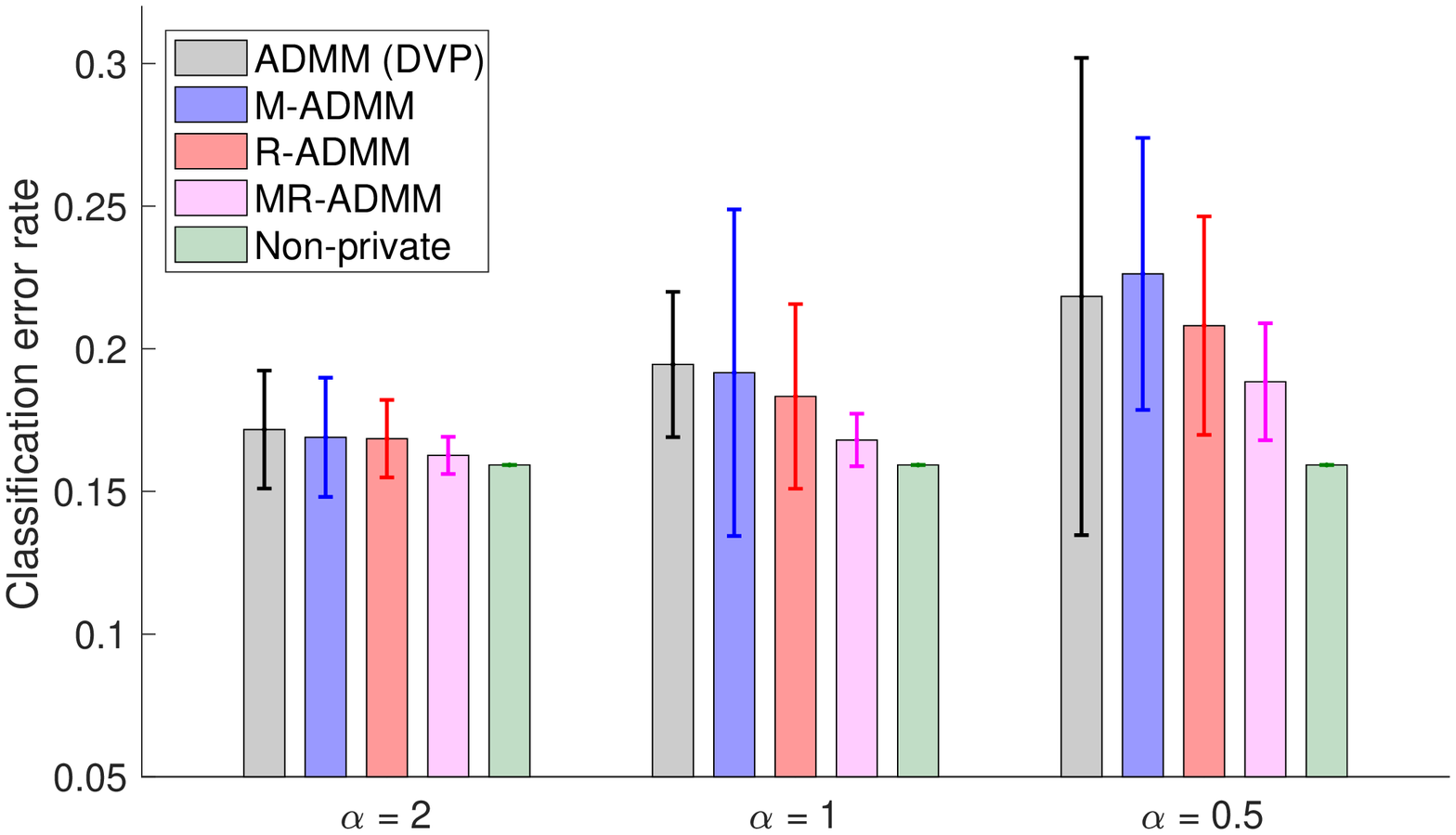}}
	\caption{\rev{Performance comparison: in Fig. \ref{fig:L2}\ref{fig:L1}\ref{fig:L05}, green curves represent the non-private conventional ADMM while other curves represent different private algorithms and each of them illustrates the overall result summarized from 10 independent runs of experiments under the same parameter. M-ADMM (blue) and MR-ADMM (magenta) adopt the varied penalty parameter while ADMM (black) and R-ADMM (red) adopt the fixed $\eta_i(t) = \eta=1$. Fig. \ref{fig:PL2}\ref{fig:PL1}\ref{fig:PL05} illustrate the upper bound of their privacy loss and the corresponding classification error rates are shown in Fig. \ref{fig:allRate}. 
	}}
	\label{fig:compare}
\end{figure}

\section{Numerical Experiments}\label{sec:numerical} 
We use the \textit{Adult} dataset from the UCI Machine Learning Repository \cite{Lichman2013}. It consists of personal information of around 48,842 individuals, including age, sex, race, education, occupation, income, etc. The goal is to predict whether the annual income of an individual is above \$50,000. 

Following the same pre-processing steps as in \cite{xueru}, the final data includes 45,223 individuals, each represented as a 105-dimensional vector of norm at most 1. \rev{We then randomly partition this sample set into a training set (40,000 samples) and a testing set (5,223 samples).  The training samples are then evenly distributed across nodes in a network.}

We use as loss function the logistic loss $\mathscr{L}(z) = \log(1+\exp(-z))$, with $|\mathscr{L}'|\leq 1 $ and $\mathscr{L}'' \leq c_1 = \frac{1}{4}$. 
The regularizer is $R(f_i) = \frac{1}{2}||f_i||_2^2$. 
We measure the accuracy of the algorithm by the average loss over the training set:$$L(t):=\frac{1}{N} \sum_{i=1}^{N}\frac{1}{B_i}\sum_{n=1}^{B_i}\mathscr{L}(y^n_if_i(t)^Tx^n_i), $$ 
\rev{and the classification error rate over the testing set $\mathcal{S}_{test}$:$$E = \frac{\sum_{(x_j,y_j)\in \mathcal{S}_{test}}\textbf{1}(y_j\neq\hat{y}_j)}{\sum_{(x_j,y_j)\in \mathcal{S}_{test}}1}, $$
where $\hat{y}_j$ is the prediction of sample $(x_j,y_j)$ by using the averaged classifier $\bar{f}(t) = \frac{1}{N}\sum_{i=1}^{N}f_i(t)$, and each $f_i(t)$ is the local classifier(primal variable) of node $i$ after $t$ iterations.}\respone{R1.5}\resptwo{R2.5}  

 We measure the privacy of an algorithm by the upper bound: $$P(t):=\underset{i \in \mathscr{N}}{\max}\{\sum_{k=1}^{K}\frac{2C}{B_i}(\frac{1.4c_1}{(\frac{\rho}{N}+2\eta_i(2k-1) V_i)} + \alpha_i(k))\}.$$
The smaller $L(t)$ and $P(t)$, the higher accuracy and stronger privacy guarantee.


\subsection{Convergence of non-private R-ADMM \& MR-ADMM}
Fig. \ref{fig1:a} shows the convergence of R-ADMM with different $\gamma$ and fixed $\eta=0.5$ for a small network ($N=5$) and a large network ($N=20$), both are randomly generated. Due to the linear approximation in even iterations, it's possible to cause an increased average loss as shown in the plot. However, the odd iterations will always compensate this increase; if we only look at the odd iterations, R-ADMM achieves a similar convergence rate as conventional ADMM. $\gamma$ can also be thought of as an extra penalty parameter for each node in even iterations to punish its update, i.e., the difference between $f_i(2k)$ and $f_i(2k-1)$.  Larger $\gamma$ can result in smaller oscillation between even and odd iterations but will also lower the convergence rate.

Fig. \ref{fig1:c}\ref{fig1:b} show the convergence of MR-ADMM with penalty parameters $\eta_i(2k-1)$ increasing at different speed. We see that increasing penalty slows down the convergence, and larger increase in $q_1(i)$ slows it down more. In  \ref{fig1:c}, each node adopts different penalty parameter $\eta_i(2k-1)$ in each iteration while in \ref{fig1:b}, the same penalty parameter is shared among all the nodes. The convergence is attained in both cases. 

\subsection{Private R-ADMM \& MR-ADMM}


\subsubsection{The effect of $\rho$, $\gamma$, $\eta_i(2k-1)$}
We next inspect the accuracy and privacy of the private R-ADMM and MR-ADMM (Algorithm \ref{A2}), and compare it with the private (conventional) ADMM using dual variable perturbation (DVP) \cite{zhang2017}, the private M-ADMM using penalty perturbation (PP) \cite{xueru}. 

\rev{To begin, we first examine the effect of $\rho$ in controlling overfitting. Fig. \ref{fig:rho} shows the classification error rate over the testing set under different $\rho$, where the classifiers are trained with original ADMM and the algorithm runs for 50 iterations. Since the classification error rate is minimized at $\rho\approx0.22$, we will use $\rho=0.22$ \respone{R1.4}\resptwo{R2.5}in the following experiments. } 

{For simplicity of presentation, in the next set of experiments the penalty $\eta_i(t) = \eta(t)$ in both M-ADMM and MR-ADMM and noise $\alpha_i(k) =\alpha, \forall i,k$. We observe similar results when $\alpha_i(t)$, $\eta_i(t)$ vary from node to node.}

For each parameter setting, we perform 10 independent runs of the algorithm, and record both the mean and the range of their accuracy.  Specifically, $L^l(t)$  denotes the average loss over the training dataset in the $t$-th iteration of the $l$-th experiment ($1\leq l \leq 10$). The mean of average loss is given by $L_{mean}(t) = \frac{1}{10}\sum_{l=1}^{10} L^l(t)$ and the range $L_{range}(t) = \underset{1\leq l \leq 10}{\max} L^l(t) - \underset{1\leq l \leq 10}{\min} L^l(t)$. 
The larger the range $L_{range}(t)$ the less stable the algorithm, i.e., under the same parameter setting, the difference in performances (convergence curves) of  two experiments is larger. In the next few plots, $L_{range}(t)$ is shown as the size of a vertical bar centered at $L_{mean}(t)$.  \rev{Similarly, let $E^l$ be the classification error rate over the testing set in the $l$-th experiment, with an average error rate $E_{mean} =  \frac{1}{10}\sum_{l=1}^{10}E^l$ and range $E_{range} = \underset{1\leq l \leq 10}{\max} E^l - \underset{1\leq l \leq 10}{\min} E^l$ shown as the size of a vertical bar centered at $E_{mean}$.} 
Each parameter setting also has a corresponding upper bound on the privacy loss denoted by $P(t)$.

\rev{In the non-private case, $\gamma$ controls the oscillation between even and odd iterations, as well as the convergence rate. We now examine its effect when MR-ADMM is perturbed. Fig. \ref{fig:gamma} shows the average loss over the training set (Fig. \ref{fig:gamma1}\ref{fig:gamma2}) and the classification error rate over the testing set (Fig. \ref{fig:gamma3}) under different $\gamma>0$, noting that the corresponding privacy loss of these cases are the same under the same $\alpha$. It shows that varying $\gamma$ (within a certain range) does not effect performance significantly.  For the next set of experiments, we fix $\gamma = 0.5$.\resptwo{R2.6} }

\rev{The effect of $\eta_i(2k-1)$ on the performance of private MR-ADMM is illustrated in Fig. \ref{fig:eta}, where the pair Fig. \ref{fig:eta1}, \ref{fig:eta3} is for the case when noise parameter is $\alpha = 2$ (low privacy requirement) and the pair Fig. \ref{fig:eta2}, \ref{fig:eta4} is for the case when $\alpha=1$ (high privacy requirement). Although increasing $\eta_i(2k-1)$ over time can decrease the convergence rate of non-private MR-ADMM (Fig. \ref{fig1:c}\ref{fig1:b}), it helps to stabilize the algorithm when MR-ADMM is perturbed and can improve the accuracy while maintain the privacy guarantee. Moreover, the improvement is more significant when algorithm is under higher perturbation (high privacy requirement) and when $\eta_i(2k-1)$ increases faster (within a range). } \resptwo{R2.6}

\subsubsection{Performance comparison among different algorithms}
\rev{Our last set of experiments is conducted to compare the performance of different algorithms with results illustrated in Fig. \ref{fig:compare}. The noise parameters of both MR-ADMM and R-ADMM are set as $\alpha$ shown in the plots, and the noise parameters of conventional ADMM and M-ADMM are chosen respectively such that they have approximately the same total privacy loss bounds. We set $\eta_i(2k-1) = 1.04^k$ in MR-ADMM. We see that both private R-ADMM (red) and private MR-ADMM (magenta) outperform private ADMM (black) and M-ADMM (blue) with higher accuracy and lower privacy loss. In particular, the private MR-ADMM (magenta) has the highest accuracy with the lowest privacy loss among all algorithms; the improvement is more significant with  smaller total privacy loss. This improvement is also illustrated by the classification error rate over the testing set in Fig. \ref{fig:allRate}.    }

\section{Conclusion}\label{sec:conclusion}
In this work, we presented Recycled ADMM (R-ADMM), a modified
version of ADMM that can improve the privacy-utility
tradeoff significantly with less computation. The idea is
to repeatedly use the existing computational results instead
of the original individuals' data to make updates. We also modify R-ADMM (MR-ADMM) by incorporating the idea from \cite{xueru} to further improve the privacy-utility tradeoff of R-ADMM. The idea is to stabilize algorithm by decreasing its step-size, i.e., increasing penalty parameters, over iterations. A sufficient condition for the convergence and the privacy analysis using objective perturbation of two algorithms are established. The experiments on real-world dataset also validate the algorithm.

\bibliographystyle{IEEEtran}
\bibliography{allerton2018_xueru}

\clearpage

\appendices
\begin{figure*}
	\normalsize
	\begin{eqnarray}\label{eq:thmC1}
	\big\langle \hat{f}(t+1)-\hat{f}^*,&-&W(t+1) (D+A) \tilde{D}(t)^{-1}(\nabla \hat{O}(\hat{f}(t),D_{all}) -\nabla \hat{O}(\hat{f}^*,D_{all}))\nonumber \\&+&(I+W(t+1)(D+A) \tilde{D}(t)^{-1})(2\Lambda^*-2\Lambda(t+1))\nonumber \\&+&W(t+1) (D+A) \tilde{D}(t)^{-1}(2\Lambda(t+1)-2\Lambda(t))-W(t+1)(D+A)(\hat{f}(t+1)-\hat{f}(t))\nonumber \\
	&-&W(t+1) (D+A) \tilde{D}^{-1} W(t)(D-A)\hat{f}(t) \big\rangle_F \geq 0 ~.
	\end{eqnarray}
	\hrulefill
	\begin{eqnarray}
&&	\big\langle \hat{f}(t+1)-\hat{f}^*,W(t+1)(D+A) \tilde{D}(t)^{-1}(2\Lambda(t+1)-2\Lambda(t)) -W(t+1) (D+A) \tilde{D}(t)^{-1}W(t)(D-A)\hat{f}(t) \big\rangle_F\nonumber\\
	&=&\big\langle \hat{f}(t+1)-\hat{f}^*,W(t+1)(D+A) \tilde{D}(t)^{-1} W(t)(D-A)(\hat{f}(t+1)-\hat{f}(t)) \big\rangle_F\nonumber\\
	&&+\big\langle \hat{f}(t+1)-\hat{f}^*,W(t+1)(D+A) \tilde{D}(t)^{-1} (W(t+1)-W(t))(D-A)(\hat{f}(t+1)-f^*) \big\rangle_F \nonumber\\&=& \frac{1}{2}||\hat{f}(t+1)-\hat{f}^*||^2_{G_1(t+1)} +  \frac{1}{2}||\hat{f}(t+1)-\hat{f}(t)||^2_{G_1(t+1)}- \frac{1}{2}||\hat{f}(t)-\hat{f}^*||^2_{G_1(t+1)}\nonumber\\
	&&+\big\langle \hat{f}(t+1)-\hat{f}^*,W(t+1)(D+A) \tilde{D}(t)^{-1} (W(t+1)-W(t))(D-A)(\hat{f}(t+1)-f^*) \big\rangle_F~;\label{eq:thmC2}\\
&&	\big\langle \hat{f}(t+1)-\hat{f}^*, (I+W(t+1)(D+A) \tilde{D}(t)^{-1})(2\Lambda^*-2\Lambda(t+1)) \big\rangle_F
	\nonumber \\&=&\big\langle  (W(t+1)(D-A))^{+}(2\Lambda(t+1)-2\Lambda(t)), (I+W(t+1)(D+A) \tilde{D}(t)^{-1})(2\Lambda^*-2\Lambda(t+1)) \big\rangle_F
\nonumber\\&=&\frac{1}{2}||2\Lambda^*-2\Lambda(t)||^2_{G_2(t+1)}-\frac{1}{2}||2\Lambda^*-2\Lambda(t+1)||^2_{G_2(t+1)} - \frac{1}{2}||2\Lambda(t+1)-2\Lambda(t)||^2_{G_2(t+1)}~; \label{eq:thmC3}\\
&&	\langle \hat{f}(t+1)-\hat{f}^*,-W(t+1)(D+A)(\hat{f}(t+1)-\hat{f}(t))\rangle_F\nonumber \\
	&=&\frac{1}{2}||\hat{f}(t)-\hat{f}^*||^2_{W(t+1)(D+A)}-\frac{1}{2}||\hat{f}(t+1)-\hat{f}^*||^2_{W(t+1)(D+A)}-\frac{1}{2}||\hat{f}(t)-\hat{f}(t+1)||^2_{W(t+1)(D+A)}~. \label{eq:thmC4}
	\end{eqnarray}
	\hrulefill
	\begin{eqnarray}\label{eq:thmC5}
&&	\langle \hat{f}(t+1)-\hat{f}^*,-W(t+1)(D+A) \tilde{D}(t)^{-1}(\nabla \hat{O}(\hat{f}(t),D_{all}) -\nabla \hat{O}(\hat{f}^*,D_{all}))\rangle_F
	\nonumber \\&=&\langle \hat{f}(t+1)-\hat{f}(t)+\hat{f}(t) -\hat{f}^*,\nonumber -W(t+1)(D+A) \tilde{D}(t)^{-1}(\nabla \hat{O}(\hat{f}(t),D_{all}) -\nabla \hat{O}(\hat{f}^*,D_{all}))\rangle_F
	\nonumber \\ &\leq& \langle \hat{f}(t)-\hat{f}(t+1),W(t+1) (D+A) \tilde{D}(t)^{-1}(\nabla \hat{O}(\hat{f}(t),D_{all}) -\nabla \hat{O}(\hat{f}^*,D_{all}))\rangle_F \nonumber 
	\nonumber \\&= & \langle W(t+1) (D+A) \sqrt{\tilde{D}(t)^{-1}}(\hat{f}(t)-\hat{f}(t+1)),\sqrt{\tilde{D}(t)^{-1}}(\nabla \hat{O}(\hat{f}(t),D_{all}) -\nabla \hat{O}(\hat{f}^*,D_{all}))\rangle_F~.
	\end{eqnarray}
	\hrulefill
	\begin{eqnarray}\label{eq:thmC8}
	\eqref{eq:thmC5} &\leq& \frac{1}{L}||(\hat{f}(t)-\hat{f}(t+1))||^2_{W(t+1) (D+A) \tilde{D}(t)^{-1}W(t+1) (D+A)} \nonumber \\&+& \frac{L}{4\sigma_{\min}(\tilde{D}(t))} ( \mu||\hat{f}^*-\hat{f}(t+1)||^2_{D_M}+\frac{\mu}{\mu-1}||\hat{f}(t+1)-\hat{f}(t)||^2_{D_M})\nonumber \\
	&=& \frac{1}{2}||(\hat{f}(t)-\hat{f}(t+1))||^2_{\frac{2}{L}W(t+1) (D+A) \tilde{D}(t)^{-1}W(t+1) (D+A)+\frac{L\mu}{2\sigma_{\min}(\tilde{D}(t))(\mu-1)}D_M} \nonumber \\&+& \frac{1}{2}||2\Lambda(t+1)-2\Lambda(t)||^2_{\frac{L\mu}{2\sigma_{\min}(\tilde{D}(t))}((W(t+1)(D-A))^{+})^2D_M} 
	\end{eqnarray}
	\hrulefill
	\begin{eqnarray}\label{eq:thmC9}
	&&\frac{1}{2}||\hat{f}(t)-\hat{f}(t+1)||^2_{W(t+1)(D+A)-G_1(t+1)}
	- \frac{1}{2}||(\hat{f}(t)-\hat{f}(t+1))||^2_{\frac{2}{L}W(t+1) (D+A) \tilde{D}(t)^{-1}W(t+1) (D+A)+\frac{L\mu}{2\sigma_{\min}(\tilde{D}(t))(\mu-1)}D_M} \nonumber \\&+ & \frac{1}{2}||2\Lambda(t+1)-2\Lambda(t)||^2_{G_2(t+1)}- \frac{1}{2}||2\Lambda(t+1)-2\Lambda(t)||^2_{\frac{L\mu}{2\sigma_{\min}(\tilde{D}(t))}((W(t+1)(D-A))^{+})^2D_M}
	\nonumber \\& \leq& \frac{1}{2}||\hat{f}(t+1)-\hat{f}^*||^2_{G_1(t+1)} - \frac{1}{2}||\hat{f}(t)-\hat{f}^*||^2_{G_1(t+1)} +\frac{1}{2}||2\Lambda^*-2\Lambda(t)||^2_{G_2(t+1)}  \nonumber\\&-&\frac{1}{2}||2\Lambda^*-2\Lambda(t+1)||^2_{G_2(t+1)} 
	+\frac{1}{2}||\hat{f}(t)-\hat{f}^*||^2_{W(t+1)(D+A)}-\frac{1}{2}||\hat{f}(t+1)-\hat{f}^*||^2_{W(t+1)(D+A)}\nonumber \\
	&+&\big\langle \hat{f}(t+1)-\hat{f}^*,W(t+1)(D+A) \tilde{D}(t)^{-1} (W(t+1)-W(t))(D-A)(\hat{f}(t+1)-f^*) \big\rangle_F
	\end{eqnarray}	
	\hrulefill
	\begin{eqnarray}\label{eq:thmC18}
&&	\frac{1}{2}||\hat{f}(t)-\hat{f}(t+1)||^2_{R_1(t+1)}+  \frac{1}{2}||2\Lambda(t+1)-2\Lambda(t)||^2_{R_2(t+1)}
 \leq \frac{1}{2}||\hat{f}(t+1)-\hat{f}^*||^2_{G_1(t+1)} - \frac{1}{2}||\hat{f}(t)-\hat{f}^*||^2_{G_1(t+1)} \nonumber \\&+&\frac{1}{2}||2\Lambda^*-2\Lambda(t)||^2_{G_2(t+1)}-\frac{1}{2}||2\Lambda^*-2\Lambda(t+1)||^2_{G_2(t+1)} 
+\frac{1}{2}||\hat{f}(t)-\hat{f}^*||^2_{W(t+1)(D+A)}-\frac{1}{2}||\hat{f}(t+1)-\hat{f}^*||^2_{W(t+1)(D+A)}\nonumber \\&+&
\big\langle \hat{f}(t+1)-\hat{f}^*,W(t+1)(D+A) \tilde{D}(t)^{-1} (W(t+1)-W(t))(D-A)(\hat{f}(t+1)-f^*) \big\rangle_F
	\end{eqnarray}
		\hrulefill
\end{figure*}
\section{Proof of Theorem \ref{thmC1}}\label{App1}
By convexity of $O(f_i,D_i)$, $(f_i^1-{f}^2_i)^T(\nabla O(f_i^1,D_i)-\nabla O({f}^2_i,D_i)) \geq 0$ holds $\forall$ $f_i^1, {f}_i^2$. Let $\langle\cdot,\cdot\rangle_F$ be frobenius inner product of two matrices, there is: $$\langle \hat{f}(t+1)-\hat{f}^*,\nabla \hat{O}(\hat{f}(t+1),D_{all})-\nabla \hat{O}(\hat{f}^*,D_{all})\rangle_F \geq 0$$ According to \eqref{eq:c_2}\eqref{eq:c_4} and \eqref{eq:c_3}, substitute $\nabla\hat{O}(\hat{f}(t+1),D_{all})-\nabla \hat{O}(\hat{f}^*,D_{all})$ and add an extra term $W(t+1) (D+A)\tilde{D}(t)^{-1}(\nabla \hat{O}(\hat{f}^*,D_{all})+2\Lambda^*)=\textbf{0}_{N\times d}$, implies Eqn. \eqref{eq:thmC1}.


To simplify the notation, for a matrix $X$, let $||X||^2_{J} = \langle X, JX \rangle_F$ and $(X)^+$ be the pseudo inverse of $X$.  Define:
\begin{eqnarray}
G_1(t+1) &=&W(t+1) (D+A) \tilde{D}(t)^{-1}W(t)(D-A) ~;\nonumber \\
G_2(t+1) &=&(W(t+1)(D-A))^{+}\nonumber\\&&\cdot(I+W(t+1) (D+A) \tilde{D}(t)^{-1})~.\nonumber 
\end{eqnarray}

 Use \eqref{eq:c_3}\eqref{eq:c_5} and the fact that $\langle A, JB \rangle_F=\langle J^TA, B \rangle_F$, Eqn. \eqref{eq:thmC2}\eqref{eq:thmC3}\eqref{eq:thmC4} hold.
Let $\sqrt{X}$ denote the square root of a symmetric positive semi-definite (PSD) matrix $X$ that is also symmetric PSD. 
Eqn. \eqref{eq:thmC5} holds, 
where the inequality uses the facts that $O(f_i,D_i)$ is convex for all $i$ and that the matrix $W(t+1)(D+A) \tilde{D}(t)^{-1}$ is positive definite.

According to  \eqref{eq:assume1} in Assumption 3, 
 define the matrix $D_M = \textbf{diag}([M_1^2;M_2^2;\cdots;M_N^2])\in \mathbb{R}^{N \times N}$, it implies
$||\nabla \hat{O}(\hat{f}^1,D_{all}) -\nabla \hat{O}(\hat{f}^2,D_{all})||^2_F \leq \langle \hat{f}^1 - \hat{f}^2,D_M(\hat{f}^1 - \hat{f}^2) \rangle_F$. 
Since $\langle A, B \rangle_F\leq \frac{1}{L}||A||^2_F + \frac{L}{4}||B||_F^2$ holds for any $L>0$, there is:
\begin{eqnarray}\label{eq:thmC7}
&&\eqref{eq:thmC5} \nonumber \\&\leq& \frac{1}{L}||W(t+1)(D+A) \sqrt{\tilde{D}(t)^{-1}}(\hat{f}(t)-\hat{f}(t+1))||^2_F\nonumber \\ &+& \frac{L}{4}||\sqrt{\tilde{D}(t)^{-1}}(\nabla \hat{O}(\hat{f}(t),D_{all}) -\nabla \hat{O}(\hat{f}^*,D_{all}))||_F^2\nonumber \\
&\leq& \frac{1}{L}||(\hat{f}(t)-\hat{f}(t+1))||^2_{W(t+1) (D+A) \tilde{D}(t)^{-1}W(t+1) (D+A)} \nonumber \\&+& \frac{L\sigma_{\max}(\tilde{D}(t)^{-1})}{4} ||\nabla \hat{O}(\hat{f}(t),D_{all}) -\nabla \hat{O}(\hat{f}^*,D_{all})||_F^2\nonumber \\
&= &\frac{1}{L}||(\hat{f}(t)-\hat{f}(t+1))||^2_{W(t+1) (D+A) \tilde{D}(t)^{-1}W(t+1) (D+A)}\nonumber \\ &+& \frac{L}{4\sigma_{\min}(\tilde{D}(t))} ||\hat{f}^*-\hat{f}(t)||^2_{D_M}
\end{eqnarray}
where $\sigma_{\max}(\cdot)$, $\sigma_{\min}(\cdot)$ denote the largest and smallest singular value of a matrix respectively. Since for any $\mu > 1$ and any matrices $C_1$, $C_2$, $J$ with the same dimensions, there is $||C_1+C_2||^2_J \leq \mu||C_1||^2_J+ \frac{\mu}{\mu - 1}||C_2||^2_J$. which implies:
\begin{eqnarray}
||\hat{f}^*-\hat{f}(t)||^2_{D_M} = ||\hat{f}^*-\hat{f}(t+1)+\hat{f}(t+1)-\hat{f}(t)||^2_{D_M}
\nonumber \\\leq \mu||\hat{f}^*-\hat{f}(t+1)||^2_{D_M}+\frac{\mu}{\mu-1}||\hat{f}(t+1)-\hat{f}(t)||^2_{D_M}\nonumber
\end{eqnarray}
Plug into \eqref{eq:thmC7} and use \eqref{eq:c_3}\eqref{eq:c_5} gives Eqn. \eqref{eq:thmC8}.

Combine \eqref{eq:thmC2}\eqref{eq:thmC3}\eqref{eq:thmC4}\eqref{eq:thmC8}, \eqref{eq:thmC1} becomes Eqn. \eqref{eq:thmC9}.
Suppose the following two conditions hold for all $t$ under some constants $L>0$ and $\mu>1$:
\begin{eqnarray*}
&(i)&I+W(t+1)(D+A) \tilde{D}(t)^{-1}\nonumber\\&& \succ \frac{L\mu}{2\sigma_{\min}(\tilde{D}(t))}(W(t+1)(D-A))^{+}D_M ~;\label{eq:thmC14}\\
&(ii)&W(t+1)(D+A)\nonumber\\&&\succ W(t+1) (D+A) \tilde{D}(t)^{-1} \Big(W(t)(D-A)\nonumber \\&&+\frac{2}{L}W(t+1)(D+A)\Big) +\frac{L\mu}{2\sigma_{\min}(\tilde{D}(t))(\mu-1)}D_M ~.\label{eq:thmC15}
\end{eqnarray*}

Substitute $G_1(t+1)$ and $G_2(t+1)$, define $R_1(t+1)$ and $R_2(t+1)$ as \eqref{eq:thmC16}\eqref{eq:thmC17}. By conditions \textit{(i)(ii)}, both $R_1(t+1)$ and $R_2(t+1)$ are positive definite.
\begin{eqnarray}
R_1(t+1) &=& W(t+1)(D+A)-G_1(t+1)\nonumber \\&-&\frac{2}{L}W(t+1) (D+A) \tilde{D}(t)^{-1}W(t+1) (D+A)\nonumber \\&-&\frac{L\mu}{2\sigma_{\min}(\tilde{D}(t))(\mu-1)}D_M\succ \textbf{0}_{N\times N} ~;\label{eq:thmC16}\\
R_2(t+1) &=&- \frac{L\mu}{2\sigma_{\min}(\tilde{D}(t))}((W(t+1)(D-A))^{+})^2D_M  \nonumber \\&+&G_2(t+1)\succ \textbf{0}_{N\times N}~.\label{eq:thmC17}
\end{eqnarray}
Eqn. \eqref{eq:thmC9} becomes Eqn. \eqref{eq:thmC18}.

Since $W(t+1)$, $W(t)$ and $\tilde{D}(t)$ are all diagonal matrices of the same size, define new diagonal matrix $D^{new}_1(t+1)$ with $D^{new}_1(t+1)_{ii} = \frac{\eta_i(t+1)\eta_i(t)}{2\eta_i(t)V_i+\gamma}$, then $G_1(t+1)$ can be rewritten as:
$$G_1(t+1) = D^{new}_1(t+1)(D+A)(D-A).$$

Consider
\begin{eqnarray}
&&\frac{1}{2}||\hat{f}(t+1)-\hat{f}^*||^2_{G_1(t+1)} - \frac{1}{2}||\hat{f}(t)-\hat{f}^*||^2_{G_1(t+1)} \nonumber \\
&=& \frac{1}{2}||\hat{f}(t+1)-\hat{f}^*||^2_{G_1(t+1)} - \frac{1}{2}||\hat{f}(t)-\hat{f}^*||^2_{G_1(t)} \nonumber \\
&+&  \frac{1}{2}||\hat{f}(t)-\hat{f}^*||^2_{G_1(t)} 
-  \frac{1}{2}||\hat{f}(t)-\hat{f}^*||^2_{G_1(t+1)}  \nonumber 
\end{eqnarray}

If $\eta_i(t+1)\geq \eta_i(t)$, $\forall t,i$, then $D^{new}_1(t+1)_{ii} \geq D^{new}_1(t)_{ii} $. Therefore, $G_1(t+1)-G_1(t) \succeq 0$. Let $U_1 = \underset{i,t,k}{\text{sup}}|(f_i(t)-f_c^*)_k| \in \mathbb{R}$ be the finite upper bound over all components $k$, all nodes $i$ and all iterations $t$, then 
\begin{eqnarray}
&&\frac{1}{2}||\hat{f}(t)-\hat{f}^*||^2_{G_1(t)} 
-  \frac{1}{2}||\hat{f}(t)-\hat{f}^*||^2_{G_1(t+1)}  \nonumber\\
&=& \frac{1}{2}\text{Tr}((\hat{f}(t)-\hat{f}^*)^T(G_1(t)-G_1(t+1))(\hat{f}(t)-\hat{f}^*)) \nonumber\\
&\leq&  \frac{1}{2}U_1^2(||\textbf{1}_{N\times d}||^2_{G_1(t+1)}-||\textbf{1}_{N\times d}||^2_{G_1(t)} )\nonumber 
\end{eqnarray}
where $\textbf{1}_{N\times d}$ is the matrix of size $N$ by $d$ with 1 on all the entries.

Therefore, 
\begin{eqnarray}
&&\frac{1}{2}||\hat{f}(t+1)-\hat{f}^*||^2_{G_1(t+1)} - \frac{1}{2}||\hat{f}(t)-\hat{f}^*||^2_{G_1(t+1)} \nonumber \\
&\leq &\frac{1}{2}||\hat{f}(t+1)-\hat{f}^*||^2_{G_1(t+1)} - \frac{1}{2}||\hat{f}(t)-\hat{f}^*||^2_{G_1(t)} \nonumber \\
&+& \frac{1}{2}U_1^2(||\textbf{1}_{N\times d}||^2_{G_1(t+1)}-||\textbf{1}_{N\times d}||^2_{G_1(t)} ) \nonumber
\end{eqnarray}

Similarly, $(W(t+1)-W(t))(D+A) \succeq 0 $ holds if $\eta_i(t+1)\geq \eta_i(t)$, $\forall t,i$, and the following holds.
\begin{eqnarray*}
\frac{1}{2}||\hat{f}(t)-\hat{f}^*||^2_{W(t+1)(D+A)}-\frac{1}{2}||\hat{f}(t+1)-\hat{f}^*||^2_{W(t+1)(D+A)} \nonumber\\
\leq \frac{1}{2}||\hat{f}(t)-\hat{f}^*||^2_{W(t)(D+A)}-\frac{1}{2}||\hat{f}(t+1)-\hat{f}^*||^2_{W(t+1)(D+A)}\nonumber\\
+ \frac{1}{2} U_1^2(||\textbf{1}_{N\times d}||^2_{W(t+1)(D+A)}-||\textbf{1}_{N\times d}||^2_{W(t)(D+A)} )
\end{eqnarray*}

Similarly, if $\eta_i(t+1)\geq \eta_i(t)$, $\forall t,i$, $G_2(t)-G_2(t+1)\succeq 0$. Let $U_2 = \underset{i,t,k}{\text{sup}}|(\lambda_i(t)-\lambda_i^*)_k| \in \mathbb{R}$ be the finite upper bound over all components $k$, all nodes $i$ and all iterations $t$, there is:
\begin{eqnarray}
&&\frac{1}{2}||2\Lambda^*-2\Lambda(t)||^2_{G_2(t+1)}-\frac{1}{2}||2\Lambda^*-2\Lambda(t+1)||^2_{G_2(t+1)} \nonumber \\
&\leq& \frac{1}{2}||2\Lambda^*-2\Lambda(t)||^2_{G_2(t)}-\frac{1}{2}||2\Lambda^*-2\Lambda(t+1)||^2_{G_2(t+1)} \nonumber \\
&+&  \frac{1}{2}U_2^2(||\textbf{1}_{N\times d}||^2_{G_2(t)}-||\textbf{1}_{N\times d}||^2_{G_2(t+1)} ) \nonumber
\end{eqnarray}

If $\eta_i(t+1)\geq \eta_i(t)$, $\forall t,i$, let $\overline{\sigma}_{\max} = \underset{t}{\max} \sigma_{\max}(W(t+1)(D+A) \tilde{D}(t)^{-1} (D-A))$, then there is:
\begin{eqnarray}
\big\langle \hat{f}(t+1)-\hat{f}^*,W(t+1)(D+A) \tilde{D}(t)^{-1}\nonumber\\ \cdot(W(t+1)-W(t))(D-A)(\hat{f}(t+1)-f^*) \big\rangle_F\nonumber\\
\leq  \overline{\sigma}_{\max}U_1^2(||\textbf{1}_{N\times d}||^2_{W(t+1)}-||\textbf{1}_{N\times d}||^2_{W(t)} ) \nonumber
\end{eqnarray}

Sum up \eqref{eq:thmC18} over $t$ from $0$ to $+\infty$ leads to:
\begin{eqnarray}\label{eq:thmC19}
&&\sum_{t=0}^{\infty}\{||\hat{f}(t)-\hat{f}(t+1)||^2_{R_1(t+1)}\nonumber\\&&+  ||2\Lambda(t+1)-2\Lambda(t)||^2_{R_2(t+1)}\} \nonumber \\ &\leq& ||\hat{f}(0)-\hat{f}^*||^2_{W(0)(D+A)}-||\hat{f}(+\infty)-\hat{f}^*||^2_{W(+\infty)(D+A)}\nonumber \\ &+&||\hat{f}(+\infty)-\hat{f}^*||^2_{G_1(+\infty)} - ||\hat{f}(0)-\hat{f}^*||^2_{G_1(0)} \nonumber \\&+&||2\Lambda^*-2\Lambda(0)||^2_{G_2(0)}-||2\Lambda^*-2\Lambda(+\infty)||^2_{G_2(+\infty)} \nonumber \\
&+&U_1^2(||\textbf{1}_{N\times d}||^2_{G_1(+\infty)}-||\textbf{1}_{N\times d}||^2_{G_1(0)} )  \nonumber \\
&+&U_1^2(||\textbf{1}_{N\times d}||^2_{W(+\infty)(D+A)}-||\textbf{1}_{N\times d}||^2_{W(0)(D+A)} )  \nonumber \\
&+&U_2^2(||\textbf{1}_{N\times d}||^2_{G_2(0)}-||\textbf{1}_{N\times d}||^2_{G_2(+\infty)} )  \nonumber \\
&+&2\overline{\sigma}_{\max}U_1^2(||\textbf{1}_{N\times d}||^2_{W(+\infty)}-||\textbf{1}_{N\times d}||^2_{W(0)} ) 
\end{eqnarray}
The RHS of \eqref{eq:thmC19} is finite, implies that $\lim_{t\rightarrow\infty}\{||\hat{f}(t)-\hat{f}(t+1)||^2_{R_1(t+1)}+  ||2\Lambda(t+1)-2\Lambda(t)||^2_{R_2(t+1)}\} = 0$. Since $R_1(t+1)$, $R_2(t+1)$ are not unique, by \eqref{eq:thmC16}\eqref{eq:thmC17}, it requires $\lim_{t\rightarrow\infty}||\hat{f}(t)-\hat{f}(t+1)||^2_{R_1(t+1)}=0$ and $\lim_{t\rightarrow\infty}||2\Lambda(t+1)-2\Lambda(t)||^2_{R_2(t+1)} = 0$ should hold for all possible $R_1(t+1)$, $R_2(t+1)$. Therefore, $\lim_{t\rightarrow\infty}(\hat{f}(t)-\hat{f}(t+1))=\textbf{0}_{N\times d}$ and $\lim_{t\rightarrow\infty}(2\Lambda(t+1)-2\Lambda(t)) = \textbf{0}_{N\times d}$ should hold. $(\hat{f}(t),\Lambda(t))$ converges to the stationary point $(\hat{f}^s,\Lambda^s)$. Now show that the stationary point $(\hat{f}^s,\Lambda^s)$ is the optimal point $(\hat{f}^*,\Lambda^*)$.

Take the limit of both sides of \eqref{eq:c_2}\eqref{eq:c_3} yield: 
\begin{eqnarray}
(I+W(t+1) (D+A) \tilde{D}(t)^{-1})\nonumber\\\cdot(\nabla \hat{O}(\hat{f}^s,D_{all}) +2\Lambda^s)=\textbf{0}_{N\times d}~;\label{eq:thmC24}\\
(D-A)\hat{f}^s = \textbf{0}_{N\times d} \label{eq:thmC25}~.
\end{eqnarray}
Since $I+W(t+1) (D+A) \tilde{D}(t)^{-1}\succ \textbf{0}_{N\times N}$, to satisfy \eqref{eq:thmC24}, $\nabla \hat{O}(\hat{f}^s,D_{all}) +2\Lambda^s=\textbf{0}_{N\times d}$ must hold.

Compare with \eqref{eq:c_4}\eqref{eq:c_5} in Lemma \ref{lemmaP1} and observe that $(\hat{f}^s,\Lambda^s)$ satisfies the optimality condition and is thus the optimal point. Therefore, $(\hat{f}(t),\Lambda(t))$ converges to $(\hat{f}^*,\Lambda^*)$.
\section{Proof of Lemma \ref{lemmaP1}}\label{App_2}
Consider the private MR-ADMM up to $2k$-th iteration. In $(2k-1)$-th iteration, the primal variable is updated via \eqref{eq:P_modify_2}, By KKT condition:
\begin{eqnarray}\label{eq:lemma1}
 \nabla O(f_i(2k-1),D_i) + \epsilon_i(2k-1)=-2\lambda_i(2k-2)\nonumber \\  - \eta_i(2k-1) \sum_{j \in \mathscr{V}_i}(2f_i(2k-1)-f_i(2k-2)-f_j(2k-2))
\end{eqnarray}
Given $\{f_i(t)\}_{i=1}^N$ for $t\leq 2k-2$, $\{\lambda_i(2k-2)\}_{i=1}^N$ are also given. RHS of \eqref{eq:lemma1} can be calculated completely after releasing $\{f_i(k-1)\}_{i=1}^N$, i.e., the information of $\nabla O(f_i(2k-1),D_i) + \epsilon_i(2k-1)$ is completely released during $(2k-1)$-th iteration. Suppose the private MR-ADMM satisfies $\beta_{2k-1}$-differential privacy during $(2k-1)$ iterations, then in $(2k)$-th iterations, by \eqref{eq:P_modify_3}:
\begin{eqnarray}\label{eq:lemma2}
f_i(2k)=f_i(2k-1) - \frac{1}{2\eta V_i+\gamma}\{\nabla O(f_i(2k-1),D_i) \nonumber \\+\epsilon_i(2k-1) +2\lambda_i(2k-1)\nonumber\\+\eta_i(2k-1)\sum_{j\in \mathscr{V}_i}(f_i(2k-1)-f_j(2k-1))\}\nonumber
\end{eqnarray}
which is a deterministic mapping taking the outputs from $(2k-1)$-th iteration as input. Because the differential privacy is immune to post-processing \cite{dwork2014algorithmic}, releasing $\{f_i(2k)\}_{i=1}^N$ doesn't increase the privacy loss, i.e., the total privacy loss up to $(2k)$-th iteration can still be bounded by $\beta_{2k-1}$.  

\section{Proof of Theorem \ref{thmP}}\label{App_3}
Use the uppercase letters $X$ and lowercase letters $x$ to denote random variables and the corresponding realizations, and use $\mathscr{F}_{X}(\cdot)$ to denote its probability distribution.


For two neighboring datasets $D_{all}$ and $\hat{D}_{all}$ of the network, by Lemma \ref{lemmaP1}, the total privacy loss is only contributed by odd iterations. Thus, the ratio of joint probabilities (privacy loss) is given by:
\begin{eqnarray}\label{thmP1}
\frac{\mathscr{F}_{F(0:2K)}(\{f(r)\}_{r=0}^2K|D_{all})}{\mathscr{F}_{F(0:2K)}(\{f(r)\}_{r=0}^2K|\hat{D}_{all})} = \frac{\mathscr{F}_{F(0)}(f(0)|D_{all})}{\mathscr{F}_{F(0)}(f(0)|\hat{D}_{all})}  \nonumber\\ \cdot \prod^K_{k=1}\frac{\mathscr{F}_{F(2k-1)}(f(2k-1)|\{f(r)\}_{r=0}^{2k-2},D_{all})}{\mathscr{F}_{F(2k-1)}(f(2k-1)|\{f(r)\}_{r=0}^{2t-2},\hat{D}_{all})}
\end{eqnarray}
Since $f_i(0)$ is randomly selected for all $i$, which is independent of dataset, there is $\mathscr{F}_{F(0)}(f(0)|D_{all}) = \mathscr{F}_{F(0)}(f(0)|\hat{D}_{all})$. First only consider $(2k-1)$-th iteration, since the primal variable is updated according to \eqref{eq:P_modify_2}, by KKT optimality condition:
\begin{eqnarray}\label{thmP2}
\epsilon_i(2k-1)=-\nabla O(f_i(2k-1),D_i) -2\lambda_i(2k-2)\nonumber \\  - \eta_i(2k-1) \sum_{j \in \mathscr{V}_i}(2f_i(2k-1)-f_i(2k-2)-f_j(2k-2))
\end{eqnarray}
Given $\{f(r)\}_{r=0}^{2k-2}$, $F_i(2k-1)$ and $E_i(2k-1)$ will be bijective $\forall i$, there is:
\begin{eqnarray}\label{thmP3}
&&\frac{\mathscr{F}_{F(2k-1)}(f(2k-1)|\{f(r)\}_{r=0}^{2k-2},D_{all})}{\mathscr{F}_{F(2k-1)}(f(2k-1)|\{f(r)\}_{r=0}^{2k-2},\hat{D}_{all})}
\nonumber \\&=& \prod^{N}_{v=1}\frac{\mathscr{F}_{F_v(2k-1)}(f_v(2k-1)|\{f_v(r)\}_{r=0}^{2k-2},D_v)}{\mathscr{F}_{F_v(2k-1)}(f_v(2k-1)|\{f_v(r)\}_{r=0}^{2k-2},\hat{D}_v)}
\nonumber\\&=& \frac{\mathscr{F}_{F_i(2k-1)}(f_i(2k-1)|\{f_i(r)\}_{r=0}^{2k-2},D_i)}{\mathscr{F}_{F_i(2k-1)}(f_i(2k-1)|\{f_i(r)\}_{r=0}^{2k-2},\hat{D}_i)}
\end{eqnarray}
Since two neighboring datasets $D_{all}$ and $\hat{D}_{all}$ only have at most one data point that is different, the second equality holds is because of the fact that this different data point could only be possessed by one node, say node $i$. Then there is $D_j = \hat{D}_j$ for $j \neq i$.

Given $\{f(r)\}_{r=0}^{2k-2}$, let $g_{k}(\cdot,D_i): \mathbb{R}^d \rightarrow \mathbb{R}^d $ denote the one-to-one mapping from $E_i(2k-1)$ to $F_i(2k-1)$ using dataset $D_i$. 
By Jacobian transformation, there is 
$\mathscr{F}_{F_i(2k-1)}(f_i(2k-1)|D_i) = \mathscr{F}_{E_i(2k-1)}(g^{-1}_{k}(f_i(2k-1),D_i))\cdot|\det(\textbf{J}(g^{-1}_{k}(f_i(2k-1),D_i)))|$
, where $g^{-1}_{k}(f_i(2k-1),D_i)$ is the mapping from $F_i(2k-1)$ to $E_i(2k-1)$ using data $D_i$ as shown in \eqref{thmP2} and $\textbf{J}(g^{-1}_{k}(f_i(2k-1),D_i))$ is the Jacobian matrix of it. Then 
\eqref{thmP1} yields:
\begin{eqnarray}\label{thmP5}
&&\frac{\mathscr{F}_{F(0:2K)}(\{f(r)\}_{r=0}^{2K}|D_{all})}{\mathscr{F}_{F(0:2K)}(\{f(r)\}_{r=0}^{2K}|\hat{D}_{all})}\nonumber \\&=& \prod^{K}_{k=1}\frac{\mathscr{F}_{E_i(2k-1)}(g^{-1}_{k}(f_i(2k-1),D_i))}{\mathscr{F}_{E_i(2k-1)}(g^{-1}_{k}(f_i(2k-1),\hat{D}_i))}
\nonumber\\ &&\cdot \prod^{K}_{k=1} \frac{|\det(\textbf{J}(g^{-1}_{k}(f_i(2k-1),D_i)))|}{|\det(\textbf{J}(g^{-1}_{k}(f_i(2k-1),\hat{D}_i)))|}
\end{eqnarray}
Consider the first part, $E_i(2k-1) \sim \exp\{-\alpha_i(k)||\epsilon||\}$, let $\hat{\epsilon}_i(2k-1) = g^{-1}_{k}(f_i(2k-1),\hat{D}_i)$ and ${\epsilon}_i(2k-1) = g^{-1}_{k}(f_i(2k-1),D_i)$
\begin{eqnarray}\label{thmP6}
&&\prod^{K}_{k=1}\frac{\mathscr{F}_{E_i(2k-1)}(g^{-1}_{k}(f_i(2k-1),D_i))}{\mathscr{F}_{E_i(2k-1)}(g^{-1}_{k}(f_i(2k-1),\hat{D}_i))}
\nonumber \\&= &\prod^{K}_{k=1} \exp(\alpha_i(k)(||\hat{\epsilon}_i(2k-1)|| - ||\epsilon_i(2k-1)||))
\nonumber \\&\leq& \exp(\sum^{K}_{k=1}\alpha_i(k)||\hat{\epsilon}_i(2k-1) - \epsilon_i(2k-1)||)
\end{eqnarray}
Without loss of generality, let $D_i$ and $\hat{D}_i$ be only different in the first data point, say $(x_i^1,y_i^1)$ and $(\hat{x}_i^1,\hat{y}_i^1)$ respectively. 
By \eqref{thmP2}, Assumptions 4 and the facts that $||x_i^n||_2 \leq 1$ (pre-normalization), $y_i^n \in \{+1,-1\}$.
\begin{eqnarray}\label{thmP7}
&&||\hat{\epsilon}_i(2k-1) - \epsilon_i(2k-1)|| \nonumber \\&=&||\nabla O(f_i(2k-1),\hat{D}_i)-\nabla O(f_i(2k-1),D_i)||
\nonumber \\&\leq& \frac{2C}{B_i}
\end{eqnarray}

\eqref{thmP6} can be bounded:
\begin{equation}\label{thmP8}
\prod^{K}_{k=1}\frac{\mathscr{F}_{E_i(2k-1)}(g^{-1}_{k}(f_i(2k-1),D_i))}{\mathscr{F}_{E_i(2k-1)}(g^{-1}_{k}(f_i(2k-1),\hat{D}_i))}
 \leq \exp(\sum^{K}_{k=1}\frac{2C\alpha_i(k)}{B_i})
\end{equation}

Consider the second part, the Jacobian matrix $\textbf{J}(g^{-1}_{k}(f_i(2k-1),D_i))$ is:
\begin{eqnarray}\label{thmP10}
&&\textbf{J}(g^{-1}_{k}(f_i(2k-1),D_i))\nonumber \\ &=& -\frac{C}{B_i}\sum_{n=1}^{B_i}\mathscr{L}''(y_i^n f_i(2k-1)^T x_i^n)x_i^n(x_i^n)^T
\nonumber \\&&-\frac{\rho}{N}\nabla^2 R(f_i(2k-1)) - 2\eta_i(2k-1) V_i\textbf{I}_d\nonumber 
\end{eqnarray}

Define
\begin{eqnarray}
G(k) &=& \frac{C}{B_i}(\mathscr{L}''(\hat{y}_i^1 f_i(2k-1)^T \hat{x}_i^1)\hat{x}_i^1(\hat{x}_i^1)^T \nonumber \\&&- \mathscr{L}''(y_i^1 f_i(2k-1)^T x_i^1)x_i^1(x_i^1)^T)\nonumber ~;\\
H(k) &= &-\textbf{J}(g^{-1}_{k}(f_i(2k-1),D_i))~.\nonumber
\end{eqnarray}

There is:
\begin{eqnarray}\label{thmP11}
 &&\frac{|\det(\textbf{J}(g^{-1}_{k}(f_i(2k-1),D_i)))|}{|\det(\textbf{J}(g^{-1}_{k}(f_i(2k-1),\hat{D}_i)))|}
\nonumber \\&=& \frac{|\det(H(k))|}{|\det(H(k)+G(k))|}
 = \frac{1}{|\det(I + H(k)^{-1}G(k))|}
 \nonumber \\ &=& \frac{1}{|\prod_{j=1}^r(1+\lambda_j(H(k)^{-1}G(k)))|}
\end{eqnarray}

where $\lambda_j(H(k)^{-1}G(k))$ denotes the $j$-th largest eigenvalue of $H(k)^{-1}G(k)$. Since $G(k)$ has rank at most 2, $H(k)^{-1}G(k)$ also has rank at most 2. By Assumptions 4 and 5, the eigenvalue of $H(k)$ and $G(k)$ satisfy
\begin{eqnarray}\label{thmP12}
&&\lambda_j(H(k)) \geq \frac{\rho}{N} + 2\eta_i(2k-1) V_i > 0 ~;\nonumber\\&&
-\frac{Cc_1}{B_i} \leq \lambda_j(G(k)) \leq \frac{Cc_1}{B_i}~.\nonumber
\end{eqnarray}

Implies
\begin{eqnarray}\label{thmP14}
-\frac{c_1}{\frac{B_i}{C}(\frac{\rho}{N}+2\eta_i(2k-1) V_i)}&\leq& \lambda_{j}(H(k)^{-1}G(k))
\nonumber\\&\leq& \frac{c_1}{\frac{B_i}{C}(\frac{\rho}{N}+2\eta_i(2k-1) V_i)}~.\nonumber 
\end{eqnarray}

Since $2c_1 < \frac{B_i}{C}(\frac{\rho}{N}+2\eta_i(1) V_i)$ and $\eta_i(2k-1)\leq \eta_i(2k+1)$ for all $k$, $2c_1 < \frac{B_i}{C}(\frac{\rho}{N}+2\eta_i(2k-1) V_i)$ holds. It implies the following,
\begin{equation*}
-\frac{1}{2}\leq \lambda_{j}(H(k)^{-1}G(k))
\leq \frac{1}{2}.
\end{equation*}

Since $\lambda_{\min}(H(k)^{-1}G(k)) > -1$, there is
\begin{eqnarray}
\frac{1}{|1+\lambda_{\max}(H(k)^{-1}G(k))|^2} & \leq& \frac{1}{|\text{det}(I+H(k)^{-1}G(k))|}  \nonumber \\&\leq &\frac{1}{|1+\lambda_{\min}(H(k)^{-1}G(k))|^2}~. \nonumber
\end{eqnarray}
Therefore, 
\begin{eqnarray}\label{thmP15}
 &&\prod^{K}_{k=1}\frac{|\det(\textbf{J}(g^{-1}_{k}(f_i(2k-1),D_i)))|}{|\det(\textbf{J}(g^{-1}_{k}(f_i(2k-1),\hat{D}_i)))|}
\nonumber \\ &\leq &\prod^{K}_{k=1}\frac{1}{|1-\frac{c_1}{\frac{B_i}{C}(\frac{\rho}{N}+2\eta_i(2k-1) V_i)}|^2}
\nonumber\\ &=& \exp(-\sum_{k=1}^{K}2\ln(1-\frac{c_1}{\frac{B_i}{C}(\frac{\rho}{N}+2\eta_i(2k-1) V_i)}))~.
\end{eqnarray}

Since for any real number $x \in [0,0.5]$, $-\ln(1-x)<1.4x$. \eqref{thmP15} can be bounded with a simper expression:
\begin{eqnarray}\label{thmP16}
 &&\prod^{K}_{k=1}\frac{|\det(\textbf{J}(g^{-1}_{k}(f_i(2k-1),D_i)))|}{|\det(\textbf{J}(g^{-1}_{k}(f_i(2k-1),\hat{D}_i)))|}
\nonumber \\& \leq& \exp(\sum_{k=1}^{K}\frac{2.8c_1}{\frac{B_i}{C}(\frac{\rho}{N}+2\eta_i(2k-1) V_i)})~. 
\end{eqnarray}

Combine \eqref{thmP8}\eqref{thmP16}, \eqref{thmP5} can be bounded:
\begin{eqnarray}
&&\frac{\mathscr{F}_{F(0:2K)}(\{f(r)\}_{r=0}^{2K}|D_{all})}{\mathscr{F}_{F(0:2K)}(\{f(r)\}_{r=0}^{2K}|\hat{D}_{all})} \nonumber\\ &\leq&\exp(\sum_{k=1}^{K}\frac{2C}{B_i}(\frac{1.4c_1}{(\frac{\rho}{N}+2\eta_i(2k-1) V_i)} + \alpha_i(k)))~.
\end{eqnarray}

Therefore, the total privacy loss during $T$ iterations can be bounded by any $\beta$:
\begin{equation*}
 \beta \geq \underset{i \in \mathscr{N}}{\max}\{\sum_{k=1}^{K}\frac{2C}{B_i}(\frac{1.4c_1}{(\frac{\rho}{N}+2\eta_i(2k-1) V_i)} + \alpha_i(k))\}~.
\end{equation*}

\section{Proof of Theorem \ref{thm:sample}}\label{App_4}
Let $\widetilde{O}(f) = C\mathcal{L}(f) + \frac{\rho}{2N}||f||^2 $ and
$\widetilde{f}_i = \text{argmin}_f \widetilde{O}(f) $. Let $f_i^{opt} = \text{argmin}_f {O}(f,D_i) $ be node $i$'s classifier trained with its own data. 
\begin{eqnarray*}
	\mathcal{L}(f_c^*) &=& \mathcal{L}(f_{ref})  + (\frac{\widetilde{O}(f_c^*)}{C}  -\frac{\widetilde{O}(\widetilde{f}_i )}{C} )\\&+&  (\frac{\rho}{2NC}||f_{ref}||^2- \frac{\rho}{2NC}||f_{c}^*||^2) \\&+&( \frac{\widetilde{O}(\widetilde{f}_i )}{C} - \frac{\widetilde{O}(f_{ref})}{C} )
\end{eqnarray*}
By \cite{NIPS2008_3400}, 
$\widetilde{O}(f_c^*) - \widetilde{O}(\widetilde{f}_i ) \leq (1+a)(O(f_c^*,D_i)-O(f_i^{opt},D_i)) + \mathcal{O}(\frac{C^2N\log(1/\delta)}{\rho B_i})$ holds $\forall a>0$ with probability $1-\delta$, where $\mathcal{O}$ is big-$\mathcal{O}$ notation.

Since $f_c^*$ is the centralized classifier trained with samples from all nodes, we assume the difference of empirical loss under two classifiers $f_c^*$ and $f_i^{opt}$ is bounded by $\nu>0$,   
i.e., $O(f_c^*,D_i)- O(f_i^{opt},D_i)\leq  \frac{\rho}{2N}(||f_c^*||^2-||f_i^{opt}||^2) + C\nu$. Moreover, $\widetilde{O}(\widetilde{f}_i )\leq \widetilde{O}(f_{ref}) $.
\begin{eqnarray*}
	\mathcal{L}(f_c^*) &\leq& \mathcal{L}(f_{ref})  + \mathcal{O}(\frac{CN\log(1/\delta)}{\rho B_i})\\
	&+& (1+a)(\frac{\rho}{2NC}||f_c^*||^2 -\frac{\rho}{2NC}||f_i^{opt}||^2 +\nu)
	\\&+& (\frac{\rho}{2NC}||f_{ref}||^2- \frac{\rho}{2NC}||f_{c}^*||^2) 
\end{eqnarray*}

We assume $\nu$ is relatively small as compared to other terms.
If choosing $a>0$ to be a sufficient small number such that $a||f_c^*||^2 -(1+a)||f_i^{opt}||^2 \leq 0$ and choosing
$\rho $ such that $\frac{\rho}{2NC}||f_{ref}||^2\leq \frac{\tau-\Delta_i(k)}{2}$, e.g., $\rho \leq \frac{NC(\tau-\Delta_i(k))}{||f_{ref}||^2}$, and if $B_i$ also satisfies $\mathcal{O}(\frac{CN\log(1/\delta)}{\rho B_i})\leq  \frac{\tau-\Delta_i(k)}{2}$, i.e., $$B_i\geq w \max_k\{ \frac{CN\log(1/\delta)}{\rho (\tau - \Delta_i(k))}\}\geq w \max_k\{ \frac{||f_{ref}||^2\log(1/\delta)}{(\tau - \Delta_i(k))^2}\}$$ for some constant $w$, then the following holds with probability $1-\delta$.
\begin{eqnarray*}
	\mathcal{L}(f_c^*) &\leq& \mathcal{L}(f_{ref})  +\tau - \Delta_i(k)
\end{eqnarray*}
Since $\mathcal{L}(f_i^{non}(2k-1))\leq \mathcal{L}(f_c^*) + \Delta_i(k)$, it implies that $\mathcal{L}(f_i^{non}(2k-1))\leq  \mathcal{L}(f_{ref})+\tau$ holds with probability $1-\delta$.

	\section{proof of Theorem \ref{thm:sample_priv}}\label{App_5}
Let $\widetilde{O}(f) = C\mathcal{L}(f) + \frac{\rho}{2N}||f||^2 $ and
$\widetilde{f}_i = \text{argmin}_f \widetilde{O}(f) $. Let $f_i^{opt} = \text{argmin}_f {O}(f,D_i) $ be node $i$'s classifier trained with its own data. Let $f_i^{privOpt} = \text{argmin}_f{O}^{priv}(f,D_i;\epsilon) = O(f,D_i) + \epsilon^Tf$ and $\widetilde{O}^{priv}(f;\epsilon) = \widetilde{O}(f) + \epsilon^Tf$. 

\begin{eqnarray*}
	\mathcal{L}(f_{new}^*) &=& \mathcal{L}(f_{ref})  + (\frac{\widetilde{O}(f_{new}^*)}{C}  -\frac{\widetilde{O}(f_{i}^{privOpt})}{C} )\\
	&+&(\frac{\widetilde{O}(f_{i}^{privOpt})}{C}  -\frac{\widetilde{O}(\widetilde{f}_i )}{C} )
	\\&+&  (\frac{\rho}{2NC}||f_{ref}||^2- \frac{\rho}{2NC}||f_{new}^*||^2) \\&+&( \frac{\widetilde{O}(\widetilde{f}_i )}{C} - \frac{\widetilde{O}(f_{ref})}{C} )
\end{eqnarray*}

For the new optimization problem, $f_{new}^*$ is centralized classifier trained with samples from all nodes while $f_{i}^{privOpt}$ is the classifier trained with samples from node $i$. We assume the difference of empirical loss under two classifiers $f_{new}^*$ and  $f_{i}^{privOpt}$ can be bounded by $\nu>0$, i.e., $\widetilde{O}(f_{new}^*)  -\widetilde{O}(f_{i}^{privOpt}) \leq \frac{\rho}{2N}(||f_{new}^*||^2-||f_i^{privOpt}||^2) + C\nu$.

By \cite{NIPS2008_3400}, $\widetilde{O}(f_{i}^{privOpt}) - \widetilde{O}(\widetilde{f}_{i}) \leq (1+a)( O(f_{i}^{privOpt},D_i) - O(f_{i}^{opt},D_i)) + \mathcal{O}( \frac{C^2N\log(1/\delta)}{\rho B_i})$ holds $\forall a>0$ with probability $1-\delta$.  By Lemma \ref{lemma:sample0}, $O(f_{i}^{privOpt},D_i)- O(f_{i}^{opt},D_i)\leq \frac{Nd^2}{\rho(\alpha_i(k))^2}(\log(d/\delta))^2$ holds with probability $1-\delta$. Therefore, $\widetilde{O}(f_{i}^{privOpt}) - \widetilde{O}(\widetilde{f}_{i}) \leq (1+a)( \frac{Nd^2}{\rho(\alpha_i(k))^2}(\log(d/\delta))^2) + \mathcal{O}( \frac{C^2N\log(1/\delta)}{\rho B_i})$ holds $\forall a>0$ with probability $1-2\delta$. 

Since $\widetilde{f}_i = \text{argmin}_f \widetilde{O}(f) $, implying $\widetilde{O}(\widetilde{f}_i ) \leq \widetilde{O}(f_{ref})$. The following holds $\forall a>0$ with probability $1-2\delta$, 

\begin{eqnarray*}
	\mathcal{L}(f_{new}^*) &\leq& \mathcal{L}(f_{ref})  + \nu+ \mathcal{O}( \frac{CN\log(1/\delta)}{\rho B_i})\\
	&+&(1+a) \frac{Nd^2}{C\rho(\alpha_i(k))^2}(\log(d/\delta))^2
	\\&+&  (\frac{\rho}{2NC}||f_{ref}||^2- \frac{\rho}{2NC}||f_i^{privOpt}||^2) 
\end{eqnarray*}

We assume $\nu$ is relatively small as compared to other terms.
If choosing $\rho$ such that $\frac{\rho}{2NC}||f_{ref}||^2 \leq \frac{1}{2}(\tau - \Delta_i^{new}(k))$, i.e., $\rho \leq \frac{NC(\tau - \Delta_i^{new}(k))}{||f_{ref}||^2}$, and if $B_i$ also satisfies $((1+a) \frac{Nd^2}{C(\alpha_i(k))^2}(\log(d/\delta))^2+ \mathcal{O}( \frac{CN\log(1/\delta)}{ B_i}))\leq \frac{\rho(\tau-\Delta_i^{new}(k))}{2}$, i.e., $B_i\geq w\frac{CN\log(1/\delta)}{\frac{\rho(\tau-\Delta_i^{new}(k))}{2}-(1+a) \frac{Nd^2}{C(\alpha_i(k))^2}(\log(d/\delta))^2}$ for some $a>0$ and constant $w$. Then  $\mathcal{L}(f_{new}^*) \leq \mathcal{L}(f_{ref}) + \tau -\Delta_i^{new}(k)$ holds with probability $1-2\delta$. Plug in $\rho =\frac{NC(\tau - \Delta_i^{new}(k))}{||f_{ref}||^2}$ and re-organize gives:
\begin{eqnarray*}
	B_i\geq w\max_k\{\frac{CN\log(1/\delta)}{\frac{NC(\tau-\Delta_i^{new}(k))^2}{2||f_{ref}||^2}-(1+a) \frac{Nd^2}{C(\alpha_i(k))^2}(\log(d/\delta))^2}\}
\end{eqnarray*}
Since $\mathcal{L}(f_i^{new}(2k-1))\leq \mathcal{L}(f_{new}^*) + \Delta_i^{new}(k)$, it implies that $\mathcal{L}(f_i^{new}(2k-1)) \leq \mathcal{L}(f_{ref}) + \tau$ holds with probability $1-2\delta$.

\begin{lemma}\label{lemma:sample0}
	Let $f_{i}^{privOpt} = \text{argmin}_f O(f,D_i)+\epsilon^Tf$ and $f_{i}^{opt} = \text{argmin}_f O(f,D_i)$ be outputs at iteration $2k-1$, then $O(f_{i}^{privOpt},D_i)- O(f_{i}^{opt},D_i)\leq  \frac{Nd^2}{\rho(\alpha_i(k))^2}(\log(d/\delta))^2$ holds with probability $1-\delta$. 
\end{lemma}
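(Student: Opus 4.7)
The plan is to bound $O(f_i^{privOpt},D_i) - O(f_i^{opt},D_i)$ in two stages: first relate it to $\|\epsilon\|^2$ deterministically using strong convexity, and then apply a tail bound on $\|\epsilon\|$ coming from its gamma-type density.

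First, I will exploit optimality. Since $f_i^{privOpt}$ minimizes $O(f,D_i) + \epsilon^T f$, we have
\begin{equation*}
O(f_i^{privOpt},D_i) + \epsilon^T f_i^{privOpt} \leq O(f_i^{opt},D_i) + \epsilon^T f_i^{opt},
\end{equation*}
which rearranges to $O(f_i^{privOpt},D_i) - O(f_i^{opt},D_i) \leq \epsilon^T(f_i^{opt} - f_i^{privOpt}) \leq \|\epsilon\| \cdot \|f_i^{privOpt} - f_i^{opt}\|$. To control the displacement, I will use the fact that $R$ is $1$-strongly convex (Assumption 5) and the loss term is convex (Assumption 4), so $O(\cdot,D_i)$ is $\tfrac{\rho}{N}$-strongly convex. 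Combining this with the first-order conditions $\nabla O(f_i^{opt},D_i)=0$ and $\nabla O(f_i^{privOpt},D_i) = -\epsilon$, monotonicity of the gradient gives
\begin{equation*}
\tfrac{\rho}{N}\|f_i^{privOpt} - f_i^{opt}\|^2 \leq \bigl(\nabla O(f_i^{privOpt}) - \nabla O(f_i^{opt})\bigr)^T(f_i^{privOpt}-f_i^{opt}) \leq \|\epsilon\|\cdot\|f_i^{privOpt}-f_i^{opt}\|,
\end{equation*}
so $\|f_i^{privOpt}-f_i^{opt}\| \leq N\|\epsilon\|/\rho$. Chaining the two inequalities yields the deterministic bound $O(f_i^{privOpt},D_i) - O(f_i^{opt},D_i) \leq N\|\epsilon\|^2/\rho$.

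Second, I need a high-probability bound on $\|\epsilon\|$. By construction, $\epsilon$ has density proportional to $\exp(-\alpha_i(k)\|\epsilon\|)$, so its norm $\|\epsilon\|$ is a gamma random variable with shape $d$ and scale $1/\alpha_i(k)$. Using the standard tail bound for such a gamma variable (e.g., as in Lemma 17 of \cite{chaudhuri2011}), one gets $\Pr\bigl(\|\epsilon\| \geq \tfrac{d}{\alpha_i(k)}\log(d/\delta)\bigr) \leq \delta$, hence $\|\epsilon\|^2 \leq \tfrac{d^2}{(\alpha_i(k))^2}(\log(d/\delta))^2$ with probability at least $1-\delta$. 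Substituting into the previous bound produces $O(f_i^{privOpt},D_i) - O(f_i^{opt},D_i) \leq \tfrac{Nd^2}{\rho(\alpha_i(k))^2}(\log(d/\delta))^2$, as claimed.

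The only non-trivial step is the gamma tail bound, which is a standard concentration result; the deterministic part is a textbook application of strong convexity, and the argument does not rely on any distributed structure (node $i$ is treated in isolation) so no coupling across iterations is needed. Everything else is just arithmetic.
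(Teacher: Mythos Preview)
Your proof is correct and follows essentially the same route as the paper: the paper also uses the optimality inequality $O(f_i^{privOpt},D_i)\le O(f_i^{opt},D_i)+\epsilon^T(f_i^{opt}-f_i^{privOpt})$, the $\rho/N$-strong convexity to get $\|f_i^{opt}-f_i^{privOpt}\|\le \tfrac{N}{\rho}\|\epsilon\|$, and the gamma tail bound $\|\epsilon\|\le \tfrac{d}{\alpha_i(k)}\log(d/\delta)$ from \cite{chaudhuri2011}. The only cosmetic difference is that the paper invokes the displacement bound as a cited lemma, whereas you derive it inline from the first-order conditions.
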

\begin{proof}
	There is $O(f_{i}^{privOpt},D_i) \leq O(f_{i}^{opt},D_i) + \epsilon^T(f_i^{opt}-f_i^{privOpt})$. By Lemma \ref{lemma:sample1}, since $O(f,D_i)$ and $O(f,D_i)+\epsilon^Tf$ are $\frac{\rho}{N}$-strongly convex, $||f_i^{opt}-f_i^{privOpt}||\leq \frac{N}{\rho}||\epsilon||$ holds. By Lemma \ref{lemma:sample2}, with probability $1-\delta$, $||\epsilon||\leq \frac{d}{\alpha_i(k)}\log(d/\delta)$. Therefore, $O(f_{i}^{privOpt},D_i)- O(f_{i}^{opt},D_i)\leq ||\epsilon|| ||f_i^{opt}-f_i^{privOpt} ||\leq  \frac{Nd^2}{\rho(\alpha_i(k))^2}(\log(d/\delta))^2$ holds with probability $1-\delta$. 
\end{proof}

\begin{lemma}\label{lemma:sample1}
	\cite{chaudhuri2011} Let $G(f)$, $g(f)$ be two vector-valued functions, which are continuous and differentiable at all points. Moreover, let $G(f)$ and $G(f)+g(f)$ be $\lambda$-strongly convex. If $f_1 = \text{argmin}_f G(f)$ and $f_2 = \text{argmin}_f G(f)+g(f)$, then $||f_1-f_2||\leq \frac{1}{\lambda}\max_f ||\nabla g(f)||$.
\end{lemma}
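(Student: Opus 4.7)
The plan is to use first-order optimality conditions together with the defining inequality of strong convexity. Specifically, I would invoke the standard characterization of $\lambda$-strong convexity of $G+g$ in terms of its gradient, namely
\[
\bigl(\nabla(G+g)(f_1)-\nabla(G+g)(f_2)\bigr)^{T}(f_1-f_2)\;\geq\;\lambda\,\|f_1-f_2\|^{2}.
\]
Since $f_2$ minimizes $G+g$, we have $\nabla(G+g)(f_2)=0$, so the left-hand side reduces to $\bigl(\nabla G(f_1)+\nabla g(f_1)\bigr)^{T}(f_1-f_2)$. Since $f_1$ minimizes $G$, we also have $\nabla G(f_1)=0$, leaving only $\nabla g(f_1)^{T}(f_1-f_2)\geq \lambda\|f_1-f_2\|^{2}$.

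Next I would apply the Cauchy--Schwarz inequality to the left-hand side to obtain $\|\nabla g(f_1)\|\cdot\|f_1-f_2\|\geq \lambda\|f_1-f_2\|^{2}$. If $f_1=f_2$ the bound is trivial; otherwise, dividing through by $\|f_1-f_2\|$ yields $\|f_1-f_2\|\leq \tfrac{1}{\lambda}\|\nabla g(f_1)\|$. Bounding $\|\nabla g(f_1)\|$ by $\sup_{f}\|\nabla g(f)\|$ gives the claimed inequality.

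The main subtlety is justifying the strong-convexity characterization in gradient form; I would either cite it as a standard fact about $C^{1}$ strongly convex functions or derive it briefly from the definition $G(f')\geq G(f)+\nabla G(f)^{T}(f'-f)+\tfrac{\lambda}{2}\|f'-f\|^{2}$ (applied twice with the roles of $f,f'$ swapped and then summed). No technical obstacle beyond this is expected, since differentiability is assumed and the argmins exist uniquely by strong convexity of the relevant objectives.
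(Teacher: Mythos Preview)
The paper does not supply its own proof of this lemma; it simply quotes the result from \cite{chaudhuri2011}. Your argument is correct and is essentially the standard proof appearing in that reference (first-order optimality at $f_1$ and $f_2$, the gradient monotonicity form of $\lambda$-strong convexity, then Cauchy--Schwarz), so there is nothing to add.
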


\begin{lemma}\label{lemma:sample2}
	\cite{chaudhuri2011} Let X be a random variable drawn from distribution $\Gamma(k,\theta)$, where $k$ is an integer, then $Pr(X<k\theta\log(k/\delta))\geq 1-\delta$.
\end{lemma}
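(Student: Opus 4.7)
The plan is to prove this Gamma-tail bound by decomposing $X$ into i.i.d.\ exponentials and applying a pigeonhole-plus-union-bound argument, which is cleaner than manipulating the incomplete-gamma CDF directly. Since $k$ is a positive integer, the standard fact is that $X\sim\Gamma(k,\theta)$ admits the representation $X=\sum_{i=1}^{k}X_i$, where $X_1,\dots,X_k$ are i.i.d.\ exponential random variables with rate $1/\theta$, i.e.\ $\Pr(X_i\geq t)=e^{-t/\theta}$ for $t\geq 0$. I would record this representation as the first step and then translate the target event to its complement: it suffices to show $\Pr(X\geq k\theta\log(k/\delta))\leq \delta$.

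Next I would use the pigeonhole observation that drives the whole argument. If every summand satisfied $X_i<\theta\log(k/\delta)$, then $X=\sum_i X_i<k\theta\log(k/\delta)$. Contrapositively,
\[
\{X\geq k\theta\log(k/\delta)\}\;\subseteq\;\bigcup_{i=1}^{k}\{X_i\geq \theta\log(k/\delta)\}.
\]
A union bound together with the i.i.d.\ assumption then gives
\[
\Pr\bigl(X\geq k\theta\log(k/\delta)\bigr)\leq k\cdot \Pr\bigl(X_1\geq \theta\log(k/\delta)\bigr).
\]

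Finally I would plug in the exponential tail: $\Pr(X_1\geq \theta\log(k/\delta))=e^{-\log(k/\delta)}=\delta/k$. Multiplying by $k$ gives $\Pr(X\geq k\theta\log(k/\delta))\leq \delta$, hence $\Pr(X<k\theta\log(k/\delta))\geq 1-\delta$, as required.

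There is no substantive obstacle here; the only thing to be careful about is the integer-shape hypothesis, which is what permits the exponential-sum decomposition (for non-integer $k$ one would have to work directly with the incomplete gamma function or use a Chernoff-type bound). The argument is tight up to constants in the sense that each summand contributes an $O(\log(k/\delta))$ quantile, and paying a factor of $k$ in the union bound is exactly cancelled by the $\delta/k$ per-coordinate failure probability.
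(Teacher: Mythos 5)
Your proof is correct: the Erlang decomposition of a $\Gamma(k,\theta)$ variable with integer shape into $k$ i.i.d.\ exponentials, followed by the pigeonhole inclusion and a union bound with the tail $\Pr(X_1\geq \theta\log(k/\delta))=\delta/k$, gives exactly the claimed bound. The paper itself offers no proof of this lemma (it is imported verbatim from the cited reference), and your argument is the standard one used there, so there is nothing to reconcile.
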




\ifCLASSOPTIONcaptionsoff
  \newpage
\fi



%

\end{document}